\documentclass{article}

\usepackage{reveal_preprint}
\usepackage{iftex}
\ifPDFTeX
  \usepackage{times}
\else
  \usepackage{fontspec}
\fi
\usepackage{amsmath,amssymb,amsthm,mathtools}
\usepackage{booktabs}
\usepackage{array,tabularx}
\usepackage{graphicx}
\usepackage{microtype}
\usepackage{xcolor}
\usepackage{hyperref}
\usepackage{url}
\usepackage{xurl}
\usepackage[nameinlink,noabbrev]{cleveref}
\hypersetup{hidelinks}

\newtheorem{theorem}{Theorem}
\newtheorem{lemma}{Lemma}
\newtheorem{proposition}{Proposition}
\newtheorem{corollary}{Corollary}

\newcommand{\TV}{\operatorname{TV}}
\newcommand{\KL}{D_{\mathrm{KL}}}
\newcommand{\Ber}{\operatorname{Ber}}
\newcommand{\E}{\mathbb E}
\newcommand{\Pp}{\mathbb P}
\newcommand{\one}{\mathbf 1}
\newcommand{\cA}{C_A}
\newcommand{\cH}{C_H}
\newcommand{\cF}{C_F}
\newcolumntype{Y}{>{\raggedright\arraybackslash}X}
\newcolumntype{P}[1]{>{\raggedright\arraybackslash}p{#1}}

\title{Revealing After Overwriting:\\An Exponential POMDP OPE Lower Bound\\under History-Dependent Logging}
\author{Youyu Luo\textsuperscript{1} \quad Pengzhan Zhou\textsuperscript{1,*} \quad Zhida Qin\textsuperscript{2} \quad Jia Wang\textsuperscript{1}\\
\bfseries Zuotao Fu\textsuperscript{1} \quad Yu Liu\textsuperscript{3} \quad Chao Chen\textsuperscript{1}\\[3pt]
\normalfont\small \textsuperscript{1}Chongqing University \quad \textsuperscript{2}Beijing Institute of Technology\\
\normalfont\small \textsuperscript{3}The Hong Kong Polytechnic University\\
\normalfont\small \textsuperscript{*}Corresponding author: \href{mailto:pzzhou@cqu.edu.cn}{pzzhou@cqu.edu.cn}}
\hypersetup{
  pdftitle={Revealing After Overwriting: An Exponential POMDP OPE Lower Bound under History-Dependent Logging},
  pdfauthor={Youyu Luo, Pengzhan Zhou, Zhida Qin, Jia Wang, Zuotao Fu, Yu Liu, Chao Chen}
}

\begin{document}
\maketitle

\begin{abstract}
Multi-step revealing can make off-policy evaluation tractable under
memoryless logging. With history-dependent logging, state decodability and
target-relevant evidence can separate. For every horizon $H\ge3$, we construct
two exactly realizable POMDPs with four actions, at most four states per layer,
a known logger, and a memoryless target. Action overlap, history coverage,
and observation-only revealing remain bounded independently of $H$, yet
the target values differ by $1/2$ and the KL divergence between the logged laws
is $\Theta(4^{-(H-1)})$, forcing exponential sample complexity. Logger memory
makes states distinguishable, while reset erases the model-distinguishing
evidence preserved by the target. A separate construction retains this barrier with common, known
observation-only revealing operators. Under action and history coverage,
we give a finite-class OPE guarantee using common observable value representations
that remain valid at every history. The sample bound depends polynomially on
their second-moment cost. In the common-operator construction, the same value direction has constant
marginal decoding cost but exponential history-conditioned cost. Finally,
on a fixed four-action continuum, we derive matching passive and budgeted
readout rates. With one known channel and unit read cost, early reads are optimal.
With unknown sensor bias, early reads alone remain exponentially costly.
Combining them with post-reset calibration gives sample complexity independent
of $H$ when both read types receive fixed positive expected budgets per trajectory.
\end{abstract}

\section{Introduction}

Off-policy evaluation (OPE) asks for the value of a target policy using data
generated by another policy.  In a fully observed Markov decision process,
coverage expresses whether the log contains the state--action evidence needed
by the target.  Partial observability makes the question two-sided: past
histories must span latent directions, while future outcomes must reveal them.
This past--future view underlies observable-operator and predictive-state
approaches to controlled processes \citep{littman2001predictive,hamilton2014compressed}.

\citet[Table~1, p.~2]{zhang2025tractability} organize finite-horizon POMDP OPE by behavior
memory and by single- versus multi-step revealing.  Their model-based theory
covers memoryless logging, but leaves the cell with a history-dependent
behavior policy and multi-step revealing open, conjecturing it intractable.
The open cell looks favorable one assumption at a time: actions may have full
support, behavior histories may cover every latent direction, and the
behavior-generated future may decode the current state with a bounded inverse.

The failure can be seen along one trajectory. The models differ only in a
latent bit created by a preserve action at step $1$, which the target keeps
until terminal reward. The logger preserves with probability $1/4$ per
preterminal step.
Its action-history memory is correlated with the bit and biases the reset
command, making the next observation informative about the pre-reset state.
Yet the reset erases the model difference: subsequent observable laws are
identical across models. Revealing certifies state decodability within a
model, not preservation of evidence distinguishing models. That evidence
survives only on the all-preserve event, of probability $4^{-(H-1)}$.

\newpage
\begin{table*}[t]
\centering
\small
\caption{Relationship to \citet[Thms.~5 and 7]{zhang2025tractability}.
Exact realizability means $M^\star\in\mathcal M$; multi-step and single-step
refer to their Assumptions D and E. Section and theorem numbers in the
Zhang--Jiang rows refer to that paper.}
\label{tab:comparison}
\renewcommand{\arraystretch}{1.04}
\begin{tabularx}{\textwidth}{@{}P{0.19\textwidth}P{0.18\textwidth}P{0.27\textwidth}Y@{}}
\toprule
Result and realizability & Policies $\pi_b/\pi_e$ & Revealing status & Statistical statement \\
\midrule
Zhang--Jiang Thm.~5 (exact)
& Memoryless / history-dependent allowed
& Multi-step: $M^\star$ and retained candidates
& Filtered MLE: polynomial upper bound \\
\addlinespace[1pt]
Zhang--Jiang Thm.~7 (observable-equivalent)
& Uniform memoryless / always $R$
& Single-step: $M^\star$ yes; candidates no
& Sec.~4 MLE: no Thm.~4 guarantee, with or without filtering \\
\addlinespace[2pt]
\textbf{This work} (exact)
& Known history-dependent / memoryless
& Multi-step: both candidates, $O(1)$ in $H$; also observation-only
& \textbf{Any estimator:} $\Omega(4^{H-1})$ trajectories at error $1/8$ \\
\bottomrule
\end{tabularx}
\end{table*}

\begin{figure*}[t]
\centering
\includegraphics[width=\textwidth]{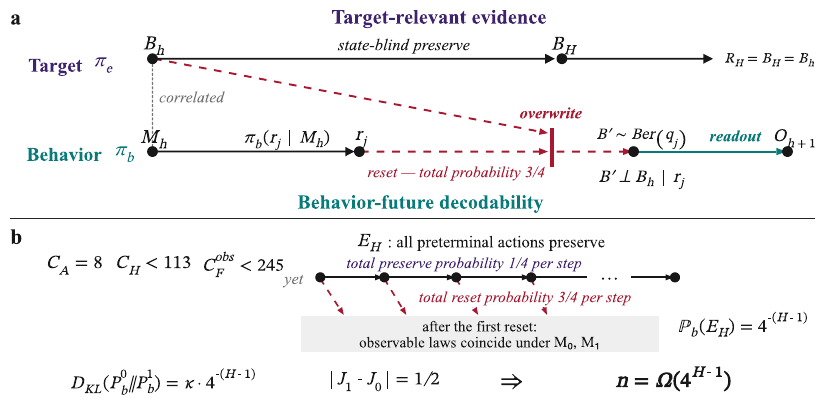}
\caption{\textbf{Overwrite--reveal mismatch.}
\textbf{a,} The logger observes $M_h$, not $B_h$.  On reset (total probability
$3/4$), memory biases $r_j$ and the environment overwrites $B_h$ by $B'$.
This channel keeps future observations informative about the pre-reset state.
The model difference appears in the log only on the all-preserve event.
The state-blind target preserves $B_h$ until the terminal reward.
\textbf{b,} The three coefficients are bounded uniformly in $H$, yet the
behavior-law KL is $\kappa\,4^{-(H-1)}$ for fixed $\kappa>0$ and
$|J_1-J_0|=1/2$. Uniform $1/8$-accurate estimation with success probability
$2/3$ requires $n=\Omega(4^{H-1})$ trajectories; here $J_m=J_{\mathsf{M}_m}(\pi_e)$.}
\label{fig:mechanism}
\end{figure*}

Theorem~\ref{thm:main} gives an exponential lower bound for every estimator,
even with exact realizability and constant action overlap, history coverage,
and observation-only revealing. On a fixed continuous family with the same
remaining kernels, the matching minimax rate is
$\Theta(4^{H-1}\varepsilon^{-2})$.

Knowing the marginal decoder is not enough: a separate eight-action
construction has common, known observation-only operators and the same
exponential barrier. For finite model classes under action and history coverage,
we identify a sufficient condition. Common observable representations of candidate
target values support OPE when they remain correct after conditioning on
each history and have bounded second moments. In the common-operator pair,
the same target continuation value can be decoded at constant marginal cost but requires
exponential cost under history conditioning. This locates the gap in the
value direction itself, without requiring full-state reconstruction.

Another route is to record evidence before overwrite. On the four-action
continuum, we derive the budget--noise tradeoff for one optional read per
trajectory, including matching bounds for adaptive rules. With one fixed
known channel and unit cost, an early-read rule can simulate the complete
record of any delayed rule at the same expected cost.

A pre-reset read is not by itself enough when the sensor has an unknown
bias. Early reads then confound the seed probability with that bias, and the
exponential barrier remains. A read after reset cannot recover the old bit,
but it can calibrate the sensor because the reset distribution is known.
We prove a matching budget law for combining the two read types: post-reset
data make the recorded target evidence usable.

\section{Setting and revealing assumptions}

We first consider a finite-horizon partially observable Markov decision
process (POMDP) from a finite realizable class
$\mathcal M$.  Write $[H]=\{1,\ldots,H\}$,
$\tau_h=(O_1,A_1,\ldots,O_h,A_h)$, and $\tau_0=\varnothing$.  A complete
logged trajectory is $(O_1,R_1,A_1,\ldots,O_H,R_H,A_H)$.  Let
$P_b^M$ denote its law under model $M$ and the known behavior policy $\pi_b$,
and let $J_M(\pi)$ denote the value of policy $\pi$ in $M$.  The behavior
policy may depend on $(\tau_{h-1},O_h)$, while OPE estimates $J_M(\pi_e)$ from
independent trajectories generated by $\pi_b$.  We use Assumptions A, C, and
D of \citet[pp.~3--4]{zhang2025tractability}.

Assumption A requires known propensities and
\begin{equation}
 \cA:=\max_{h\in[H],\tau_{h-1},o_h,a_h}
 \pi_b(a_h\mid\tau_{h-1},o_h)^{-1}<\infty.
 \label{eq:action-overlap}
\end{equation}
Let $b_S(\tau_{h-1})=\Pp(S_h\mid\tau_{h-1})$ be the pre-observation belief.
The history-coverage Gram is
\begin{equation}
  \Sigma_{H,h}=\E_{\pi_b}
  [b_S(\tau_{h-1})b_S(\tau_{h-1})^\top].
  \label{eq:history-gram}
\end{equation}
The behavior future is
$F_h=(O_h,A_h,\ldots,O_{H-1},A_{H-1},O_H)$. Define its matrix by
$[U_{F,h}]_{f,s}=\Pp^{\pi_b}(F_h=f\mid S_h=s)$. The revealing Gram is
\begin{equation}
 \Sigma_{F,h}=U_{F,h}^\top
 \operatorname{diag}(U_{F,h}\one)^{-1}U_{F,h}.
 \label{eq:future-gram}
\end{equation}
All matrices are evaluated modelwise.  The coverage and revealing
coefficients are
\begin{equation}
 \cH:=\max_{h\in[H-1]}\lambda_{\min}(\Sigma_{H,h})^{-1},
 \qquad
 \cF:=\max_{h\in[H-1]}\|\Sigma_{F,h}^{-1}\|_1.
 \label{eq:source-coefficients}
\end{equation}
Here and below, $\|\cdot\|_1$ is the induced matrix $1$-norm.
Diagonal inverses are taken on positive outcome support, equivalently after
omitting zero rows.  For the state-prior-weighted alternative in Appendix
G.2 of \citet{zhang2025tractability}, let
$p_h(s)=\Pp^{\pi_b}(S_h=s)$ be the behavior-state marginal and set
$D_{p_h}=\operatorname{diag}(p_h)$. Define
\begin{equation}
 \Sigma^{p_h}_{F,h}
 =D_{p_h}U_{F,h}^{\top}
 \operatorname{diag}(U_{F,h}p_h)^{-1}U_{F,h},
 \qquad h\in[H-1].
 \label{eq:weighted-future-gram}
\end{equation}
Write $C_F^p:=\max_{h\in[H-1]}\|(\Sigma^{p_h}_{F,h})^{-1}\|_1$.

We also use the observation-only suffix
$G_h=(O_h,\ldots,O_H)$, obtained by marginalizing the future actions in $F_h$.
Replacing $F_h$ by $G_h$ in \cref{eq:future-gram} defines
$U^{\rm obs}_{F,h}$ and $\Sigma^{\rm obs}_{F,h}$, with coefficient
$C_F^{\rm obs}:=\max_{h\in[H-1]}\|(\Sigma^{\rm obs}_{F,h})^{-1}\|_1$.
The logger still executes
its policy, so $G_h$ is the observation marginal of the same passive suffix
rather than a fresh controlled experiment.  This operator asks whether the
observations themselves reveal the current state.

With memoryless logging, the future is independent of the past given $S_h$:
\[
 \Pp^{\pi_b}(F_h=\cdot\mid\tau_{h-1})
 =U_{F,h}b_S(\tau_{h-1}).
\]
For a history-dependent logger, this identity need not hold.  Each column of
$U_{F,h}$ averages over past histories conditional on $S_h$, whereas
conditioning on a specific history fixes the logger's memory. In the
construction below, take $H=3$, model $\mathsf{M}_0$, and the history
$\tau_1=(\mathrm{start}_1,c_0)$ after the initial observation and a preserve
action. Write $1_3$ for the terminal observation of bit one. Then
\begin{equation}
 \Pp^{\pi_b}(O_3=1_3\mid\tau_1)=0.300,
 \qquad
 \bigl[U^{\rm obs}_{F,2}b_S(\tau_1)\bigr](\{O_3=1_3\})=0.388.
 \label{eq:conditioning-gap}
\end{equation}
The actual history fixes memory to $0$; the operator mixes memories again
within each state. Appendix~\ref{app:legality} gives the three-step calculation.

\section{The overwrite--reveal construction}
\label{sec:construction}

At layer one there is a single state. At later layers, $\mathsf{H}_b$ and $\mathsf{R}_b$
denote hidden and reset states with bit $b\in\{0,1\}$. The action set is
$\{c_0,c_1,r_0,r_1\}$.  Hidden states emit a blank before the terminal layer;
$\mathsf{R}_0$ and $\mathsf{R}_1$ emit $\Ber(1/10)$ and $\Ber(9/10)$, respectively.  For $h<H$,
events occur in the order
$S_h\to O_h\to R_h\to A_h\to(S_{h+1},M_{h+1})$.
At $h=H$, the observation reveals the current bit and the reward equals it.
Both precede the terminal action, drawn from the common uniform kernel.

The candidates differ only in the step-$1$ preserve transition:
\begin{equation}
\begin{array}{c|cc}
 &c_0&c_1\\ \hline
\mathsf{M}_0:\ \Pp(\mathsf{H}_1\mid\star,c_k)&1/10&2/5\\
\mathsf{M}_1:\ \Pp(\mathsf{H}_1\mid\star,c_k)&3/5&9/10.
\end{array}
\label{eq:seed}
\end{equation}
For every $2\le h<H$, each preserve action maps both $\mathsf{H}_b$ and $\mathsf{R}_b$
deterministically to $\mathsf{H}_b$.  At every preterminal layer, including $h=1$,
reset $r_0$ creates $\mathsf{R}_1$ with probability $1/10$, while $r_1$ does so with
probability $9/10$; the complementary outcome is $\mathsf{R}_0$.  Hence preserve keeps
the physical bit while returning to the hidden block. A reset draws a fresh
bit from the command-specific distribution and places it in the reset block.
The complete transition and
timing specification is in \cref{app:legality}.

The logger starts with $M_1=-1$ and maintains $M_h\in\{-1,0,1\}$ as a
deterministic function of its past observed actions. At $h=1$, a command
$c_j$ or $r_j$ writes $M_2=j$;
thereafter preserve actions keep memory and reset $r_j$ writes $M_{h+1}=j$.
It uses kernels
\begin{equation}
\begin{array}{c|cccc}
 &c_0&c_1&r_0&r_1\\ \hline
U   &1/8&1/8&3/8&3/8\\
K_0 &1/8&1/8&1/2&1/4\\
K_1 &1/8&1/8&1/4&1/2.
\end{array}
\label{eq:logger}
\end{equation}
It uses $U$ at memory $-1$ and $K_m$ at memory $m$.  Both memory states
preserve with probability $1/4$ and differ only in how they split the reset
probability.  This makes the all-preserve event exponentially rare while
carrying the memory--state correlation into the next observation.
The logger records the commanded reset label, not the unobserved reset
outcome.  It is therefore
a known history-dependent policy with no latent-state access.  At every
preterminal layer, the memoryless target chooses $c_0,c_1$ uniformly and never
resets.  At $h=H$, both policies use the same uniform action kernel.

\section{Main lower bound}

\begin{theorem}[Passive observation-only overwrite lower bound]
\label{thm:main}
For every $H\ge3$, the construction above is a realizable two-model POMDP
class with at most four states per layer and four actions, a known
history-dependent behavior policy, and a memoryless target.  The relevant
overlap, coverage, and revealing coefficients are all bounded by universal
constants independent of $H$; explicitly,
\begin{gather}
 \cA=8,\qquad
 \cH\le\frac89(67+\sqrt{3589})<113,\nonumber\\
 \max_{h\in[H-1]}\| (\Sigma^{\rm obs}_{F,h})^{-1}\|_1
 \le\frac{15625}{64}<245. \label{eq:constants}
\end{gather}
The original Assumption D, which includes future actions, also has
$\cF<245$, and
\begin{equation}
 \max_{h\in[H-1]}\|(\Sigma^{p_h}_{F,h})^{-1}\|_1<800.
 \label{eq:weighted-constant}
\end{equation}
For $i\in\{0,1\}$, write $P_b^i:=P_b^{\mathsf{M}_i}$ and let
$\Pp_i:=(P_b^i)^{\otimes n}$ denote the joint law of $n$ i.i.d. logged
trajectories.
Nevertheless,
\begin{equation}
 J_{\mathsf{M}_0}(\pi_e)=\tfrac14,\qquad J_{\mathsf{M}_1}(\pi_e)=\tfrac34,
 \label{eq:value-gap}
\end{equation}
and
\begin{equation}
 \KL(P_b^0\Vert P_b^1)=\kappa4^{-(H-1)},
 \qquad \kappa=\tfrac14\log\tfrac{27}{2}.
 \label{eq:kl-main}
\end{equation}
If $n\le4^{H-1}/(8\kappa)$, then
\begin{equation}
 \inf_{\widehat J}\sup_{i\in\{0,1\}}
 \Pp_i\!\left(
 |\widehat J-J_{\mathsf{M}_i}(\pi_e)|>\tfrac18
 \right)\ge\tfrac38.
 \label{eq:risk-main}
\end{equation}
\end{theorem}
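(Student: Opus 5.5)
The argument splits into three parts: the values and the KL identity (which give the lower bound), then the constants, which are bounded layer by layer.

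\textbf{Values.} Under $\pi_e$ there are no resets. The bit drawn at step $1$ is therefore kept until layer $H$, where the reward equals it. Averaging the seed table \eqref{eq:seed} over uniform $c_0,c_1$ gives $J_{\mathsf M_0}(\pi_e)=\tfrac12(1/10+2/5)=1/4$ and $J_{\mathsf M_1}(\pi_e)=\tfrac12(3/5+9/10)=3/4$. This is \eqref{eq:value-gap}.

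\textbf{KL identity.} The two models differ only in the step-$1$ preserve transition. Condition on the logged action sequence, whose law is model-free because $\pi_b$ only sees actions through $M_h$ and the action probabilities never depend on observations. If some reset occurs at a step $j<H$, the state after the last reset is freshly drawn from a command-specific law shared by both models. From then on every observation is identically distributed. The observations between step $2$ and that reset are blanks, since the states are hidden. Hence the conditional laws coincide, and by the chain rule this event contributes zero KL.

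The remaining event is all-preserve at steps $1,\dots,H-1$. It has probability $(1/4)^{H-1}$: preserve has probability $1/4$ under $U$ and under each $K_m$. On this event the only informative observation is $O_H$, which reveals the seed bit. Its law is $\Ber(1/10)$ versus $\Ber(3/5)$ after $c_0$, and $\Ber(2/5)$ versus $\Ber(9/10)$ after $c_1$, each with conditional weight $1/2$. The per-action KLs are
\[
0.1\log\tfrac16+0.9\log\tfrac94,\qquad 0.4\log 4+0.6\log 6 .
\]
Half their sum should simplify to $\tfrac14\log\tfrac{27}{2}$, and I will verify this algebraically. That gives \eqref{eq:kl-main}.

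\textbf{Lower bound.} Apply Le Cam's two-point method, using tensorization $\KL(\Pp_0\Vert\Pp_1)=n\kappa4^{-(H-1)}\le1/8$. By Pinsker, $\TV\le\sqrt{1/16}=1/4$. Because $|J_1-J_0|=1/2>2\cdot\tfrac18$, any estimator errs by more than $1/8$ under one of the two models. The minimax probability is therefore at least $(1-\TV)/2=3/8$, which is \eqref{eq:risk-main}.

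\textbf{Constants.} $\cA=8$ comes from the smallest logger probability $1/8$. For $\cH$ and the revealing coefficients, I will show that for $h\ge2$ the structure is stationary in $h$. The state marginal over $\{\mathsf H_b,\mathsf R_b\}$, the belief vectors $b_S(\tau_{h-1})$ (which are determined by the last action and the memory), and the column laws of the next observations all take finitely many forms, up to trivial horizon padding.

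For $\Sigma_{H,h}$, the beliefs come in a handful of types. After preserve from a hidden or reset state, the belief is a mixture over $\mathsf H_b$. After reset $r_j$, it is $(0.9,0.1)$ or $(0.1,0.9)$ on $\mathsf R$. The Gram is then block-diagonal over $\mathsf H$ and $\mathsf R$ with explicit $2\times2$ blocks. I expect its smallest eigenvalue to be a root of a quadratic, which would produce the $67+\sqrt{3589}$ expression.

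For the revealing Grams, I bound $\|\Sigma^{-1}\|_1$ by exhibiting the relevant observations. $O_h$ separates the blank-emitting hidden states from the reset states, which emit bits. Within the reset block the $\Ber(1/10)$/$\Ber(9/10)$ emissions separate the bits. Within the hidden block the bit persists on preserve paths, or shows up through the memory-biased reset channel: memory is correlated with the bit, so the next reset observation has a bit-dependent law. This last case is the main obstacle. I expect the hidden-block $2\times2$ to have a determinant of order $(4/5)^6$, which would give $15625/64$. I will compute it explicitly from \eqref{eq:logger} via the joint law of memory and state, and the weighted variant similarly.

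The action-inclusive $\cF$ then follows, because conditioning on additional coordinates only refines outcomes, and refining outcomes makes the Gram larger in the Loewner order. Since the $\|\cdot\|_1$ bound is not Loewner-monotone, I will apply the same explicit $4\times4$ computation to it instead.
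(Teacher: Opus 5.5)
Your values, KL identity, and Le Cam reduction match the paper's argument. Conditioning on the model-free action process is a clean way to run the chain rule. There are two slips there. First, $\operatorname{kl}(2/5\Vert9/10)=0.4\log\frac49+0.6\log6$, not $0.4\log4+\cdots$; with this correction the two terms sum to $\frac12\log(6\cdot\frac94)=\frac12\log\frac{27}{2}$. Second, you should cut at the \emph{first} reset: observations between the first and last resets are not blanks, only model-free.

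The genuine gap is in the constants, specifically their uniformity in $H$. Your plan rests on the structure being stationary in $h$, and it is not:
\begin{itemize}
\item Under $\mathsf M_0$ the hidden-block prior is $(3/4,1/4)$ at $h=2$ but $(9/16,7/16)$ at $h=3$.
\item Seed-derived preserve histories carry weight $\frac18 4^{-(h-2)}$.
\item The joint law of bit and memory drifts with $h$.
\end{itemize}
The beliefs are also not functions of the last action and the memory. After a reset followed by preserves, they depend on the noisy symbol emitted right after the reset. A memory-$k$ preserve history can also be either seed- or reset-derived. So no single block computation covers every cut. Your target number is mis-derived as well: $15625/64=(125/8)^2$ is an upper bound from $\TV\ge 8/125$, not an exact determinant. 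A nontrivial eigenvalue $(4/5)^6$ would give an inverse norm near $3.8$.

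What is missing is a horizon-uniform lower bound on the hidden-state memory contrast $\eta_h=\Pp(M_h=1\mid\mathsf H_1)-\Pp(M_h=1\mid\mathsf H_0)$.
\begin{itemize}
\item Bayes gives $\eta_2=2/5$.
\item For later cuts, the population contrast $d_t=\Pp(B_t=1\mid M_t=1)-\Pp(B_t=1\mid M_t=0)$ satisfies $d_2=27/40$ and $d_{t+1}=d_t/4+3/5$.
\item Preserve is chosen independently of memory, so the hidden block at $h$ inherits the layer-$(h-1)$ joint law. Bayes inversion under uniform memory then gives $\eta_h\ge d_{h-1}\ge 27/40$.
\end{itemize}
With this, a single next observation suffices. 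At intermediate cuts, $\TV(\mathcal L(O_{h+1}\mid\mathsf H_0),\mathcal L(O_{h+1}\mid\mathsf H_1))=\frac{16}{25}\cdot\frac14\cdot\eta_h\ge\frac{8}{125}$; the last cut and the reset blocks are easier.

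Because $O_h$ separates hidden from reset states, each block of $\Sigma^{\rm obs}_{F,h}$ is a symmetric row-stochastic $2\times2$ matrix. Its eigenvalues are $1$ and $\lambda\ge\TV^2$, where refining to the full future cannot lower $\lambda$, and its inverse $1$-norm is exactly $1/\lambda$. That identity also removes your concern about $\cF$. Within these blocks, Loewner domination does transfer to the $1$-norm, so you need no explicit computation over futures whose number grows with $H-h$.

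The other constants need the same kind of argument. For $\cH$, you need a PSD lower bound at $h\ge3$ (reset-then-preserve histories give $3/50$) to show $h=2$ is the worst case. For $C_F^p$, you need block priors kept away from degeneracy uniformly in $h$ (via $b_{t+1}=b_t/4+3/8$) plus a similarity transform. None of these uniformity arguments appears in your proposal.
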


\begin{corollary}[Negative answer to the open Table~1 cell]
\label{cor:open-cell}
Exact realizability together with the Zhang--Jiang Assumptions A, C, and D does not
imply a sample bound polynomial in $H$, $\log|\mathcal M|$, $\varepsilon^{-1}$,
$C_A$, $C_H$, and $C_F$ for passive model-based OPE with a known
history-dependent logger.  This remains true for $|\mathcal M|=2$, a
memoryless target, and bounded observation-only and state-prior-weighted
revealing coefficients.
\end{corollary}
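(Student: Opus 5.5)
The proof splits into four parts: the two values, the exact KL, the Le Cam step, and the coefficient bounds. We take them in that order.

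\textbf{Values.} Under $\pi_e$ the target never resets. It picks $c_0$ or $c_1$ uniformly at step $1$, so the step-$1$ bit equals one with probability $\tfrac12(\Pp(\mathsf H_1\mid\star,c_0)+\Pp(\mathsf H_1\mid\star,c_1))$. This probability is $\tfrac12(1/10+2/5)=1/4$ in $\mathsf M_0$ and $\tfrac12(3/5+9/10)=3/4$ in $\mathsf M_1$. Preserve actions at $2\le h<H$ keep the bit deterministically, and the terminal reward equals the bit. This gives \eqref{eq:value-gap}.

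\textbf{KL.} We use the chain rule over the trajectory. The two models share every kernel except the step-$1$ preserve transition, so the KL reduces to the divergence of the law of $O_{2:H}$ given the step-$1$ action being a preserve. Condition on the full action sequence.
\begin{itemize}
\item If any reset occurs at some $h\ge2$, the post-reset state has a command-specific law common to both models. Every later observation is then identically distributed, so only observations before that reset matter.
\item Before that reset the states are hidden and emit blanks, so the observations carry no information.
\item Observations after a step-$1$ reset are also identical across models.
\end{itemize}
Hence the two conditional laws differ only on the event that $A_1\in\{c_0,c_1\}$ and $A_2,\dots,A_{H-1}$ are all preserves. On that event only $O_H$, which reveals the bit, is informative.

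The logger's action probabilities are known and common, and the preserve probability is $1/4$ at every step (both $c_0$ and $c_1$ have mass $1/8$ under $U,K_0,K_1$). The KL is therefore $4^{-(H-1)}$ times the average over $c_0,c_1$ of the Bernoulli KLs. These are $\KL(\Ber(1/10)\Vert\Ber(3/5))$ and $\KL(\Ber(2/5)\Vert\Ber(9/10))$, and they should sum, after halving, to $\tfrac14\log\tfrac{27}{2}$. This is a routine check. The subtle point is rigor: the logger's memory depends on actions only, not on observations, so the action law is model-free and the observation-based factorization above is exact.

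\textbf{Risk bound.} Tensorization gives $\KL(\Pp_0\Vert\Pp_1)=n\kappa4^{-(H-1)}\le1/8$. Since $|J_1-J_0|=1/2>2\cdot\tfrac18$, Le Cam's two-point method applies. With Pinsker, $\TV\le\sqrt{1/16}=1/4$, and the minimax error probability is at least $(1-\TV)/2\ge3/8$.

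\textbf{Coefficients (main obstacle).} $\cA=8$ is immediate from \eqref{eq:logger}. For $\cH$, the belief $b_S(\tau_{h-1})$ has a small explicit form. After a reset it is $(0,0,1-q,q)$ with $q\in\{1/10,9/10\}$. After preserves it is concentrated on the hidden block with a known mixture. At $h=1$ there is a single state. We compute $\Sigma_{H,h}$ as a $4\times4$ block matrix whose weights are stationary in $h\ge3$ because the logger's step probabilities are constant. This yields a $2\times2$ eigenproblem, which is the source of the value $\tfrac89(67+\sqrt{3589})$.

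For the revealing Grams, the key reduction is that $O_h$ already distinguishes the reset states, and a hidden state's bit is revealed at some later reset-free step. More simply, the next observation after the logger's reset is correlated with memory, and memory is correlated with the bit. We plan to show that the future law from each state depends on $h$ only through a fixed-size summary. This bounds $\|\Sigma^{-1}\|_1$ by an explicit $4\times4$ inverse, giving the constant $15625/64$ and its variants for $F_h$ and the weighted version.

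Making this uniform in $H$ is the hard step. We would first try to lower-bound $\Sigma_{F,h}$ in Loewner order by the Gram of a coarsened future, namely $(O_h,O_{h+1})$ or the first post-reset observation. Coarsening can only shrink the Gram, and it yields an $H$-free matrix that can be inverted by hand.
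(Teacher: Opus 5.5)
Your value, KL, and Le Cam steps are correct and coincide with the paper's rare-survival argument, including $\kappa=\tfrac14\log\tfrac{27}{2}$. The paper proves this corollary in one line by invoking Theorem~\ref{thm:main}, whose constants are already horizon independent. You should still spell out the final step: with $|\mathcal M|=2$, $\varepsilon=1/8$, and $C_A,C_H,C_F$ bounded by constants, a bound polynomial in the listed quantities is polynomial in $H$. Yet failure probability below $3/8$ forces $n>4^{H-1}/(8\kappa)$.

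As a self-contained reproof, your coefficient part has a genuine gap exactly where you locate the difficulty. Coarsening $F_h$ to $(O_h,O_{h+1})$ removes the suffix length, but the resulting hidden $2\times2$ block is not $h$-free. For $h\le H-2$, the law of $O_{h+1}$ given $S_h=\mathsf{H}_b$ puts mass $1/4$ on the blank and $\tfrac14+\tfrac14\Pp(M_h=1\mid\mathsf{H}_b)$ on $r_1$. So the TV between hidden columns is $\tfrac{16}{25}\cdot\tfrac14\cdot\eta_h$, and $\eta_h$ varies with $h$: in $\mathsf{M}_0$, $\eta_2=2/5$ and $\eta_3=24/35$. ``Memory is correlated with the bit'' must therefore be made quantitative uniformly in $h$, and nothing in your plan does so. The other route you mention, revealing at a later reset-free step, is useless, since that event has probability $4^{-(H-h)}$.

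The missing lemma is the anchor \eqref{eq:anchor}, which the paper proves in three steps. First, the complete-population contrast $d_t=\Pp(B_t=1\mid M_t=1)-\Pp(B_t=1\mid M_t=0)$ satisfies $d_2=27/40$ and $d_{t+1}=d_t/4+3/5$. Second, preserve selection is independent of memory and bit, so the hidden block at $t+1$ inherits the population's joint bit--memory law. Third, Bayes' rule under uniform memory gives $\eta_h\ge d_{h-1}\ge27/40$ for $h\ge3$.

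Loewner domination by a coarsened Gram also does not in general control the induced $1$-norm of the inverse. It does here only because the current observation splits the Gram into $2\times2$ symmetric row-stochastic blocks. For such a block, the inverse $1$-norm equals the reciprocal of the nontrivial eigenvalue, which is at least $\TV^2$ by \eqref{eq:two-column}. The cut $h=H-1$ needs its own, easy, TV computation, since $O_H$ reveals the bit.

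The state-prior-weighted coefficient, which the corollary explicitly claims is bounded, is only named as a ``variant''. It does not follow from the unweighted bound, because the weighted block degenerates if a within-block state prior approaches $0$ or $1$. You need those priors balanced uniformly in $h$. The paper uses $b_{t+1}=b_t/4+3/8$ from $b_2\in\{7/16,9/16\}$, together with the $h=2$ hidden prior $(3/4,1/4)$, to get $q_0q_1\ge3/16$ and $q_{\max}/q_{\min}\le3$. Then $\lambda_q\ge4q_0q_1\TV^2$, and the similarity $\Sigma_F^q=D_q^{1/2}BD_q^{-1/2}$ gives a bound below $798$.

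Your $C_H$ sketch also misplaces its source. Hidden-block Grams at $h\ge3$ are not stationary, since all-preserve histories carry mass $4^{-(h-1)}$. The constant $\tfrac89(67+\sqrt{3589})$ comes from the $h=2$ block $\tfrac1{800}\bigl(\begin{smallmatrix}117&33\\33&17\end{smallmatrix}\bigr)$. Later cuts need a separate lower bound; keeping only reset-then-preserve histories gives $3/50$. That part is easily repaired. The uniform anchor and the prior balance are the substantive missing pieces, unless you simply cite Theorem~\ref{thm:main}.
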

\begin{proof}
Apply \cref{thm:main} with its horizon-independent constants.
\end{proof}

Hardness for a memoryless target already rules out a polynomial guarantee
for the broader target class in the open cell. The proof separates what the
logger makes predictable from what remains learnable from the log.

\subsection{Rare-survival experiment}

The target never resets before the terminal layer, so averaging the two entries in each row of
\cref{eq:seed} gives \cref{eq:value-gap}.  Under behavior, total preserve
probability is $1/4$ at every preterminal step.  After the first reset the two
observable experiments are identical: preceding hidden observations are
blank, and every later kernel is common.  Thus model information survives
only on
\begin{equation}
 \begin{aligned}
 E_H&=\{\text{all preterminal actions preserve}\},\\
 \Pp_b(E_H)&=4^{-(H-1)}.
 \end{aligned}
 \label{eq:rare-event}
\end{equation}
Conditioned on $E_H$, the first preserve label is uniform, so the KL chain
rule gives
\begin{align}
 \KL(P_b^0\Vert P_b^1)
 &=\frac{4^{-(H-1)}}2\left[
 \operatorname{kl}(1/10\Vert3/5)
 +\operatorname{kl}(2/5\Vert9/10)\right]\nonumber\\
 &=\kappa4^{-(H-1)}.
 \label{eq:rare-kl}
\end{align}

\subsection{Past coverage and the public anchor}

The previous action reveals whether the present state is hidden or reset,
so the history Gram is block diagonal. Reset histories cover the reset block;
at later cuts, a reset followed by a preserve creates distinct beliefs that
cover the hidden block. The worst case is the first hidden layer; its exact matrix in
Appendix~\ref{app:coverage} gives the $\cH$ bound in \cref{eq:constants}.

More importantly, under either candidate model public memory remains anchored
to the hidden bit:
\begin{equation}
 \eta_h:=\Pp(M_h=1\mid \mathsf{H}_1)-\Pp(M_h=1\mid \mathsf{H}_0)\ge\frac25.
 \label{eq:anchor}
\end{equation}
At $h=2$, direct Bayes calculation gives $\eta_2=2/5$.  For later cuts, let
$d_t=\Pp(B_t=1\mid M_t=1)-\Pp(B_t=1\mid M_t=0)$ in the complete hidden/reset
population.  Then $d_2=27/40$ and $d_{t+1}=d_t/4+3/5$.  Because preserve
selection is independent of memory, the hidden block inherits a constant gap.
The full calculation appears in \cref{app:coverage}.

\subsection{Observation-only revealing under logger-controlled resets}

For two distinct probability columns, the inverse posterior-Gram norm is
at most the inverse squared TV separation (\cref{eq:two-column}).
At intermediate hidden-state cuts $2\le h\le H-2$, the reset/emission channel yields
$\Ber(9/50)$ under $r_0$ and $\Ber(41/50)$ under $r_1$.
Their TV gap is $16/25$.  The logger kernels differ by $1/4$, and
\cref{eq:anchor} supplies the remaining $2/5$.  After marginalizing all future
action coordinates,
\begin{align}
 &\TV\!\left(
 \mathcal L(O_{h+1}\mid \mathsf{H}_0),
 \mathcal L(O_{h+1}\mid \mathsf{H}_1)
 \right)\nonumber\\[-2pt]
 &\qquad\ge \underbrace{\frac{16}{25}}_{\text{reset/emission}}
 \underbrace{\frac14}_{\text{logger}}
 \underbrace{\frac25}_{\text{anchor}}
 =\frac8{125}.
 \label{eq:revealing-factorization}
\end{align}
Reset states are separated by their current observations with TV $4/5$, and
the hidden and reset blocks have disjoint current-observation support.
Together with \cref{eq:two-column}, these facts prove the observation-only
bound in \cref{eq:constants}; the last preterminal cut, $h=H-1$, has larger separation.
For the original future operator, restoring the action coordinates refines the
posterior experiment within each two-state block and cannot decrease its
nontrivial eigenvalue.  This proves Assumption D for the constructed family.
Within each block, state probabilities stay bounded away from zero. A similarity transform
then gives the state-prior-weighted bound $<798$
(Appendix~\ref{app:revealing}).

\subsection{Testing reduction}

For $n\le4^{H-1}/(8\kappa)$, product additivity and Pinsker give
\begin{equation}
 \TV((P_b^0)^n,(P_b^1)^n)
 \le\sqrt{n\kappa4^{-(H-1)}/2}\le\frac14.
\end{equation}
Thresholding an estimator that is $1/8$-accurate under both candidates at
$1/2$ would test the two candidates. Le Cam's inequality makes the two testing
errors sum to at least $3/4$, so under at least one candidate the estimation
failure probability is at least $3/8$.
This proves \cref{thm:main}.

The revealing calculation compares states within one model; the KL compares
logs across models. Logger memory makes states distinguishable, while reset
erases the model-distinguishing evidence preserved by the target.
Appendix~\ref{app:mechanism-family} factors these two
quantities for a parametric family.

\paragraph{Known observation-only operators.}
The same distinction persists when the estimator knows the observation-only
marginal operators exactly. Appendix~\ref{app:common-operators} gives a separate
eight-action construction for this purpose.
\begin{proposition}[Common known observation-only operators]
\label{prop:known-operators}
For every $H\ge3$, there are two realizable POMDPs with at most four states
per layer and eight actions, a known history-dependent logger, and a
memoryless target whose values differ by $1/2$. Action overlap, history
coverage, and revealing remain bounded independently of $H$, including the
observation-only and state-prior-weighted versions. In fixed state and
observation coordinates, the two models share exactly the same matrices
$(U_{F,h}^{\rm obs})_{h=1}^{H-1}$.
Even given these matrices and common left inverses, every estimator needs
$\Omega(4^{H-1})$ trajectories for error at most $1/8$ with probability at
least $2/3$ uniformly over the two models.
\end{proposition}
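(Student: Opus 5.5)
The construction keeps the overwrite--reveal mechanism of \cref{sec:construction}, but removes the model difference from every marginal observation-only operator $U^{\rm obs}_{F,h}$. In the four-action construction the operators differ across models for a specific reason. The step-$1$ seed probabilities in \cref{eq:seed} differ, so the joint law of $(M_2,B_2)$ differs, and therefore the memory--state correlation that drives revealing differs; this shows up in the anchor of \cref{eq:anchor}. The plan is to double the preserve commands to $c_0,c_1,c_0',c_1'$ and keep the resets $r_0,r_1$ and one further pair, eight actions in all. The logger writes memory from the command label, and under $\mathsf{M}_0$ and $\mathsf{M}_1$ the seed probabilities attached to the two copies are swapped. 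Concretely, $\mathsf{M}_1$ uses for $c_k'$ the seed that $\mathsf{M}_0$ uses for $c_k$, and vice versa. The uniform logger mixes the copies symmetrically, so for every $h\ge2$ the joint behavior law of $(S_h,M_h)$ is identical across models. The target plays only the unprimed copies, so its values are again $1/4$ and $3/4$, exactly as in \cref{eq:value-gap}.

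The key steps, in order, are as follows.

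First, I verify realizability and the timing. The proof of $\cA$ carries over directly, giving $\cA=16$ or a similar constant.

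Second, I prove that the laws of $(S_h,M_h)$ coincide across models for $h\ge2$. Given this, every column of $U^{\rm obs}_{F,h}$, which is $\Pp^{\pi_b}(G_h=\cdot\mid S_h=s)$, is an average over memories with common conditional weights. All later kernels are common, so the matrices are equal in fixed coordinates. At $h=1$ there is a single state, so the column is just the law of $G_1$. This law is common because the marginal of $B_2$ averaged over the symmetric copies is common. Common left inverses then exist trivially.

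Third, I bound coverage and revealing. The arguments of \cref{eq:revealing-factorization} and the appendix apply verbatim, because the anchor depends only on the common joint law and is therefore still at least a constant.

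Fourth, the lower bound. Model information again survives only on the all-preserve event $E_H$, whose probability is $4^{-(H-1)}$. Once the first preserve label is recorded, the chain rule gives a KL equal to a constant times $4^{-(H-1)}$, and the testing reduction of \cref{thm:main} gives $\Omega(4^{H-1})$ samples. The matrices carry no information because they are common and known.

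The main obstacle is the second step, namely making the marginal operators exactly equal while keeping the anchor nonzero and the KL exponentially small. There is a genuine tension here. If the joint law of $(M_2,B_2)$ is common, then the only surviving difference is in the correlation between the primed and unprimed command label and the bit. That correlation must be invisible to the operators, which marginalize actions, yet visible in the logged action sequence on $E_H$. I would first try making memory depend only on $k$ and not on the prime, so that the operator, which conditions only on the state, sees the common mixture. The step I need to check is that conditioning on $S_h$ alone never reintroduces the prime through later preserve actions. If it does, I would make the prime itself resettable, overwritten at every reset, so that it too survives only on $E_H$.
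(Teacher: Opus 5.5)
Your proposal is correct and is essentially the paper's construction. The paper likewise splits the first preserve action into two labels: a model-carrying label (its command $k$, your prime) and a memory-writing label (its writer $m$, your $k$). The model-carrying seeds are swapped between the two models via $p_{-\omega,1-k,m}=p_{\omega,k,m}$ and averaged out by the symmetric logger, the memory-writing label keeps a common nonzero bit--memory anchor, and the target fixes the model-carrying label; the only difference is that the paper attaches a noisy writer bit $a(m,x)$ to all eight actions, whereas you reuse the deterministic four-action memory update.

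Your closing worry is unfounded and the fallback is unnecessary. After step $1$, neither the logger nor any transition depends on the prime, so $(S_h,M_h)$ is a common Markov chain for $h\ge2$. It is this common joint law, not merely the marginal of $B_2$, that also makes the $h=1$ column common.
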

Thus the ability to decode a state from a marginal future does not supply the
target evidence missing from the logged experiment. In this variant, averaging
over the first preserve command cancels the model parameter from the
state--memory law. The target instead fixes one command, so its value retains
the model difference.

\section{When decoding supports evaluation}
\label{sec:value-representation}

A marginal decoder averages over histories correlated with the state.
Fixing a history also fixes the logger's memory, so that decoder need not
remain correct. A sufficient alternative is to represent the target value
while retaining this conditioning.

Consider a known finite class $\mathcal M$ with common state coordinates.
Here we also allow action-dependent rewards $R_h=r_h(O_h,A_h)\in[0,1]$,
evaluated after $A_h$. All models share these known reward functions, a known
logger, and a known memoryless target. The complete observation--action log
includes $A_H$ and determines all rewards, so their coordinates may be omitted.
Write $V_h^N(s)$ for the target continuation
value in candidate $N$. For every $N$ and $2\le h\le H$, suppose an observable
function $g_h^N(\tau_{h-1},F_h)$ satisfies
\begin{equation}
 \E_M[g_h^N\mid S_h=s,\tau_{h-1}=\tau]=V_h^N(s)
 \quad\text{for every }M\in\mathcal M,
 \label{eq:value-calibration}
\end{equation}
on positive-probability conditioning events. The same function for candidate
$N$ must work across all generating models $M$. Its global second-moment
cost is
\begin{equation}
 C_V=\max_{M,N\in\mathcal M,\,2\le h\le H}\E_M[(g_h^N)^2].
 \label{eq:value-cost}
\end{equation}
Unlike full-state decoding, this condition represents only candidate target
values. It builds on future-dependent value functions
\citep{uehara2023future,zhang2024curses}.
Write $\mathsf h(P,Q)^2=\sum_z(\sqrt{P(z)}-\sqrt{Q(z)})^2$ for squared
Hellinger distance.

\begin{theorem}[History-conditioned value representations]
\label{thm:value-calibration}
Suppose the true model belongs to $\mathcal M$, the logger's inverse action
probabilities are bounded by $C_A$, and $C_V<\infty$. In every candidate
model, require $\lambda_{\min}(\Sigma_{H,h})\ge C_{\rm hist}^{-1}$ for
$2\le h\le H$, including the terminal cut. For all $M,N\in\mathcal M$,
\begin{equation}
 |J_M(\pi_e)-J_N(\pi_e)|
 \le 2HC_{\rm hist}C_A(1+2\sqrt{C_V})\,
 \mathsf h(P_b^M,P_b^N).
 \label{eq:value-law}
\end{equation}
Consequently, given $n$ independent logged trajectories,
the value of the maximum-likelihood candidate has error at most
$\epsilon$ with probability at least $1-\delta$ if
\begin{equation}
 n\ge
 \frac{40H^2C_{\rm hist}^2C_A^2(1+C_V)}{\epsilon^2}
 \log\frac{|\mathcal M|}{\delta}.
 \label{eq:value-rate}
\end{equation}
The estimator need not be given the functions $g_h^N$.
\end{theorem}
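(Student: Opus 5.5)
The plan is a simulation-lemma (telescoping Bellman-residual) argument. Each residual is rewritten as the history-conditional mean of an observable statistic built from the representations $g_h^N$. History coverage then transfers a bound on that observable mean to the state residual. Finally, a Hellinger change-of-measure compares the observable mean under $P_b^M$ and $P_b^N$.

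\textbf{Step 1 (telescoping).} Fix $M,N$. With $V^N_{H+1}\equiv0$, I would write
\[
J_M(\pi_e)-J_N(\pi_e)=\sum_{h=1}^H \E^{\pi_e}_M[\delta_h(S_h)],\qquad
\delta_h(s)=\E^{\pi_e}_M[R_h+V^N_{h+1}(S_{h+1})\mid S_h=s]-V^N_h(s).
\]
Here $\delta_h$ vanishes identically when $M=N$. Layer $1$ has a single state, so $\delta_1$ is a scalar and can be handled like the other layers using $g_2^N$ alone.

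\textbf{Step 2 (observable residual).} Since $\pi_e$ is memoryless and $\pi_b$ has inverse propensities at most $C_A$, the one-step weight $w_h=\pi_e(A_h\mid O_h)/\pi_b(A_h\mid\tau_{h-1},O_h)$ is at most $C_A$. Define
\[
\psi_h^N=w_h\bigl(r_h(O_h,A_h)+g^N_{h+1}(\tau_h,F_{h+1})\bigr)-g^N_h(\tau_{h-1},F_h).
\]
Condition on $(S_h,\tau_{h-1})$ and apply the tower property through $(O_h,A_h,S_{h+1})$. Calibration \cref{eq:value-calibration} at layer $h+1$, which conditions on $\tau_h$ and $S_{h+1}$ and is valid under the generating model $M$, gives
\[
\E_M[\psi_h^N\mid S_h=s,\tau_{h-1}]=\delta_h(s).
\]
Importance weighting replaces the logger's action by the target's here, and this needs only one step because $\pi_e$ does not depend on the history. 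Applying the same identity under $N$ gives zero.

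Set $q_h(\tau_{h-1})=\E_M[\psi_h^N\mid\tau_{h-1}]=b^M_S(\tau_{h-1})^\top\delta_h$. Then $\E_N[\psi_h^N\mid\tau_{h-1}]=0$. The second moment satisfies $\E[(\psi_h^N)^2]\lesssim C_A^2(1+C_V)$ under both models, with a constant from $(a+b+c)^2\le3(a^2+b^2+c^2)$.

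\textbf{Step 3 (coverage and change of measure).} By history coverage,
\[
\|\delta_h\|_2^2\le C_{\rm hist}\,\E_M[q_h^2],
\]
and $\E_M[q_h^2]=\E_M[q_h\psi_h^N]-\E_N[q_h\psi_h^N]$ because $q_h$ is a function of $\tau_{h-1}$ alone. Writing $P-Q=(\sqrt P-\sqrt Q)(\sqrt P+\sqrt Q)$ pointwise on trajectories and applying Cauchy--Schwarz bounds this difference by
\[
\mathsf h(P_b^M,P_b^N)\bigl(\|q_h\psi_h^N\|_{L^2(P_b^M)}+\|q_h\psi_h^N\|_{L^2(P_b^N)}\bigr).
\]
I would then bound $|q_h|\le\|\delta_h\|_\infty\le\|\delta_h\|_2$ and divide through to get
\[
\|\delta_h\|_2\lesssim C_{\rm hist}C_A(1+\sqrt{C_V})\,\mathsf h.
\]
Summing $|\E^{\pi_e}_M\delta_h(S_h)|\le\|\delta_h\|_\infty$ over $h$ yields \cref{eq:value-law}.

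This is the step I expect to be the main obstacle. Obtaining a bound linear in $\mathsf h$ with exactly the constant $2HC_{\rm hist}C_A(1+2\sqrt{C_V})$, rather than a square-root dependence, requires the division trick above. That trick uses $q_h$ controlled by $\|\delta_h\|$ rather than by an a priori bound, together with some care about conditioning events of zero probability.

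\textbf{Step 4 (statistics).} For MLE over a finite realizable class, the standard bound gives
\[
\mathsf h(P_b^{\widehat M},P_b^{M^\star})^2\le \frac{c\log(|\mathcal M|/\delta)}{n}
\]
with probability $1-\delta$. Plugging this into \cref{eq:value-law} with $M=\widehat M$ and $N=M^\star$, and squaring using $(1+2\sqrt{C_V})^2\le 5(1+C_V)$, gives \cref{eq:value-rate}. The functions $g_h^N$ appear only in the analysis, so the estimator does not need them.
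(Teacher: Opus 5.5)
Your proposal is correct and is essentially the paper's proof. It uses the same importance-weighted residual $w_h(r_h+g^N_{h+1})-g^N_h$, the same telescoping, history coverage combined with a Hellinger/Cauchy--Schwarz change of measure, and a Bhattacharyya-plus-union-bound MLE step. Your scalar instrument $q_h=b^M(\tau_{h-1})^\top\delta_h$ is just the projection onto $\delta_h$ of the paper's vector instrument $b_h^M$, and it gives the identical bound $\|\delta_h\|_2\le 2C_{\rm hist}A_h\,\mathsf h(P_b^M,P_b^N)$, with $A_h$ any common bound on $\|\psi_h^N\|_{L^2}$; so the division step loses nothing and is not where the constant is at stake.

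To land exactly on the stated constant, make three adjustments. First, bound $\|\psi_h^N\|_{L^2}$ by Minkowski, which gives $\sqrt{C_A}+(C_A+1)\sqrt{C_V}\le C_A(1+2\sqrt{C_V})$, rather than by $(a+b+c)^2\le3(a^2+b^2+c^2)$. Second, take $c=2$ in the MLE step. Third, drop the single-layer-$1$-state assumption by comparing $\E_M[w_1(r_1+g^N_2)]$ with $\E_N[w_1(r_1+g^N_2)]=J_N$ directly; this is how the paper covers an arbitrary, even model-dependent, initial distribution.
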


Conditional calibration and history coverage convert observable Bellman
residuals into statewise errors. The moment bound controls their sum
(Appendix~\ref{vc:appendix}): low representation cost prevents nearly
indistinguishable logs from having widely separated target values.

For finite state, action, and observation spaces, \eqref{eq:value-calibration}
is a linear system. Its column space determines feasibility; a second-order
cone program gives the smallest cost~\eqref{eq:value-cost}
(Proposition~\ref{prop:value-certificate}). Both are specified by the candidate
class. The linear system and the conic program can each have size exponential in $H$.

In the common-operator pair of Proposition~\ref{prop:known-operators},
every noninitial continuation value is the physical bit. A common marginal
value decoder has second moment below $2660$, yet the smallest cost in
\eqref{eq:value-cost} is $\Theta(4^{H-2})$. Once a preserve history is fixed,
its bit is revealed only if every remaining action preserves. A
conditionally correct decoder must compensate for this rare event
(Proposition~\ref{prop:value-gap}). The gap concerns the same value direction,
not extra state coordinates.

The condition can hold without single-step revealing. In Appendix~\ref{app:short_probe},
a logged probe at a hidden layer preserves the bit until the next observation reveals it.
Inverse-propensity weighting yields a polynomial-cost representation despite
rank-one observations at hidden layers. The next section instead adds an extra
readout to the original log.

\section{Recording and interpreting target evidence}
\label{sec:readout}

\subsection{A fixed family and its passive rate}
\label{sec:sharpness}

In the four-action construction of Section~\ref{sec:construction}, fix
$\theta\in[0,1/10]$ and replace the first preserve probabilities by
\begin{equation}
 p_{\theta,0}=\tfrac15+\theta,\qquad
 p_{\theta,1}=\tfrac7{10}+\theta.
 \label{eq:continuum}
\end{equation}
All other kernels remain fixed, and $J_\theta(\pi_e)=9/20+\theta$.
The fixed seed gap $1/2$ keeps the memory--state anchor bounded away from zero
uniformly in $\theta$. The coefficients satisfy
$C_A=8$, $C_H\le64$, and $C_F^{\rm obs},C_F\le625/4$;
Appendix~\ref{app:continuum-certificates} gives the certificate calculation.

Let $n_H^*(\varepsilon,\delta)$ be the least sample size for uniform error
$\varepsilon$ and failure probability $\delta$ on this family.

\begin{theorem}[Family-specific sharpness]
\label{thm:sharpness}
For $H\ge3$, confidence level $2/3$, and $0<\varepsilon\le1/40$, the minimax
sample complexity on the family in~\eqref{eq:continuum} satisfies
\begin{equation}
 n_H^*(\varepsilon,1/3)
 =\Theta(4^{H-1}\varepsilon^{-2}).
\end{equation}
\end{theorem}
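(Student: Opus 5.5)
The proof has two halves: a two-point lower bound and a matching upper bound from an explicit estimator. Both rest on the rare-survival structure of \cref{eq:rare-event}. For $\theta\in[0,1/10]$, the laws $P_b^\theta$ agree except on $E_H$, whose probability $4^{-(H-1)}$ does not depend on $\theta$. On $E_H$ the log records the first preserve label $c_k$, uniform given $E_H$. It then shows the surviving bit through $O_H$, which is $\Ber(p_{\theta,k})$. The later preserve labels and the blank observations carry no $\theta$ information. Off $E_H$, the first reset erases the seed, so the rest of the log has a $\theta$-free law.

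\emph{Lower bound.} I would take $\theta_0=1/20$ and $\theta_1=1/20+2\varepsilon$. This pair lies in $[0,1/10]$ because $\varepsilon\le1/40$, and its values $J_\theta=9/20+\theta$ differ by $2\varepsilon$. By the chain rule, as in \cref{eq:rare-kl},
\[
\KL(P_b^{\theta_0}\Vert P_b^{\theta_1})=\frac{4^{-(H-1)}}{2}\sum_{k}\operatorname{kl}(p_{\theta_0,k}\Vert p_{\theta_1,k}).
\]
Every $p_{\theta,k}$ lies in $[1/5,4/5]$, so each binary KL is at most $c\,(2\varepsilon)^2$ with $c=1/(p(1-p))\le 25/4$. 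The total KL is therefore $O(4^{-(H-1)}\varepsilon^2)$. Applying Pinsker and Le Cam as in the testing reduction of \cref{thm:main}, with threshold at the midpoint, shows that $n\le c'4^{H-1}\varepsilon^{-2}$ trajectories force failure probability above $1/3$ for some $\theta$. The estimation error must exceed $\varepsilon$ there, since a midpoint test would otherwise succeed.

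\emph{Upper bound.} The identity $J_\theta(\pi_e)=\tfrac12(p_{\theta,0}+p_{\theta,1})=\bar p$ reduces the task to estimating a known affine function of $\theta$. I would use the unbiased importance-style statistic
\[
Y=\mathbf 1\{E_H\}\,4^{H-1}\,\bigl(\mathbf 1\{O_H=1_H\}-\tfrac{9}{20}\bigr)\cdot 2\cdot\tfrac12\!\sum_k \mathbf 1\{A_1=c_k\}.
\]
A simpler equivalent is $\widehat\theta$, the empirical mean of $4^{H-1}\mathbf 1\{E_H\}(O_H-\bar p_{0})$, with $\bar p_0=9/20$. Under behavior the first preserve label is uniform on $E_H$, so $\E[\mathbf 1\{E_H\}O_H]=4^{-(H-1)}(9/20+\theta)$. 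Its variance is at most $4^{H-1}$. Chebyshev then gives error $\le\varepsilon$ with probability $2/3$ once $n\ge 3\cdot4^{H-1}\varepsilon^{-2}$. Clipping to $[0,1/10]$ only helps. As an alternative, one could use the conditional MLE based on the $\mathrm{Bin}(n,4^{-(H-1)})$ surviving trajectories; a Chernoff bound guarantees $\Omega(n4^{-(H-1)})$ survivors with high probability, and a Hoeffding bound on their outcomes would follow.

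\emph{Main obstacle.} The step that needs care is verifying that the log's likelihood factorizes so that $\theta$ enters only through $(A_1,O_H)$ on $E_H$. This is what makes the KL computation exact and the estimator unbiased. Concretely, I would check three things. First, $E_H$ is determined by logged actions, and its probability is $\theta$-free because the logger preserves with probability $1/4$ regardless of memory. Second, all hidden observations before the terminal layer are blank. Third, once a reset occurs, the subsequent state distribution is the command-specific reset law, independent of the seed. With this factorization, everything else is the routine two-point and mean-estimation argument above. The lower bound requires $\varepsilon\le1/40$ to keep both parameters in the family; the constant $1/3$ only affects absolute constants.
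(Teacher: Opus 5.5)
Your route is the paper's: the likelihood factors through $(A_1,O_H)$ on $E_H$, the lower bound is a two-point Le Cam argument with a $\chi^2$-controlled Bernoulli KL, and the upper bound is an inverse-probability estimator on $E_H$. Your upper bound is correct as stated. The centered statistic $4^{H-1}\one\{E_H\}(O_H-9/20)$ is unbiased for $\theta$ with second moment at most $(121/400)\,4^{H-1}$, so Chebyshev at $n\ge3\cdot4^{H-1}\varepsilon^{-2}$ suffices. The paper instead applies Bernstein to the uncentered $\one\{E_H\}R_H/q_H$, which changes only constants.

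The lower bound fails as written, however. Your pair $\theta_0=1/20$, $\theta_1=1/20+2\varepsilon$ has target values $1/2$ and $1/2+2\varepsilon$, exactly $2\varepsilon$ apart. The failure event in the minimax definition is the strict event $|\widehat J-J_\theta|>\varepsilon$. So the constant estimator $\widehat J\equiv1/2+\varepsilon$ has error exactly $\varepsilon$, and hence failure probability zero, under both hypotheses for every $n$. No lower bound can come from this pair. Your midpoint-test sentence breaks at the tie $\widehat J=1/2+\varepsilon$: an estimate there is $\varepsilon$-accurate for both parameters, yet the test must misclassify one of them.

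You need separation strictly larger than $2\varepsilon$. The paper takes $\theta_0=0$ and $\theta_1=4\varepsilon$, which is where $\varepsilon\le1/40$ is genuinely used; $\theta=1/20\pm2\varepsilon$ also works. With a $4\varepsilon$ gap, an $\varepsilon$-accurate estimate lies strictly on the correct side of the midpoint. Your KL computation then gives at most $\frac{25}{4}(4\varepsilon)^2\,4^{-(H-1)}=100\cdot4^{-(H-1)}\varepsilon^2$ per trajectory. Pinsker and Le Cam then yield $n_H^*(\varepsilon,1/3)>\frac1{800}4^{H-1}\varepsilon^{-2}$, matching the paper.
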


The lower bound compares two seed probabilities separated by $4\varepsilon$.
For the upper bound, rewards on the observable all-preserve event follow the
target law. Inverse-probability weighting gives the unbiased estimator
in~\eqref{eq:ht}; Appendix~\ref{app:sharpness} proves its matching rate.

\subsection{Readout timing and sensor calibration}

A readout before the first reset can record information about the bit the target needs.
We quantify its value with a known channel, then ask what changes when the
sensor has an unknown bias.
Throughout, the family in~\eqref{eq:continuum}, logger, target, and distribution
of the original log remain unchanged. Only the extra measurement protocol varies.

\paragraph{Known-channel pre-reset reads.}
Each trajectory allows at most one extra measurement, after $O_h$ and before
$A_h$ at $2\le h<H$, provided all preceding actions preserve. A known finite
menu may depend on public history. Option $\alpha$ costs $c_\alpha\ge0$ and
reads the current bit $B$, returning $Y\in\{0,1\}$ with reliability
$\xi_\alpha\in[-1,1]$:
$\Pr(Y=B\mid B,\alpha)=(1+\xi_\alpha)/2$.
Readout choices use only public history and private randomness. Each selected
channel uses fresh randomness independent of the original process and the
selection randomization. The time, option, and result are logged; the logger
ignores them and continues unchanged.

\paragraph{A simple budget law.}
Suppose the only read option has reliability $\xi$ and unit cost at every
eligible history. For $n$ trajectories and total expected readout budget $A$,
the largest Fisher information at the parameter midpoint is of order
\begin{equation}
 n4^{-(H-1)}+\xi^2\min\{A,n/4\}.
 \label{eq:readout-total}
\end{equation}
The two terms count evidence retained in two ways: the bit survives to the
terminal observation, or a read records it before a reset. With fixed nonzero
reliability and a fixed positive budget per trajectory, the sample complexity
is $\Theta(\varepsilon^{-2})$, independent of $H$.

For a general finite menu, write $q_H=4^{-(H-1)}$ and let
$\mathcal I_\star(b)$ be the largest Fisher information at $\theta=1/20$
in the augmented log (the original log plus the optional read record),
over rules with expected cost at most $b$.
It is the optimum of a finite linear program with known coefficients
(Appendix~\ref{app:readout-flow}). A known-channel read adds information only
on paths that later reset; otherwise the terminal observation already
reveals the same bit exactly.

\begin{theorem}[Budgeted recovery of target evidence]
\label{thm:readout}
For $H\ge3$ and $0<\varepsilon\le1/40$, the minimax number of trajectories
for $\varepsilon$-accurate estimation of $J_\theta$ with probability at least
$2/3$, uniformly over the family in~\eqref{eq:continuum}, is
\begin{equation}
 n_\star(\varepsilon,b)
 =\Theta\!\left(\frac{1}{\mathcal I_\star(b)\varepsilon^2}\right).
 \label{eq:readout-rate}
\end{equation}
The budget is a total expected cost of at most $nb$ under every $\theta$.
The lower bound allows readout rules to depend on all earlier augmented
trajectories. The upper bound repeats one budget-feasible rule independently.
The constants are universal.
\end{theorem}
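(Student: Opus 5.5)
The argument has two halves. First, the minimax rate is governed by a one-parameter local Fisher information. Second, both bounds are matched to $\mathcal I_\star(b)$ up to universal constants.

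Since $J_\theta=9/20+\theta$ is affine, estimating $J_\theta$ to accuracy $\varepsilon$ is the same as estimating $\theta\in[0,1/10]$ to accuracy $\varepsilon$. The range restriction $\varepsilon\le1/40$ keeps two-point pairs inside the family.

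\textbf{Lower bound.} I would use Le Cam's two-point method with $\theta_\pm=1/20\pm2\varepsilon$. A rule may depend on earlier augmented trajectories, so the $n$ records form an adaptively generated sequence rather than a product. The KL chain rule still applies: the divergence equals the expected sum, over trajectories, of per-trajectory conditional divergences, each under the rule selected given the past.

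Each per-trajectory divergence is the KL of a fixed finite augmented experiment under a fixed (possibly randomized) rule. I would bound it by $C\varepsilon^2$ times the Fisher information of that rule at the midpoint. This step needs two ingredients:
\begin{itemize}
\item every likelihood is a polynomial of degree at most one in $\theta$ through the seed probabilities $p_{\theta,k}$;
\item those probabilities stay in $[1/5,4/5]$, so the $\chi^2$-type comparison $\mathrm{KL}\le C(\Delta\theta)^2 I(\theta_{\rm mid})$ holds with a universal $C$.
\end{itemize}

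Next I would handle the budget. Fisher information is linear in the mixing weights of randomized rules, and the feasible set of per-trajectory expected costs is a linear constraint. Concavity then follows from the linear-program description in Appendix~\ref{app:readout-flow}: $\mathcal I_\star$ is concave and nondecreasing in $b$. Let $b_i$ be the expected cost of trajectory $i$, with $\sum_i\E b_i\le nb$. Jensen gives total information at most $n\mathcal I_\star(b)$. Requiring the total KL to be below a constant forces $n\gtrsim 1/(\mathcal I_\star(b)\varepsilon^2)$.

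\textbf{Upper bound.} I would take the LP optimizer at $\theta=1/20$, say a randomized rule with cost at most $b$, and run it i.i.d. on all trajectories. For estimation I would use a one-step or parametric-MLE estimator, or more robustly a weighted linear estimator built from the per-trajectory score at $1/20$. Since likelihoods are affine in $\theta$, I would build an unbiased linear estimator $\hat\theta=\frac1n\sum_i\psi(Z_i)$ with $\E_\theta\psi=\theta$ for all $\theta$. Its variance is to be bounded by $C/\mathcal I_\star(b)$ uniformly over the family.

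Chebyshev then gives the claim at confidence $2/3$. The cost constraint holds under every $\theta$ because selection depends only on public history, whose law varies only through bounded factors. To be safe, I would scale the rule by a constant factor to meet cost $nb$ uniformly, which loses only a constant in $\mathcal I_\star$ by concavity.

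\textbf{Main obstacle.} The hardest step is producing $\psi$ with uniformly controlled variance across $\theta$, since efficiency at the midpoint must transfer to the whole interval. I would write the augmented likelihood as $q_0(z)+\theta q_1(z)$ and take $\psi=\theta_0+(q_1/p_{\theta_0})/I(\theta_0)$ restricted to the informative coordinates. I would then use the bounded ratio $p_\theta/p_{\theta_0}\in[1/2,2]$ on every outcome, which follows from the seed probabilities staying in $[1/5,4/5]$. This shows the variance changes by at most a constant factor. Unbiasedness holds exactly because the score is linear in $\theta$. The residual work is checking the $\chi^2\lesssim$ Fisher comparison for the lower bound using the same ratio bound.
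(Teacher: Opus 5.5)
Your proposal is correct and follows essentially the paper's proof: a two-point Le Cam argument at $\theta_\pm=1/20\pm2\varepsilon$ with the adaptive KL chain rule, a $\chi^2$ bound through the affine likelihood ratio $1+\vartheta\ell$, an LP-based budget bound (your concavity-plus-Jensen step is equivalent to the paper's averaging of the conditional flows), and the unbiased score estimator $1/2+\frac{1}{n\mathcal I_\star}\sum_i\ell_i$ followed by Chebyshev. The cost-rescaling hedge is unnecessary: before the read, the eligible public route consists only of preserve commands and blanks and is exactly $\theta$-independent, so every flow's expected cost is the same under all $\theta$.
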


For each fixed read rule, the complete-log likelihood ratio relative to
$\theta_0=1/20$ is $1+(\theta-\theta_0)\ell$, with $|\ell|\le4$.
This uniform bound lets midpoint information control finite-sample estimation
throughout the parameter interval (Appendix~\ref{app:readout}).

With one channel and unit cost, an optimal rule reads only at step $2$ with
probability $\min\{4b,1\}$ when the first action preserves
(Corollary~\ref{cor:readout-uniform}). More strongly, every delayed rule using
this known channel can be simulated from a step-$2$ rule with the same
expected cost and the same read probability for each first preserve label
(Lemma~\ref{lem:early-readout}). The simulation uses complete records, so
every decision based on the delayed log can also be made from the early log.

\paragraph{Pre-reset reads with an unknown bias.}
Keep the pre-reset protocol, but give the sensor a known gain and an unknown bias:
$\Pr(Y=1\mid B)=1/4+\beta+B/2$, where
$\beta\in[-1/20,1/20]$ is shared across trajectories.
The reset probabilities $(q_0,q_1)=(1/10,9/10)$ remain known.
Before a reset, the readout mean given the first preserve label $k$ is
$1/4+d_k/2+\beta+\theta/2$, where $(d_0,d_1)=(1/5,7/10)$.
On paths that later reset, these reads identify only $\beta+\theta/2$.
Under the pre-reset protocol above, the complete-log minimax rate remains
$\Theta(q_H^{-1}\varepsilon^{-2})$, even when every eligible trajectory is read.

\paragraph{Early reads and calibration.}
For calibration, allow at most one read per trajectory at step $2$, after
$O_2$ and before $A_2$, whether the first action preserves or resets.
This reads the original seed or a fresh reset bit, respectively.
The same sensor is used in both cases, with fresh noise independent of the
original process and selection randomization; the logger ignores the read.
Let $0\le u\le1/4$ and $0\le v\le3/4$ bound the expected counts of these
two read types, averaged across trajectories.

\begin{theorem}[Target evidence and calibration]
\label{thm:readout-calibration}
For $H\ge3$ and $0<\varepsilon\le1/40$, the complete-log minimax trajectory
complexity for $\varepsilon$-accurate estimation of $J_\theta$ with probability
at least $2/3$, uniformly over $(\theta,\beta)$, is
\begin{equation}
 n_\star^{\rm cal}(\varepsilon;u,v)
 =\Theta\!\left(
 \frac{1}{\varepsilon^2\left[q_H+uv/(u+v)\right]}
 \right),
 \label{eq:calibration-rate}
\end{equation}
with $0/0:=0$. Rules may adapt across trajectories, with expected read counts
at most $nu$ and $nv$ under every parameter pair. The constants are universal.
\end{theorem}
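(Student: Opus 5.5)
We reduce \cref{thm:readout-calibration} to a two-parameter Gaussian-like experiment in $(\theta,\beta)$ and compute its Fisher information. Three sources of evidence about $\theta$ are available in the complete log.

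\begin{itemize}
\item The all-preserve terminal observation has probability $q_H$ and reveals the seed bit exactly. Its information about $\theta$ is of order $q_H$, and it carries no information about $\beta$.
\item Early reads on first-preserve paths have mean $1/4+d_k/2+\beta+\theta/2$ given label $k$. On paths that later reset, these identify only the combination $\beta+\theta/2$. A read budget $u$ gives information of order $u$ about that direction.
\item Reads after a first-step reset observe a fresh bit with known mean $q_j$, so their mean is $1/4+q_j/2+\beta$. A budget $v$ gives information of order $v$ about $\beta$ alone.
\end{itemize}

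The rest of the original log is $\theta$-free after the reset, except on $E_H$. The midpoint Fisher matrix at $(\theta_0,\beta_0)=(1/20,0)$ is therefore, up to universal constants,
\[
 I\asymp q_H e_\theta e_\theta^\top + u\,w w^\top + v\,e_\beta e_\beta^\top,
 \qquad w=(1/2,1).
\]
Bernoulli variances stay in $[c,1-c]$ because all means lie in a compact subinterval of $(0,1)$. The efficient information for $\theta$ with $\beta$ as a nuisance is the Schur complement. For the last two terms it equals $\tfrac14\cdot uv/(u+v)$, which is harmonic-mean-like. Adding $q_H$ gives the bracket in \cref{eq:calibration-rate}. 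Since $J_\theta=9/20+\theta$, the $\varepsilon$ in $J$ equals the $\varepsilon$ in $\theta$.

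\textbf{Upper bound.} The estimator uses a fixed rule with read probabilities $\min\{4u,1\}$ on first-preserve paths and $\min\{4v/3,1\}$ on first-reset paths. It builds three unbiased statistics:
\begin{itemize}
\item the Horvitz--Thompson estimator from \eqref{eq:ht} on $E_H$, with variance $O(1/(nq_H))$;
\item an estimate of $\beta$ from reset reads, obtained by subtracting the known $q_j/2$, with variance $O(1/(nv))$;
\item an estimate of $\beta+\theta/2$ from preserve reads, obtained by subtracting the known $d_k/2$ using the logged label, with variance $O(1/(nu))$.
\end{itemize}
Combining the last two gives a $\theta$ estimate with variance $O((u+v)/(nuv))$. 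Inverse-variance weighting with the first, followed by Chebyshev or a median-of-means step, gives success probability $2/3$ at $n\gtrsim 1/(\varepsilon^2[q_H+uv/(u+v)])$. The weights depend only on the known $(u,v,q_H)$. The constraint $\varepsilon\le1/40$ keeps the targets inside the parameter box.

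\textbf{Lower bound.} This is the main obstacle, because rules may adapt across trajectories and budgets bind only in expectation. We use a two-point Le Cam argument along the least-favorable direction. The perturbation is $\theta=\theta_0\pm\varepsilon'$, with $\beta$ shifted by $\mp\lambda\varepsilon'$ and $\lambda$ chosen to minimize the quadratic form. The natural choice is $\lambda=\tfrac12\cdot u/(u+v)$, which splits the cancellation between the preserve-read and reset-read channels.

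For adaptive protocols we apply the KL chain rule trajectory by trajectory and read by read. Each read contributes a conditional KL bounded by a constant times the squared mean shift. For preserve reads this shift is $(\theta-\theta_0)/2+(\beta-\beta_0)$, and for reset reads it is $\beta-\beta_0$. Summing and using the expected-count constraints under the center law (valid under every parameter) bounds the total KL by
\[
 C n\varepsilon'^2\bigl[q_H+u(1/2-\lambda)^2+v\lambda^2\bigr]\asymp n\varepsilon'^2\bigl[q_H+uv/(u+v)\bigr].
\]
The $q_H$ term comes from \eqref{eq:rare-kl}, linearized in $\theta$ on $E_H$. Using midpoint expected counts rather than the counts under each hypothesis requires the uniform-in-parameter budget assumption, together with the bounded likelihood ratio. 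Setting the KL at most a constant and applying Pinsker and Le Cam, as in the proof of \cref{thm:main}, gives the matching $\Omega$ bound. The case $u$ or $v$ equal to zero reduces to $q_H$, consistent with $0/0:=0$.
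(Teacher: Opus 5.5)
Your proposal is correct and follows the paper's proof. Both use the same least-favorable pair, with $\Delta\beta=-\frac{u}{u+v}\cdot\frac{\Delta\theta}{2}$ fixed from the caps, and a per-trajectory KL bound that is linear in the conditional read probabilities and summed by the chain rule. Both also derive the upper bound by subtracting the calibration-read mean from the early-read mean and combining that $\theta$-estimate with the rare all-preserve estimate; the two can be correlated on the same trajectory, which the paper handles by Cauchy--Schwarz and your inverse-variance step should too.

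You do not need midpoint expected counts or any likelihood-ratio transfer, and a transcript-level likelihood ratio is not uniformly bounded anyway. The chain rule for $\KL(P_-\Vert P_+)$ takes expectations under $P_-$, where the uniform-in-parameter budget caps apply directly.
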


A fixed rule attains the rate by reading independently of $O_2$, with
probabilities $4u$ after preserve and $4v/3$ after reset.
Subtracting known label-dependent baselines gives means
$\beta+\theta/2$ and $\beta$, whose difference recovers $\theta/2$.
Post-reset calibration thus makes early target evidence usable.
Appendix~\ref{app:readout-calibration} gives the proofs and an equal-budget comparison.

\section{Related work}

\citet{zhang2024curses} introduced belief and outcome coverage for model-free
OPE of memoryless policies. \citet[Thm.~3]{zhang2025tractability} give an
exponential lower bound for their model-free class with history-dependent
targets, even in a known fully observed MDP. There the restriction is access
to the target only on logged histories; model-based evaluation is immediate.
Our lower bound instead applies to arbitrary estimators and uses a memoryless target.
Table~\ref{tab:comparison} compares the closest model-based results.

Concurrent work by \citet{jajoo2026exponential} addresses the same open
question with a two-state, three-action reset construction and derives
its exact two-model minimax risk.
We establish hardness even with common, known observation-only operators
whose revealing coefficients are uniformly bounded in $H$.
We also give finite-class guarantees using history-conditioned value
representations and, on a fixed family, matching readout and sensor-calibration rates.

Adding controller memory to the state is standard
\citep{andriushchenko2022inductive,simao2023safe}. In our product-memory
variant, lifted history coverage remains $O(1)$, but future coefficients
grow as $\Theta(4^{H-h})$ (Appendix~\ref{app:product-memory}).
Thus lifting does not meet the bounded-coefficient requirements of the
memoryless-logging theorem in \citet{zhang2025tractability}.

Predictive-state representation tests specify future actions as interventions
\citep{littman2001predictive,hamilton2014compressed}.  Segmented policies switch
at a checkpoint and reset suffix memory \citep{kwon2024latent}; controlled
sensing selects actions for model discrimination \citep{nitinawarat2015controlled}.
Our passive logger instead controls the suffix: marginal decoding remains
well conditioned while target-relevant evidence becomes exponentially rare.

Additional training observations can aid POMDP learning
\citep{lee2023hindsight,guo2024multiple}.
Costly observation and active learning study measurement value and the balance
between target and nuisance-parameter estimation
\citep{nam2021observation,sloman2024bayesian}.
Our finite-sample budget laws quantify this balance under overwriting:
early reads save target evidence, while reset outcomes calibrate the sensor.

Future-dependent value functions and history instruments support OPE
without full-state recovery \citep{uehara2023future,zhang2024curses}.
Their memory-dependent extensions condition on both state and memory.
Our positive result uses common cross-model representations, global second
moments, and coverage on the original state. These give a finite-class
maximum-likelihood guarantee. The common-operator pair separates marginal
and history-conditioned representation costs exponentially for the same value.
Bridge and proxy methods also identify values under other observation
structures \citep{nair2021spectral,shi2022confounded}.
Under rank, proxy, and coverage conditions, \citet{kuang2025order} give
polynomial-in-$H$ bounds for memoryless targets in confounded tabular POMDPs.
Here the logger uses only public history, so the exponential barrier arises
without latent confounding.

Revealing-POMDP lower bounds also arise in online learning
\citep{chen2023revealing}; OPE identification with latent-state-dependent
behavior was studied by \citet{tennenholtz2020ope}.

\section{Discussion and conclusion}

A history-dependent logger can reveal states while erasing the evidence that
the target preserves. This mismatch persists with common, known observation-only
operators. Under coverage, common low-cost value representations valid at every history
suffice for OPE. The hard pair makes their cost exponentially larger than
marginal decoding of the same value.

On the four-action continuum, an early-read rule with a fixed known channel and
unit cost can simulate any delayed-read rule at the same expected cost.
With unknown sensor bias, early reads alone remain exponentially costly.
Allocating fixed positive expected budgets per trajectory to both early reads
and post-reset calibration gives sample complexity independent of $H$.
Data that do not contain the original bit can therefore make earlier measurements
useful for target evaluation.

\subsection*{AI use statement}

Generative AI tools assisted with conceptual development, mathematical claims
and proofs, experiment design and analysis, code and figures, literature
search, and manuscript editing. The authors take responsibility for the final
content.

\subsection*{Reproducibility statement}

The results are proved analytically. The supplementary package contains two
independent implementations of the four-action construction, an implementation
of the parameterized family, and two independent implementations of the
product-memory variant. They use exact rational arithmetic for probabilities
and matrix calculations. Additional checks cover the common-operator construction and the
complete logs with a noisy readout. They verify timing, observable-law
construction, Gram matrices, and closed-form identities at the stated finite
horizons; the arbitrary-horizon conclusions follow from the analytic proofs.
Appendix~\ref{app:readout-calibration} supplies the unknown-bias proofs and
an equal-budget comparison, with exact checks of the complete-log distributions, simulation code, fixed
seeds, and per-repetition results in the supplementary package.
Appendices~\ref{vc:appendix} and~\ref{app:short_probe} prove the value-representation
result and its examples; the package includes exact finite-instance checks.

\bibliographystyle{iclr2027_conference}
\bibliography{references}

\appendix

\section{Complete construction and event timing}
\label{app:legality}

Let $\mathcal S_1=\{\star\}$ and
$\mathcal S_h=\{\mathsf{H}_0,\mathsf{H}_1,\mathsf{R}_0,\mathsf{R}_1\}$ for $2\le h\le H$.  Observations are
stage tagged, and the within-stage order is
$S_h\to O_h\to R_h\to A_h\to(S_{h+1},M_{h+1})$ for $h<H$.
Write $q_0=1/10$, $q_1=9/10$, and let $p_{ik}$ denote the entry in row $i$,
column $k$ of \cref{eq:seed}.  The complete preterminal transition and memory
update are
\begin{center}
\small
\begin{tabularx}{\textwidth}{@{}P{0.12\textwidth}P{0.16\textwidth}P{0.16\textwidth}YP{0.16\textwidth}@{}}
\toprule
Stage & Current state & Action & Next physical state & Next memory \\
\midrule
$h=1$ & $\star$ & $c_k$ & $\mathsf{H}_B$, $B\sim\Ber(p_{ik})$ in $\mathsf{M}_i$ & $M_2=k$ \\
$h=1$ & $\star$ & $r_j$ & $\mathsf{R}_B$, $B\sim\Ber(q_j)$ & $M_2=j$ \\
$2\le h<H$ & $\mathsf{H}_b$ or $\mathsf{R}_b$ & $c_k$ & $\mathsf{H}_b$ deterministically & $M_{h+1}=M_h$ \\
$2\le h<H$ & any state & $r_j$ & $\mathsf{R}_B$, $B\sim\Ber(q_j)$ & $M_{h+1}=j$ \\
\bottomrule
\end{tabularx}
\end{center}
Thus a preserve action from $\mathsf{R}_b$ returns to $\mathsf{H}_b$ rather than retaining the
reset-state type.  A first-stage reset is model independent and remains so
forever, because the seed table is the only model-dependent kernel.

At $h=1$ the observation is $\mathrm{start}_1$.  For $2\le h<H$, hidden
states emit $\bot_h$, while $\mathsf{R}_b$ emits a Bernoulli symbol with parameter
$1/10$ for $b=0$ and $9/10$ for $b=1$.  At layer $H$, both $\mathsf{H}_b$ and $\mathsf{R}_b$
emit $b_H$.  The common known reward is $R(o)=\one\{o=1_H\}$; all earlier
rewards vanish.  Reward is emitted before the terminal action $A_H$.  Both
policies then use the same uniform terminal kernel, which belongs to the full
trajectory law but not to $F_h$ and induces no transition.

The public memory initially equals $M_1=-1$.  The table makes it a
deterministic function of public action history under both models.  It records
the commanded reset label, never the unobserved reset outcome.

\paragraph{Marginal versus history-conditional prediction.}
For $H=3$ in $\mathsf{M}_0$, let $\tau_1=(\mathrm{start}_1,c_0)$.  Then $M_2=0$ and
the belief on $(\mathsf{H}_0,\mathsf{H}_1)$ is $(9/10,1/10)$.  The logger uses $K_0$, so
\[
 \Pp^{\pi_b}(O_3=1_3\mid\tau_1)
 =\tfrac14\tfrac1{10}+\tfrac12\tfrac1{10}+\tfrac14\tfrac9{10}
 =\tfrac3{10}.
\]
By Bayes' rule, $\Pp(M_2=1\mid S_2=\mathsf{H}_0)=2/5$ and
$\Pp(M_2=1\mid S_2=\mathsf{H}_1)=4/5$.  Averaging over these memory distributions gives
$\Pp^{\pi_b}(O_3=1_3\mid S_2=\mathsf{H}_0)=71/200$ and
$\Pp^{\pi_b}(O_3=1_3\mid S_2=\mathsf{H}_1)=137/200$.
Hence $U^{\rm obs}_{F,2}b_S(\tau_1)$ assigns probability
$(9/10)(71/200)+(1/10)(137/200)=97/250$ to $O_3=1_3$, rather than $3/10$.
Since $O_2=\bot_2$ throughout this calculation, the discrepancy already occurs
for the observation-only suffix $G_2$.

\section{Exact behavior and lower-bound proof}

\begin{lemma}[Action overlap and memory symmetry]
The behavior memory is uniform on $\{0,1\}$ after initialization.  At every
preterminal layer, each reset label has marginal probability $3/8$, total
preserve probability is $1/4$, and $\cA=8$.
\end{lemma}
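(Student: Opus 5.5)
The lemma is a direct computation from the logger table \eqref{eq:logger} and the memory-update rule in the construction table of Appendix~\ref{app:legality}. I will prove the three claims in the order memory, then marginals, then overlap, by induction over the preterminal layers.

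\textbf{Memory symmetry.} At $h=1$ the logger is at memory $-1$ and uses $U$. The labels $j=0$ and $j=1$ each receive probability $1/8+3/8=1/2$, whether reached through $c_j$ or $r_j$. Since $M_2=j$ in both cases, $M_2$ is uniform on $\{0,1\}$.

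For the inductive step, suppose $M_h$ is uniform at some $2\le h<H$. Under $K_m$, the logger preserves with total probability $1/4$, which leaves $M_{h+1}=m$. It resets to $r_0$ with probability $1/2$ or $1/4$ and to $r_1$ with probability $1/4$ or $1/2$, according to $m$; either reset writes the label it commands. Hence
\[
\Pp(M_{h+1}=1)=\tfrac14\Pp(M_h=1)+\tfrac14\Pp(M_h=0)+\tfrac12\Pp(M_h=1)=\tfrac12 .
\]
By symmetry under swapping $0\leftrightarrow1$, $M_{h+1}$ is again uniform.

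A key point is that the memory update and the logger's kernel depend only on public actions. The memory process therefore has the same law in both models and does not depend on the latent state. This is why the induction needs no information about the hidden bit.

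\textbf{Marginals.} At $h=1$, the table gives each reset label probability $3/8$ and total preserve probability $1/8+1/8=1/4$ directly. At $2\le h<H$, every row of $K_m$ also has total preserve probability $1/4$, so this holds regardless of memory. Averaging over the uniform memory gives
\[
\Pp(A_h=r_0)=\tfrac12\cdot\tfrac12+\tfrac12\cdot\tfrac14=\tfrac38 ,
\]
and likewise $\Pp(A_h=r_1)=3/8$.

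\textbf{Overlap.} The constant $\cA$ is the maximum over histories of the inverse action probability. Every entry of $U$, $K_0$, and $K_1$ is at least $1/8$, and the entry $1/8$ is attained, for example by $c_0$. The terminal kernel is uniform over four actions, with inverse probability $4<8$. Hence $\cA=8$.

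There is no real obstacle here. The one point needing care is to state explicitly that memory evolves independently of the model and of the latent state, since that justifies the induction.
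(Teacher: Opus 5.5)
Your proof is correct and follows the paper's argument: uniform memory is invariant because preserve keeps memory and the $\pm1/8$ reset deviations from $3/8$ cancel on averaging, which yields the $3/8$ and $1/4$ marginals, and the smallest logger entry $1/8$ gives $\cA=8$. You are more explicit than the paper in two places: you spell out the $h=1$ base case under $U$, and you check that the uniform terminal kernel (inverse probability $4$) does not raise $\cA$.
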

\begin{proof}
The smallest entry of \cref{eq:logger} is $1/8$.  If memory is uniform,
kernel symmetry sends equal mass to the two next-memory labels: the
$+1/8$ and $-1/8$ deviations from $3/8$ cancel after averaging.  Preserve retains
memory, proving invariance and the marginal claims.
\end{proof}

\begin{lemma}[Exact target value and behavior divergence]
\Cref{eq:value-gap,eq:kl-main} hold, and
$\TV(P_b^0,P_b^1)=\frac12 4^{-(H-1)}$.
\end{lemma}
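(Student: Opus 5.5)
The proof splits into the target value under $\pi_e$ and the behavior-law divergences, computed by conditioning on the first reset time.

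\emph{Target value.} Under $\pi_e$ every preterminal action is a preserve command. At $h=1$, $\pi_e$ chooses $c_0$ or $c_1$ uniformly, and the seed bit $B\sim\Ber(p_{ik})$ is drawn from \cref{eq:seed}. For $2\le h<H$, preserve maps $\mathsf{H}_b$ deterministically to $\mathsf{H}_b$, so $B$ survives to layer $H$. There the observation is $b_H$ and the reward is $\one\{B=1\}$; all earlier rewards vanish. Hence
\[
J_{\mathsf{M}_i}(\pi_e)=\tfrac12(p_{i0}+p_{i1}),
\]
which gives $\tfrac12(1/10+2/5)=1/4$ and $\tfrac12(3/5+9/10)=3/4$.

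\emph{Behavior laws.} Write the logged trajectory as a sequence of conditional kernels and compare them factor by factor across the two models. Only the seed kernel differs. By the previous lemma, the logger's action kernels are known and common, and at each preterminal layer the total preserve probability is $1/4$ whatever the memory. Let $T$ be the first preterminal step at which a reset $r_j$ is commanded, with $T=\infty$ on $E_H$. The event $\{T=t\}$ and the action labels up to $t$ are common to both models. Hidden states emit $\bot_h$ regardless of the bit, so the observations before $T$ are also common.

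At the reset, the state is overwritten by $\mathsf{R}_{B'}$ with $B'\sim\Ber(q_j)$ independent of the prior bit. After that the dynamics are the common reset, preserve, emission and logger kernels applied to a bit that never involves $p_{ik}$. Therefore, conditionally on $T<\infty$ and the common prefix, the two laws coincide.

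On $E_H$, the only model-dependent coordinate is the terminal observation $B_H=B$, with $B\sim\Ber(p_{ik})$, where $k$ is the first label. The first action has kernel $U$, so given a preserve it is uniform over $c_0,c_1$. The event $E_H$ has probability $4^{-(H-1)}$ by the memory-free preserve probability. The final uniform terminal action is common.

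\emph{Divergence computations.} By the KL chain rule, only the factor for the terminal observation on $E_H$ contributes:
\[
\KL(P_b^0\Vert P_b^1)=4^{-(H-1)}\cdot\tfrac12\bigl[\operatorname{kl}(1/10\Vert 3/5)+\operatorname{kl}(2/5\Vert 9/10)\bigr].
\]
The two binary divergences are
\[
\operatorname{kl}(1/10\Vert 3/5)=\tfrac1{10}\log\tfrac16+\tfrac9{10}\log\tfrac94,
\qquad
\operatorname{kl}(2/5\Vert 9/10)=\tfrac25\log\tfrac49+\tfrac35\log 6.
\]
Summing them, the $\log 6$ terms combine to $\tfrac12\log 6$ and the $\log(9/4)$ terms combine to $\tfrac1{10}\log\tfrac94$. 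Simplifying this sum to $\tfrac12\log\tfrac{27}{2}$ gives $\kappa=\tfrac14\log\tfrac{27}{2}$.

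For TV, the density difference is supported on $E_H$-paths. Hence
\[
\TV(P_b^0,P_b^1)=4^{-(H-1)}\cdot\tfrac12\bigl[\,|1/10-3/5|+|2/5-9/10|\,\bigr]=\tfrac12\,4^{-(H-1)}.
\]

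\emph{Main obstacle.} The delicate step is justifying that the post-reset conditional laws agree exactly. This includes the subtlety that the logger's memory is correlated with the pre-reset bit. That correlation is harmless because the memory is a deterministic function of the common action prefix and never sees the bit. So everything after $T$ is a known kernel of public history and the fresh bit $B'$. I would make this precise by writing both likelihoods as products and checking that each non-seed factor depends only on public history and post-reset states, whose laws are model-free.
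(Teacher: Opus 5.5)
Your proof is correct and follows the paper's argument: the logged laws coincide once a reset occurs, model information survives only on $E_H$ (probability $4^{-(H-1)}$, uniform first preserve label), and the KL chain rule together with the Bernoulli TV gaps of $1/2$ gives the constants. One arithmetic slip needs fixing: the $\log\frac94$ coefficients combine to $\frac{9}{10}-\frac25=\frac12$, not $\frac1{10}$, and it is $\frac12\log 6+\frac12\log\frac94=\frac12\log\frac{27}{2}$ that yields $\kappa=\frac14\log\frac{27}{2}$.
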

\begin{proof}
Averaging the two entries in each row of \cref{eq:seed} gives target
values $1/4$ and $3/4$, respectively.
The behavior laws coincide after the first reset, while the no-reset event has
probability $4^{-(H-1)}$.  Conditioned on it, the two terminal Bernoulli pairs
are $(1/10,3/5)$ and $(2/5,9/10)$ under equally likely first preserve labels.
The KL chain rule yields \cref{eq:rare-kl}; both Bernoulli pairs have total
variation $1/2$.
\end{proof}

\section{History coverage and public-memory anchor}
\label{app:coverage}

Write $v(p)=(1-p,p)^\top$.  The $h=2$ hidden block under $\mathsf{M}_0$ is
\[
 \frac18[v(1/10)v(1/10)^\top+v(2/5)v(2/5)^\top]
 =\frac1{800}\begin{pmatrix}117&33\\33&17\end{pmatrix}.
\]
Under $\mathsf{M}_1$ its diagonal entries are exchanged, so its eigenvalues agree.
In particular, under $\mathsf{M}_0$,
\begin{equation}
 \Sigma^{\rm hid}_{H,2}=\frac1{800}
 \begin{pmatrix}117&33\\33&17\end{pmatrix},\qquad
 \lambda_{\min}=\frac{67-\sqrt{3589}}{800}.
 \label{eq:history-certificate}
\end{equation}
The reset block is
$\frac38[v(1/10)v(1/10)^\top+v(9/10)v(9/10)^\top]$ and has minimum
eigenvalue $6/25$.  At $h\ge3$, retaining only histories produced by a reset
followed by a preserve gives hidden-block lower bound
\[
 \frac3{32}[v(1/10)v(1/10)^\top+v(9/10)v(9/10)^\top],
\]
whose minimum eigenvalue is $3/50$.  Conditioning on the intervening noisy
observation can only add posterior second moment in PSD order.  Hence
\cref{eq:history-certificate} is the uniform worst case.

For the anchor, direct Bayes calculation at $h=2$ gives
$\eta_2=2/5$ under both candidates.  For later cuts, define
$q_{t,m}=\Pp(B_t=1\mid M_t=m)$ over the complete hidden/reset population and
set $d_t=q_{t,1}-q_{t,0}$.  At $t=2$,
\[
 (q_{2,0},q_{2,1})=
 \begin{cases}
 (1/10,31/40),&\mathsf{M}_0,\\
 (9/40,9/10),&\mathsf{M}_1,
 \end{cases}
 \qquad d_2=27/40.
\]
The common kernels imply
\[
 q_{t+1,0}=\tfrac14q_{t,0}+\tfrac3{40},\qquad
 q_{t+1,1}=\tfrac14q_{t,1}+\tfrac{27}{40},
\]
so $d_{t+1}=d_t/4+3/5$. Starting from $d_2=27/40$, this sequence
increases toward $4/5$, so $d_t\ge27/40$ for every $t\ge2$.
Under uniform memory, Bayes' rule gives
\begin{align*}
 &\Pp(M=1\mid B=1)-\Pp(M=1\mid B=0)\\
 &\qquad=\frac{q_1-q_0}{(q_0+q_1)(2-q_0-q_1)}
 \ge q_1-q_0.
\end{align*}
Preserve selection is independent of memory, which proves
\cref{eq:anchor} at every hidden cut.

\section{Revealing certificates}
\label{app:revealing}

For two distinct probability columns $P_0,P_1$, the posterior Gram has
eigenvalues one and
\begin{equation}
 \lambda(P_0,P_1)=\frac12\sum_y
 \frac{(P_0(y)-P_1(y))^2}{P_0(y)+P_1(y)}
 \ge \TV(P_0,P_1)^2.
 \label{eq:two-column}
\end{equation}
Direct expansion gives the identity, and Cauchy--Schwarz gives the TV bound. A
positive-definite symmetric two-by-two row-stochastic matrix has inverse one-norm equal to the
reciprocal of its nontrivial eigenvalue.

At an intermediate hidden cut, composing reset and emission channels gives
$O_{h+1}\mid r_0\sim\Ber(9/50)$ and
$O_{h+1}\mid r_1\sim\Ber(41/50)$, whose TV is $16/25$.  Combining this with
the logger and anchor gaps proves \cref{eq:revealing-factorization}.  At $h=H-1$,
preservation reveals the original bit.  Because $\eta_h>0$ and both the reset
and emission channels are ordered from label zero to label one, the
memory-mediated reset term has the same sign, giving TV at least
$1/4+(1/5)(2/5)=33/100$.  Reset states have current-observation TV $4/5$.
The current observation gives disjoint support to hidden and reset blocks,
proving the stated observation-only inverse bound.

Within either binary block, the posterior given the observation-only future
is the conditional expectation of the posterior given the action-containing
future.  Hence the action-containing posterior Gram dominates the
observation-only Gram in PSD order.  Both are symmetric row-stochastic
two-state matrices with the same unit eigenvector, so adding action coordinates
cannot decrease the nontrivial eigenvalue.

For the state-prior-weighted alternative, recall
\cref{eq:weighted-future-gram}.  Within a current-observation block, normalize
the state prior to $q=(q_0,q_1)$ and write $D_q=\operatorname{diag}(q)$.
Current-observation support makes the full matrix block diagonal, and
normalizing the prior within a block leaves its weighted block unchanged.
The block priors are uniformly balanced.  At $h=2$, the hidden-block prior
is $(3/4,1/4)$ under $\mathsf{M}_0$ and its reversal under $\mathsf{M}_1$; the reset block is
balanced.  If $b_t=\Pp(B_t=1)$ in the complete population, then
$b_2\in\{7/16,9/16\}$ and
\[
 b_{t+1}=\tfrac14 b_t+\tfrac38.
\]
Thus every later hidden-block prior lies in $[7/16,9/16]$, while every later
reset block remains balanced.  Consequently all blocks satisfy
$q_0q_1\ge3/16$ and $q_{\max}/q_{\min}\le3$.  Put
\[
 B=D_q^{1/2}U^\top\operatorname{diag}(Uq)^{-1}UD_q^{1/2}.
\]
Its nontrivial eigenvalue obeys
\[
 \lambda_q\ge4q_0q_1\TV(P_0,P_1)^2\ge\frac{48}{15625}.
\]
The normalized weighted block is
\[
 \Sigma_F^q=D_qU^\top\operatorname{diag}(Uq)^{-1}U
 =D_q^{1/2}BD_q^{-1/2}.
\]
Hence
\[
 \|(\Sigma_F^q)^{-1}\|_1
 \le\sqrt6\lambda_q^{-1}
 \le\frac{15625\sqrt6}{48}<798.
\]
Adding future actions cannot weaken this blockwise bound.

\section{Sharpness constants}
\label{app:sharpness}

For the fixed family, $n_H^*(\varepsilon,\delta)$ is the smallest $n$ for
which some estimator satisfies
\begin{equation}
 \sup_{\theta\in[0,1/10]}
 \Pp_\theta\!\left(
 |\widehat J-J_\theta(\pi_e)|>\varepsilon
 \right)\le\delta.
 \label{eq:minimax-definition}
\end{equation}
The estimator attaining the upper bound is
\begin{equation}
 \widehat J_{\rm HT}=\frac1n\sum_{i=1}^n
 \frac{\one\{E_{H,i}\}R_{i,H}}{4^{-(H-1)}}
 \label{eq:ht}
\end{equation}
because $E_H$ has probability $4^{-(H-1)}$ and its conditional reward law
equals the target reward law.

\paragraph{Uniform certificates for the continuous family.}
\label{app:continuum-certificates}
For \cref{eq:continuum}, the $h=2$ hidden history block has determinant
$(p_{\theta,1}-p_{\theta,0})^2/64=1/256$ and trace at most $1/4$,
so its minimum eigenvalue is at least $1/64$. The argument in
\cref{app:coverage} gives the same lower bound at later cuts. Also,
\[
 \eta_2=\frac{p_{\theta,1}-p_{\theta,0}}
 {(p_{\theta,0}+p_{\theta,1})(2-p_{\theta,0}-p_{\theta,1})}\ge\frac12.
\]
The anchor recurrence in \cref{app:coverage} preserves this bound.
The factorization in \cref{eq:revealing-factorization} gives next-observation
TV at least $2/25$. Thus $C_A=8$, $C_H\le64$, and
$C_F^{\rm obs}\le625/4$ uniformly over $\theta$; posterior refinement gives
the same bound for $C_F$.

\paragraph{Matching risk bounds.}

For \cref{eq:continuum}, let $q_H=4^{-(H-1)}$.  Taking
$\theta_0=0$ and $\theta_1=4\varepsilon$ gives target gap
$4\varepsilon$.  For $\varepsilon\le1/40$, Bernoulli
$\mathrm{KL}\le\chi^2$ gives one-trajectory KL at most
$100q_H\varepsilon^2$.  Pinsker and Le Cam imply
\[
 n_H^*(\varepsilon,1/3)>
 \frac{1}{800}q_H^{-1}\varepsilon^{-2}.
\]
For \cref{eq:ht},
$\E X_i=J_\theta(\pi_e)$,
$\operatorname{Var}(X_i)\le q_H^{-1}$, and
$0\le X_i\le q_H^{-1}$.  Bernstein gives
\[
 \Pp(|\widehat J_{\rm HT}-J_\theta|>\varepsilon)
 \le2\exp(-3nq_H\varepsilon^2/8).
\]
Thus
\[
 \frac1{800}4^{H-1}\varepsilon^{-2}
 <n_H^*(\varepsilon,1/3)
 \le\frac{8\log6}{3}4^{H-1}\varepsilon^{-2}+1.
\]

\section{A parametric overwrite--reveal family}
\label{app:mechanism-family}

Fix $0<\rho<1$ and $0<\gamma<(1-\rho)/2$. In model $\mathsf{M}_i$, a preserve
action $c_k$ at step $1$ creates bit one with probability $p_{ik}\in(0,1)$,
where $p_{i1}>p_{i0}$. Reset $r_j$ creates a fresh bit with success
probability $q_j$, where $0<q_0<q_1<1$. Before the terminal step,
reset state $\mathsf{R}_b$ emits one with probability $e_b$, where $0<e_0<e_1<1$.
The two target seed averages are distinct. Initially the
logger assigns $\rho/2$ to each preserve action and $(1-\rho)/2$ to each
reset.  In public memory state $m$, it keeps probability $\rho/2$ on each
preserve action and assigns $(1-\rho)/2+\gamma$ to $r_m$ and
$(1-\rho)/2-\gamma$ to $r_{1-m}$.  All later dynamics, memory updates, the
terminal kernel, and the target are as in the main four-action construction.
Put
$\delta_q=q_1-q_0$, $\delta_e=e_1-e_0$, and
$D_*=\frac12\sum_{k=0}^1\operatorname{kl}(p_{0k}\Vert p_{1k})$.

For $2\le h\le H-2$, next-observation separation and passive model
information factor as
\begin{equation}
 \TV_{\rm next}=2\gamma|\eta_h|\delta_q\delta_e,
 \qquad
 \KL(P_b^0\Vert P_b^1)=\rho^{H-1}D_*.
 \label{eq:mechanism-box}
\end{equation}
Here $\TV_{\rm next}$ compares the laws of $O_{h+1}$ given
$S_h=\mathsf{H}_0$ or $S_h=\mathsf{H}_1$ within one model.
The KL compares complete logs across models.
At $h=H-1$, the observation TV is
$\rho+2\gamma\eta_h\delta_q$, because preservation directly exposes the old
bit.  For each fixed parameter choice satisfying the preceding strict
inequalities, the history Gram has a positive lower bound that is uniform in
$H$.

\section{A public-memory lift variant}
\label{app:product-memory}

Fixing a finite-state controller induces a standard product process on
environmental and controller states \citep{andriushchenko2022inductive}, and
observed controller memory has been used in offline POMDP analysis
\citep{simao2023safe}.  \citet[Thm.~5]{zhang2025tractability} applies after
such a lift when the lifted model class is realizable and its overlap,
history-coverage, and revealing coefficients are bounded.
The variant separates the two effects of public-memory lifting.
A writer coordinate supplies linearly independent belief vectors
within each public-memory block, keeping lifted history coverage bounded.
The target-relevant physical bit, however, affects a lifted product-POMDP
suffix only along the exponentially rare all-preserve path, so lifted future
revealing becomes exponentially ill conditioned.

\begin{proposition}[Family-specific public-memory lift separation]
\label{prop:product-memory}
For every $H\ge3$, there is an exactly realizable two-model variant of
\cref{thm:main} with at most four physical states per layer, eight actions, a
known logger with two public memory states after initialization, and a
memoryless target.  Its original marginal coefficients satisfy, uniformly in
$H$,
\begin{equation}
 C_A=32,\qquad C_H<113,\qquad
 C_F^{\rm obs},C_F<1372,\qquad C_F^p<4481.
 \label{eq:product-original}
\end{equation}
In the standard lift $\bar S_h=(S_h,M_h)$ and
$\bar O_h=(O_h,M_h)$, the logger is memoryless and the lifted history
coefficient obeys
\begin{equation}
 \bar C_H\le400.
 \label{eq:product-history}
\end{equation}
For a hidden-state cut $2\le h\le H-1$, put $L=H-h$.  The action-containing
and action-deleted lifted future operators have exact inverse coefficients
\begin{align}
 \left\|(\bar\Sigma_{F,h}^{\rm act})^{-1}\right\|_1
 &=4^L, \label{eq:product-action}\\
 \left\|(\bar\Sigma_{F,h}^{(O,M)})^{-1}\right\|_1
 &=\frac{1547}{480}\,4^L. \label{eq:product-observation}
\end{align}
Conditioning on $M_h$ and omitting all future memory observations instead
gives the conditional two-column coefficient
$\|(\Sigma_{G\mid M,h})^{-1}\|_1=4^{L+1}-16/25$; it is not the lifted
observation coefficient because future memory is part of $\bar O$. The target
values and the KL divergence between the complete logged laws are
\begin{equation}
 J_{\mathsf{M}_0}=\frac14,\qquad J_{\mathsf{M}_1}=\frac34,\qquad
 \KL(P_b^0\Vert P_b^1)
 =\frac14\log\frac{27}{2}\,4^{-(H-1)}.
 \label{eq:product-experiment}
\end{equation}
\end{proposition}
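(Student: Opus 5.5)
The plan is to build the variant explicitly and then check its claims in three groups. First come the experiment-level quantities in \cref{eq:product-experiment}, which reuse the rare-survival argument. Second come the original marginal coefficients in \cref{eq:product-original}, which reuse Appendices~\ref{app:coverage} and~\ref{app:revealing}. Third come the lifted coefficients in \cref{eq:product-history,eq:product-action,eq:product-observation}, which are new exact two-column computations.

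\textbf{Construction.} I would use actions $(a,w)$, where $a\in\{c_0,c_1,r_0,r_1\}$ is the physical command of \cref{sec:construction} and $w\in\{0,1\}$ is a writer coordinate with no physical effect. The seed table \cref{eq:seed}, the reset channels and the emissions are kept unchanged. At step $1$ the writer selects the public memory $M_2\in\{0,1\}$ together with the command label. Later, resets rewrite memory as in the main construction and preserves keep it.

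The logger's kernel is a refinement of \cref{eq:logger}. It keeps total preserve mass $1/4$ and the memory-biased reset split, and it splits the mass across $w$. Some entries become $1/32$; this is where $C_A=32$ would come from. In the lift $\bar S_h=(S_h,M_h)$, $\bar O_h=(O_h,M_h)$, the logger then depends only on the current lifted observation, so it is memoryless.

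\textbf{Experiment-level quantities.} The target ignores $w$, so its values are the seed averages $1/4$ and $3/4$. The two models differ only in the seed kernel, and every later kernel is common. The rare-event argument of \cref{eq:rare-event} therefore applies verbatim: the model difference is visible only on $E_H$, which has probability $4^{-(H-1)}$. On $E_H$ the first preserve label is uniform, so the KL chain rule gives the same $\kappa=\tfrac14\log\tfrac{27}{2}$.

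\textbf{Original marginal coefficients.} For the unlifted coefficients I would rerun the arguments already in the paper with the new constants:
\begin{itemize}
\item the Bayes anchor $\eta_2$ and the recursion $d_{t+1}=d_t/4+\mathrm{const}$ from Appendix~\ref{app:coverage};
\item the factorization \cref{eq:revealing-factorization}, with the logger gap now reduced by the writer split;
\item the bound \cref{eq:two-column};
\item posterior refinement to pass from $C_F^{\rm obs}$ to $C_F$;
\item the similarity-transform bound of Appendix~\ref{app:revealing} for $C_F^p$.
\end{itemize}
Each step only changes rational constants, which should give the numbers in \cref{eq:product-original}.

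\textbf{Lifted history coverage.} The previous action type and the current $M_h$ partition histories into blocks, so the lifted Gram is block diagonal. Within a memory block, the writer coordinate separates histories that lead to different seed beliefs. This gives two linearly independent belief vectors in each $2\times2$ block, each with mass bounded below, and computing the worst block should give $\bar C_H\le400$.

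\textbf{Lifted revealing (main step).} Fix a hidden lifted state $(\mathsf H_b,m)$ and put $L=H-h$. Given $m$, the future logger actions no longer depend on $b$, because the memory is now part of the conditioning state. The reset commands are therefore independent of $b$. Every path that contains a reset has the same probability under $b=0$ and $b=1$, since a reset overwrites the bit.

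The two columns thus differ only on the all-preserve suffix, which has probability $4^{-L}$ and on which the terminal observation equals $b$. The coefficients then follow from the two-column formula \cref{eq:two-column}:
\begin{itemize}
\item \emph{Action-containing future.} Preserves are visible, so the all-preserve outcomes are disjoint between the two columns. This gives $\lambda=4^{-L}$ exactly, and hence $4^L$ in \cref{eq:product-action}.
\item \emph{Action-deleted future with observed $(O,M)$.} The all-preserve outcome, namely blanks followed by $b_H$, is pooled with reset paths that produce the same observation and memory string. Examples are a reset at $H-1$ that keeps memory unchanged while the next state is hidden, or a late reset whose outcome mimics $b_H$. The eigenvalue then becomes $4^{-L}$ times the ratio $\tfrac12\sum(\Delta)^2/(\text{pooled mass})$.
\end{itemize}

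I expect the hardest step to be obtaining the exact constant $1547/480$, together with showing that it does not depend on $h$ or $m$. I would first enumerate the observation and memory strings that can coincide with the all-preserve string, check that their pooled mass scales exactly as $4^{-L}$ times a fixed constant, and confirm the value with the exact-rational code. Deleting future memory while conditioning on $M_h$ keeps only the terminal bit observation. That yields the separately stated coefficient $4^{L+1}-16/25$, and I would record it but not use it for the lifted coefficient.
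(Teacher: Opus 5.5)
\textbf{The gap is in the construction, not the outline.} Your outline matches the paper's: rare all-preserve survival for the values and KL, the decomposition $P_b=(1-q)R+qQ_b$ with $q=4^{-L}$ for the action-containing lifted operator, the two-column eigenvalue formula, and posterior refinement. However, the variant you propose cannot satisfy the proposition. You make the writer inert after step~1: later resets write $M_{h+1}=j$ and preserves keep memory. Consider any cut $3\le h\le H-1$. Every history whose last action is a reset and has $M_h=m$ must have used $r_m$, so its lifted belief on $\{(\mathsf R_0,m),(\mathsf R_1,m)\}$ is exactly $v(q_m)$. That $2\times2$ block of the lifted history Gram is therefore rank one, so $\bar C_H=\infty$ and \cref{eq:product-history} fails. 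Your justification that ``the writer separates histories leading to different seed beliefs'' only covers $h=2$.

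The paper lets every action write memory: $\pi_b(c_{k,m}\mid\ell)=\tfrac18a(m,\ell)$ and $\pi_b(r_{j,m}\mid\ell)=K_\ell(r_j)a(m,j)$, with $a(m,x)\in\{3/4,1/4\}$. This has two effects. Each fixed-writer reset block receives both $v(1/10)$ and $v(9/10)$, each with mass at least $1/16$. In addition, a reset of either label followed by a preserve writing $m$ places both reset beliefs in the hidden block with memory $m$, which gives $(16/25)/256=1/400$.

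\textbf{The exact lifted constants also depend on this memory process.} Only a reset at the final decision can share a row with the all-preserve string. Earlier resets emit a non-blank noisy symbol before step $H$, so ``a late reset that mimics $b_H$'' means only the step-$(H-1)$ reset. In the paper the observed final writer $z$ splits that collision into two cases:
\begin{itemize}
\item $z=x$, with $(p_c,\{u_0,u_1\})=(3/16,\{11/32,3/32\})$;
\item $z\ne x$, with $(1/16,\{21/160,29/160\})$.
\end{itemize}
The corresponding values $\phi=15/224$ and $75/7072$ sum to $120/1547$, which yields $\frac{1547}{480}4^L$.

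Under your dynamics, preserve and $r_m$ both keep memory $m$, while $r_{1-m}$ is separated by the memory path. The same enumeration then has $p_c=1/4$ and reset mass $\tfrac12\Ber(q_m)$, giving $\phi=75/644$ and coefficient $\frac{161}{75}4^L$. Similarly, $4^{L+1}-16/25$ arises because each preserve flips memory with probability $1/4$, so the memory at the last decision equals $M_h$ with probability $\tfrac12+\tfrac12\,2^{-(L-1)}$. With memory frozen you would instead get $\frac{96}{25}4^L$.

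\textbf{Smaller corrections.}
\begin{itemize}
\item In the paper's variant the anchor recursion is $d_{t+1}=d_t/8+3/10$ with $d_2=27/80$, not $d_t/4+\mathrm{const}$.
\item The loss in the revealing factorization enters through the halved anchor $d_{h-1}/2\ge27/160$, not through the reset-command gap, which stays $1/4$.
\item $C_A=32$ comes from the entries $\tfrac18\cdot\tfrac14$ on memory-flipping preserves.
\end{itemize}
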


\begin{proof}
\medskip\noindent\textbf{Variant and timing.}
Retain the physical states, emissions, reset kernels, terminal reward, and
seed table of \cref{eq:seed}.  Replace each physical command by an action with
a writer bit, $\mathcal A=\{c_{k,m},r_{j,m}:k,j,m\in\{0,1\}\}$: the first
subscript is physical and the second sets $M_{h+1}=m$.  Let $M_1=-1$ and let
$a(m,x)$ equal $3/4$ if $m=x$ and $1/4$ otherwise.  The first line below is
the initial logger kernel; the second applies when the current memory is
$\ell\in\{0,1\}$:
\begin{align*}
 \pi_b(c_{k,m})&=\tfrac18a(m,k),&
 \pi_b(r_{j,m})&=\tfrac38a(m,j),\\
 \pi_b(c_{k,m}\mid\ell)&=\tfrac18a(m,\ell),&
 \pi_b(r_{j,m}\mid\ell)&=K_\ell(r_j)a(m,j),
\end{align*}
where $(K_0(r_0),K_0(r_1))=(1/2,1/4)$ and
$(K_1(r_0),K_1(r_1))=(1/4,1/2)$.  These formulas also define the logged
terminal action.  At each preterminal stage, the target assigns probability
$1/4$ to every $c_{k,m}$ and never resets; its terminal kernel is common and
memory independent.  Reward is emitted before that terminal action.  Thus
both candidates share every space and kernel except the seed table, and
$M_h$ is publicly reconstructible from past actions.

The smallest logger probability is $1/32$.  Memory is uniform after
initialization and, conditional on its current value, remains unchanged with
probability $5/8$ and flips with probability $3/8$.  After summing over the
writer bit, each physical preserve label has mass $1/8$, each initial reset
label has mass $3/8$, and total preserve probability is always $1/4$.

\medskip\noindent\textbf{Experiment and marginal certificates.}
The target averages the two physical seed labels and subsequently preserves,
which gives the values in \cref{eq:product-experiment}.  Let $E_H$ be the event
that all preterminal physical commands preserve.  Then
$\Pp_b(E_H)=4^{-(H-1)}$; after the first reset every remaining kernel is model
common.  Conditional on $E_H$, the two terminal Bernoulli pairs are
$(1/10,3/5)$ and $(2/5,9/10)$.  The KL chain rule gives
\cref{eq:product-experiment}.  Appending public memory and a common
conditional terminal-action kernel does not change this divergence.

For the original latent state, summing the writer coordinate recovers the
stage-two physical history blocks of the main construction. Under $\mathsf{M}_0$, the worst hidden
block is
\[
 \frac1{800}\begin{pmatrix}117&33\\33&17\end{pmatrix},
\]
with diagonal entries exchanged under $\mathsf{M}_1$.
Reset-then-preserve histories give the same later lower bound, proving
$C_H<113$.  In the complete hidden/reset population, let
$q_{t,m}=\Pp(B_t=1\mid M_t=m)$ and $d_t=q_{t,1}-q_{t,0}$.  Then
\[
 d_2=\frac{27}{80},\qquad d_{t+1}=\frac18d_t+\frac3{10},
\]
so $d_t\ge27/80$.  At $h=2$, the hidden-state memory anchor is $1/5$.
For $h\ge3$, the preserve writer channel transfers half of $d_{h-1}$ to the
hidden-block bit--memory contrast.  Uniform memory and Bayes' rule turn this
into a memory-given-bit anchor of at least $d_{h-1}/2\ge27/160$.  Combining this with
reset-command gap $1/4$ and reset/emission separation $16/25$ gives
next-observation TV at least $27/1000$.  The binary posterior-Gram identity
yields the intermediate hidden-block bound $10^6/729$. At the last preterminal cut,
direct preservation gives TV at least $1/4$; reset-state blocks have current
observation TV $4/5$ and disjoint support from hidden blocks.  Adding action
coordinates refines each binary posterior experiment, so
$C_F^{\rm obs},C_F\le10^6/729<1372$.  The normalized state priors obey
$q_0q_1\ge3/16$ and $q_{\max}/q_{\min}\le3$, hence
\[
 \lambda_q\ge4q_0q_1\TV^2\ge\frac{2187}{4{,}000{,}000}.
\]
The similarity transform and the same refinement argument give
$C_F^p\le4{,}000{,}000\sqrt6/2187<4481$, proving
\cref{eq:product-original}.

\medskip\noindent\textbf{Lifted history coverage.}
The lift has eight states at each noninitial layer.  At $h=2$, conditioning on
the first action separates the history Gram into memory and physical-state
blocks.  Up to bit reversal, its two hidden block types are
\[
 \frac1{3200}\begin{pmatrix}279&51\\51&19\end{pmatrix},\qquad
 \frac1{3200}\begin{pmatrix}189&81\\81&49\end{pmatrix}.
\]
Both have determinant $27/102400$ and trace at most $149/1600$, hence minimum
eigenvalue greater than $1/400$.  The $h=2$ reset blocks are, up to reversal,
$\frac1{800}\left(\begin{smallmatrix}183&27\\27&63\end{smallmatrix}\right)$
and have minimum eigenvalue greater than $1/400$.  At later cuts, for each
destination memory retain a reset of each physical label followed by any
preserve writing that memory.  The
reset event has probability at least $1/16$, as does the total mass of those
two preserve labels.  Since the reset beliefs have contrast eigenvalue
$16/25$, this gives $(16/25)/256=1/400$ for hidden blocks.  A fixed-writer
reset block receives each reset belief with mass at least $1/16$, giving
contrast eigenvalue at least $1/25$.  Omitted histories add positive
semidefinite terms, proving \cref{eq:product-history}.

\medskip\noindent\textbf{Lifted future operators in the product POMDP.}
Reset product blocks are separated by current-observation TV $4/5$, so their
inverse coefficient is at most $25/16$.  Every hidden-block coefficient
derived next is at least $4$, so the full-operator maxima come from hidden
blocks.
For
\[
 \bar F_h=((O_h,M_h),A_h,\ldots,A_{H-1},(O_H,M_H)),
\]
the old bit affects the suffix only when all $L$ physical commands preserve,
an event of probability $q=4^{-L}$.  Action coordinates separate
reset-containing from all-preserve outcomes, so within each observed-memory block the two
hidden columns have the form $P_b=(1-q)R+qQ_b$ with disjoint
$R,Q_0,Q_1$.  The posterior Gram has eigenvalues $1$ and $q$, proving
\cref{eq:product-action}.

Now fix $M_h$ and use only $G_h=(O_h,\ldots,O_H)$.  A reset at the final
decision is the only reset outcome that shares a terminal row with an
all-preserve outcome.  Direct aggregation gives the nontrivial eigenvalue of
the two-column posterior Gram
\[
 \lambda_{G\mid M}
 =\frac{4^{-(L+1)}}{1-4^{-(L-1)}/25}.
\]
Its reciprocal is the conditional two-column coefficient stated in the
proposition.  Omitting all memory observations from the four-column
product-state operator instead gives
\[
 P_{1,0}-P_{0,0}=P_{1,1}-P_{0,1},
\]
so that operator is singular.

Finally consider the action-deleted lifted product future
\[
 \bar G_h=((O_h,M_h),\ldots,(O_H,M_H)).
\]
The observed memory path identifies the memory $x$ before the last decision
and the final writer $z$.  For each $(x,z)$, the final preserve/reset
collision contributes $\phi(x,z)$ to the nontrivial eigenvalue of the hidden
two-column posterior Gram, where $p_c(x,z)$ is preserve mass and $u_y(x,z)$
is reset mass producing terminal bit $y$:
\[
 \phi(x,z)=\frac{p_c(x,z)^2}{2}
 \left[\frac1{p_c(x,z)+2u_0(x,z)}
       +\frac1{p_c(x,z)+2u_1(x,z)}\right].
\]
For $z=x$, $(p_c,\{u_0,u_1\})=(3/16,\{11/32,3/32\})$ and
$\phi=15/224$.  For $z\ne x$, these masses are
$(1/16,\{21/160,29/160\})$ and $\phi=75/7072$.  Hence
$\sum_z\phi(x,z)=120/1547$, independently of $x$.  Earlier all-preserve
memory paths have mass $4^{-(L-1)}$, giving the nontrivial eigenvalue
$4^{-(L-1)}120/1547$ and proving \cref{eq:product-observation}.

\medskip\noindent\textbf{Interpretation.}
The writer coordinate changes the next product state and hence the next
logger law, including the logged terminal-action law when written at $H-1$,
while leaving the physical-bit kernel unchanged.  It supplies the histories
needed for bounded lifted history coverage, while the all-preserve bottleneck
keeps the lifted future operator exponentially ill conditioned.
\end{proof}

\section{Finite-instance computational verification}

We evaluate the main four-action construction with a compressed implementation
for $H=3$--$9$ and an independent exhaustive implementation for $H=3$--$5$;
the latter reconstructs the full behavior law rather than substituting the
rare-path KL formula.  A parameterized implementation evaluates the anchor,
KL factorization, and distinct intermediate and terminal formulas for
$H=3$--$6$.  Two separately written product-memory implementations compute the
marginal and lifted Grams and exact behavior-law TV/KL over the respective
ranges $H=3$--$9$ and $H=3$--$5$; the exhaustive implementation also
reconstructs the full public law.  These finite-instance calculations
corroborate the analytic identities; the arbitrary-$H$ results follow from the
analytic proofs.

The common-operator variant is checked by a dynamic-programming implementation
for $H=3$--$9$ and a separate exhaustive implementation for $H=3$--$5$.
They adapt the corresponding product-memory engines to its distinct kernels
and compare both models' observation-only operators entry by entry.
For the four-action readout protocol, a separately implemented full-log
checker evaluates $H=3,4,5$ under three readout schedules and three noise
levels. It verifies the complete-log likelihood ratio, information, estimator
moments, and actual expected cost using exact rational arithmetic.

\section{Common known marginal operators}\label{app:common-operators}

We prove Proposition~\ref{prop:known-operators} with a separate eight-action
construction. The four-action construction proves the main lower bound;
the eight-action construction in Appendix~\ref{app:product-memory} examines
public-memory lifting. Here the step-$1$ preserve transition depends on both
the physical command and the memory-writing coordinate. The command carries
the model difference, while the writer maintains a bit--memory correlation.
Averaging over the command removes the
model difference from the marginal future operators but retains it in the
target value.

\subsection{Construction}\label{ko:construction}

Fix $H\ge3$ and $\omega\in\{-1/4,1/4\}$. The first state is $\star$; subsequent state spaces are $\{\mathsf{H}_0,\mathsf{H}_1,\mathsf{R}_0,\mathsf{R}_1\}$. The subscript is a physical bit. There are eight actions,
$\{c_{k,m},r_{j,m}:k,j,m\in\{0,1\}\}$. Every preterminal action writes its second subscript into the public memory $M_{h+1}=m$. Thus memory is determined by the action history, with $M_1=-1$.
The first preserve action $c_{k,m}$ produces $\mathsf{H}_B$, where
\begin{equation}\label{ko:seed}
 B\sim\Ber(p_{\omega,k,m}),\qquad
 p_{\omega,k,m}=\frac12+\frac{2m-1}{8}+\omega(2k-1).
\end{equation}
Every later preserve action maps either $\mathsf{H}_b$ or $\mathsf{R}_b$ to $\mathsf{H}_b$. At every preterminal step, $r_{j,m}$ instead produces $\mathsf{R}_B$ with $B\sim\Ber(q_j)$, independently of the old bit, where $(q_0,q_1)=(1/10,9/10)$. Only the first-preserve kernel in~\eqref{ko:seed} differs between models.

Observations include their time index. Initially $O_1=\mathrm{start}_1$. Before the terminal step, $\mathsf{H}_b$ emits a blank and $\mathsf{R}_b$ emits a Bernoulli observation with parameter $q_b$. At step $H$, all states reveal their bit exactly. Rewards are zero until $R_H=\one\{O_H=1_H\}$. The order is observation, reward, action, then transition; $A_H$ occurs after the terminal reward and causes no further transition.

Write $a(m,x)=3/4$ if $m=x$ and $1/4$ otherwise, and let
$(K_0(r_0),K_0(r_1))=(1/2,1/4)$ and
$(K_1(r_0),K_1(r_1))=(1/4,1/2)$. The known logger uses
\begin{equation}\label{ko:logger}
\begin{aligned}
 \pi_b(c_{k,m})&=\frac1{16}, &
 \pi_b(r_{j,m})&=\frac38 a(m,j) && (h=1),\\
 \pi_b(c_{k,m}\mid M_h=\ell)&=\frac18 a(m,\ell), &
 \pi_b(r_{j,m}\mid M_h=\ell)&=K_\ell(r_j)a(m,j) && (2\le h\le H).
\end{aligned}
\end{equation}
The second line also specifies the logged terminal action, without a
subsequent state or memory update. The logger ignores observation values
and never accesses the latent state. Its total preserve probability is
always $1/4$, and its smallest action probability is $1/32$, giving
$C_A=32$. The memoryless target chooses $c_{1,0}$ and $c_{1,1}$ with
probability $1/2$ each at every preterminal step and is uniform over
actions at step $H$. Consequently,
\begin{equation}\label{ko:values}
 J_\omega=\frac12(p_{\omega,1,0}+p_{\omega,1,1})
 =\frac12+\omega,\qquad (J_-,J_+)=(1/4,3/4).
\end{equation}
All seed probabilities belong to $\{1/8,3/8,5/8,7/8\}$, so the construction is a valid, exactly realizable two-model class.

\subsection{Shared operators and history coverage}\label{ko:shared}

Let $\tau_{h-1}=(O_1,A_1,\ldots,O_{h-1},A_{h-1})$ and let
$b_h(\tau_{h-1})$ be the posterior distribution of $S_h$.
The action-containing future is
$F_h=(O_h,A_h,\ldots,O_{H-1},A_{H-1},O_H)$, and the observation-only
future is $G_h=(O_h,\ldots,O_H)$.
Abbreviate their conditional probability matrices as $U_h=U_{F,h}$ and
$U_h^{\mathrm{obs}}=U_{F,h}^{\mathrm{obs}}$, with fixed row and state
labels across models.

For each writer $m$, averaging~\eqref{ko:seed} over the equally likely
commands $k$ gives $\bar p_m=1/2+(2m-1)/8$. Hence
\begin{equation}\label{ko:joint}
\begin{aligned}
 \Pp_b^\omega(S_2=\mathsf{H}_b,M_2=m)&=\frac18\Ber_{\bar p_m}(b),\\
 \Pp_b^\omega(S_2=\mathsf{R}_b,M_2=m)&=\frac38\sum_j a(m,j)\Ber_{q_j}(b),
\end{aligned}
\end{equation}
where $\Ber_p(b)=p^b(1-p)^{1-b}$. Both lines are independent of $\omega$.
From step $2$ onward, the state--memory process and its observation kernel
are common. Its marginal distribution is therefore common at every step,
as is the suffix law conditional on any $(S_h,M_h)$.
Mixing that law against the common $\Pp_b(M_h\mid S_h)$ proves that
$U_h$ and $U_h^{\mathrm{obs}}$ are common for $h\ge2$.
At $h=1$, $U_1^{\mathrm{obs}}$ is also common: deleting $A_1$ leaves the
fixed initial observation followed by the common process initialized
by~\eqref{ko:joint}. The action-containing $U_1$ retains the command $k$
and need not be common.

The history Grams $\Sigma_{H,h}=\E_b[b_hb_h^\top]$ are common as well.
Exchange $c_{0,m}$ and $c_{1,m}$ only in the first action of a history,
leaving all other tokens fixed. This is a probability-preserving bijection
between the two models because $p_{-\omega,1-k,m}=p_{\omega,k,m}$.
It preserves the writer path and the posterior state vector, hence
preserves the Gram sum in the same state coordinates.

For the coverage bound, put $v(p)=(1-p,p)^\top$. The last action identifies the current hidden/reset type, so the history Gram is block diagonal. At $h=2$, its hidden block is
\begin{equation}\label{ko:history2}
 \frac1{16}\sum_{p\in\{1/8,3/8,5/8,7/8\}}v(p)v(p)^\top
 =\frac1{256}\begin{pmatrix}21&11\\11&21\end{pmatrix},
 \qquad \lambda_{\min}=\frac5{128}.
\end{equation}
At every $h\ge2$, the reset block equals
$\frac38\sum_{j=0}^1v(q_j)v(q_j)^\top$ and has minimum eigenvalue $6/25$. For $h\ge3$, retain just histories with a reset $r_j$ at $h-2$ followed by any preserve action. Each such coarse event has probability $3/32$ and conditional current bit law $\Ber(q_j)$. Conditioning on the full history can only increase the posterior second moment in positive-semidefinite order. The hidden block is therefore at least
$\frac3{32}\sum_jv(q_j)v(q_j)^\top$, whose minimum eigenvalue is $3/50>5/128$. Since $\Sigma_{H,1}=[1]$, this proves the exact value $C_H=128/5$.

\subsection{Uniform revealing bounds}\label{ko:revealing}

Memory is uniform from step $2$ onward: its transition probabilities under~\eqref{ko:logger} are $5/8$ to keep its value and $3/8$ to switch. The marginal probability of each reset command is consequently $3/8$. Let
$d_t=\Pp_b(B_t=1\mid M_t=1)-\Pp_b(B_t=1\mid M_t=0)$, where $B_t$ is the physical bit. From~\eqref{ko:joint} and the subsequent kernels,
\begin{equation}\label{ko:anchor}
 d_2=\frac{29}{80},\qquad d_{t+1}=\frac18d_t+\frac3{10},\qquad
 d_t=\frac{12}{35}+\frac{11}{560}\,8^{-(t-2)}.
\end{equation}
Indeed, writing $x_{t,m}=\Pp_b(B_t=1\mid M_t=m)$, the two updates are
$x_{t+1,0}=(3x_{t,0}+x_{t,1})/16+9/40$ and
$x_{t+1,1}=(x_{t,0}+3x_{t,1})/16+21/40$; their sum stays one.
Both types have a uniform bit marginal, giving
$p_h=(1/8,1/8,3/8,3/8)$ in the order $(\mathsf{H}_0,\mathsf{H}_1,\mathsf{R}_0,\mathsf{R}_1)$ for all $h\ge2$.
The hidden bit--memory contrast is
\begin{equation}\label{ko:eta}
 \eta_h:=\Pp_b(M_h=1\mid \mathsf{H}_1)-\Pp_b(M_h=1\mid \mathsf{H}_0),
 \qquad \eta_2=\frac14,\quad \eta_h=\frac12d_{h-1}\ge\frac6{35}\quad(h\ge3).
\end{equation}
For $h\ge3$, the factor $1/2$ is the contraction of the writer channel $a$ after a preserve action; Bayes' rule gives the reverse conditional contrast because both hidden bit and memory marginals are uniform.

For any two distinct probability columns $P_0,P_1$, their Gram
$\Sigma=P^\top\operatorname{diag}(P\mathbf1)^{-1}P$ has eigenvalues $1$ and
\begin{equation}\label{ko:binary}
 \lambda=\frac12\sum_y\frac{(P_0(y)-P_1(y))^2}{P_0(y)+P_1(y)}
 \ge\TV(P_0,P_1)^2,
 \qquad \|\Sigma^{-1}\|_1=\lambda^{-1}.
\end{equation}
The inequality follows from Cauchy--Schwarz; the inverse identity follows by diagonalizing along $(1,1)$ and $(1,-1)$.
For $2\le h\le H-2$, reset commands $r_0,r_1$ produce next-observation Bernoulli parameters $9/50,41/50$. Their gap is $16/25$, and the two memories change the reset-command probability by $1/4$. Thus
\begin{equation}\label{ko:tv}
 \TV\bigl(\mathcal L(O_{h+1}\mid \mathsf{H}_0),\mathcal L(O_{h+1}\mid \mathsf{H}_1)\bigr)
 =\frac{16}{25}\frac14\eta_h\ge\frac{24}{875}.
\end{equation}
At $h=H-1$, preserve directly reveals the old bit at the terminal step, while reset contributes in the same direction; the corresponding TV is $1/4+\eta_h/5\ge1/4$. The reset columns already have TV $4/5$ from the current observation. That observation also separates hidden and reset supports, so each future Gram consists of two two-column blocks. Keeping the whole future cannot decrease TV. Equation~\eqref{ko:binary} therefore gives
\begin{equation}\label{ko:coefficients}
 C_F^{\mathrm{obs}}\le(875/24)^2=\frac{765625}{576}<1330.
\end{equation}
The only remaining cut, $h=1$, has a single probability column and Gram $[1]$.

Let $C_F$ denote the analogous coefficient for the action-containing operator, and write $D_{p_h}=\operatorname{diag}(p_h)$. Restoring action tokens refines each two-column experiment. Conditional Jensen implies that its Gram increases in positive-semidefinite order. The eigenvector $(1,1)$ still has eigenvalue one, so~\eqref{ko:binary} gives the same bound for $C_F$. Finally, within each type block the two state-prior weights are equal. Writing their common weight as $w$, the prior-weighted Gram satisfies
\begin{equation}\label{ko:weighted}
 D_{p_h}U_h^\top\operatorname{diag}(U_hp_h)^{-1}U_h
 =U_h^\top\operatorname{diag}(U_h\mathbf1)^{-1}U_h,
\end{equation}
because the blockwise factors $w$ cancel. The current observation separates
the blocks, and the equality also holds at $h=1$. Its inverse-$1$-norm
coefficient therefore satisfies $C_F^p=C_F\le765625/576<1330$.

\subsection{The complete-log lower bound}\label{ko:lower}

Let $E_H$ be the observable event that all $H-1$ preterminal actions preserve, and put $q_H=4^{-(H-1)}$. Its probability is $q_H$ in both models. On $E_H^c$, the first reset occurs before any observation reveals the seed: all preceding seed-dependent states emit blanks, and the logger ignores their bit. Reset erases the seed, so the two complete-log laws agree on every atom outside $E_H$.
On $E_H$, the first $(k,m)$ is uniform over four possibilities.
The remaining action/writer path has a common law, and the terminal bit
is $\Ber(p_{\omega,k,m})$. The logged terminal action uses the same
conditional kernel in both models given the recorded memory $M_H$,
so it contributes zero conditional KL. This behavior kernel differs
from the target's uniform terminal kernel. Under either policy, $A_H$
follows the reward and leaves its value unchanged.
Rewards are fixed by observations.
Thus, with $\kappa_{\mathrm{com}}=\frac14\log(35/3)$,
\begin{equation}\label{ko:kl}
 \KL(P_b^-\|P_b^+)
 =\frac{q_H}{4}\sum_{k,m}\operatorname{kl}(p_{-,k,m}\|p_{+,k,m})
 =\kappa_{\mathrm{com}}q_H,
 \qquad \TV(P_b^-,P_b^+)=\frac{q_H}{2}.
\end{equation}
For the KL identity, each of the unordered pairs $(1/8,5/8)$ and $(3/8,7/8)$ contributes
$\operatorname{kl}(u\|v)+\operatorname{kl}(v\|u)=\frac12\log(35/3)$.
Exchanging models permutes the four summands, so the reverse KL is identical. Each labeled Bernoulli pair has TV $1/2$, yielding the TV identity.

Providing the exact common matrices $(U_h^{\mathrm{obs}})_{h<H}$ adds the same deterministic input to both experiments and leaves these divergences unchanged. The same holds for their common left inverses, for example
\begin{equation}\label{ko:leftinverse}
 D_h=(\Sigma_{F,h}^{\mathrm{obs}})^{-1}(U_h^{\mathrm{obs}})^\top
 \operatorname{diag}(U_h^{\mathrm{obs}}\mathbf1)^{-1},
 \qquad D_hU_h^{\mathrm{obs}}=I,
\end{equation}
with zero-probability rows omitted.
For $n\le1/(8\kappa_{\mathrm{com}}q_H)$, the product-law KL is at most
$1/8$, hence its TV is at most $1/4$ by Pinsker's inequality.
Thresholding any value estimate at $1/2$ gives a test that is correct
whenever its estimation error is at most $1/8$.
The sum of the two testing errors is at least $3/4$, so
\begin{equation}\label{ko:lecam}
 \inf_{\widehat J}\max_{\omega\in\{-1/4,1/4\}}
 \Pp_{(P_b^\omega)^{\otimes n}}
 \left(\left|\widehat J-J_\omega\right|>\frac18\right)\ge\frac38.
\end{equation}
This completes the proof of Proposition~\ref{prop:known-operators}.

\section{Proofs for pre-overwrite readout}
\label{app:readout}

We prove Theorem~\ref{thm:readout} and
Corollary~\ref{cor:readout-uniform} for the four-action family in
\cref{eq:continuum}.
Throughout, $\theta_0=1/20$, $\vartheta=\theta-\theta_0$,
$a_0=1/4$, and $a_1=3/4$. Thus
$|\vartheta|\le1/20$, $p_{\theta,k}=a_k+\vartheta$, and
$J_\theta=1/2+\vartheta$.

\subsection{Protocol and full-record likelihood}
\label{app:readout-likelihood}

At the first step, the logger chooses each preserve action $c_k$ with
probability $1/8$. This creates a hidden state with bit
$B\sim\operatorname{Ber}(a_k+\vartheta)$ and sets public memory to $k$.
The two reset actions each have probability $3/8$.
A reset $r_j$ creates a reset state with a fresh bit of success probability
$q_0=1/10$ or $q_1=9/10$, and sets memory to $j$.
At later preterminal steps, each preserve action still has probability
$1/8$, keeps both the bit and memory, and returns to the hidden type.
The reset probabilities are $(1/2,1/4)$ at memory $0$ and
$(1/4,1/2)$ at memory $1$.
Hidden states emit a blank. Reset states with bit $0$ or $1$ emit
$\operatorname{Ber}(1/10)$ or $\operatorname{Ber}(9/10)$, respectively.
The terminal observation reveals the current bit exactly, and the
terminal reward equals that bit. All earlier rewards vanish.
The target chooses the two preserve actions uniformly and never resets.
Both policies use the same uniform terminal action kernel, after the
terminal observation and reward.

Write $K$ for the first preserve label. The original bit is still present
before action $A_h$ precisely when $A_1,\ldots,A_{h-1}$ all preserved.
At such a history, $M_h=K$, and the preterminal observations are blank.
Conditional on $K$, this public route is independent of $B$.
The distribution of $K$ itself does not depend on $\theta$.
A read may occur after the original observation and before $A_h$,
for $2\le h\le H-1$, provided no earlier read or reset has occurred.
Its selected reliability $\xi\in[-1,1]$ and cost are known and recorded,
and its output satisfies
\begin{equation}
 \mathbb P(Y=1\mid B,\xi)=\frac12+\xi(B-1/2).
 \label{eq:readout-proof-channel}
\end{equation}
The read uses fresh randomness independent of all original transition,
emission, and action randomness and of the controller's randomization.
In particular, conditional on the original bit, selected reliability,
and original trajectory, its law remains
\eqref{eq:readout-proof-channel}.
This independence prevents a later reset from revealing the read noise.
The read changes no physical state or original policy. No further
measurement is taken, but the full original log continues to the terminal
step.

Let $\mathcal Q$ be the event that a read occurs, and set
\begin{equation}
 E_H=\{A_1,\ldots,A_{H-1}\text{ all preserve}\},\qquad
 q_H=4^{-(H-1)},\qquad s_h=4^{-(H-h)}.
 \label{eq:readout-proof-survival}
\end{equation}
The probability of $E_H$ is $q_H$. From any eligible history at step $h$,
the probability of preserving through the terminal step is $s_h$.
These probabilities do not depend on the bit or the read.

Fix a causal read policy $\pi$, and let $P_\theta^\pi$ denote its full
record law, including the original log and any read.
There are three disjoint cases.
On $E_H$, the terminal observation is the original $B$; conditional on
$B$, an earlier read contributes no further parameter dependence.
On $\mathcal Q\cap E_H^c$, the read precedes the first reset and is the
only observation of the original bit. After that reset, the original
suffix is independent of $B$, conditional on the recorded route.
On $\mathcal Q^c\cap E_H^c$, no record of the original bit survives.
This last case includes a first-step reset.

Define
\[
 A_{k,\xi}=\frac12+\xi(a_k-1/2).
\]
On a read branch, $Y$ therefore has success probability
$A_{k,\xi}+\xi\vartheta$.
The three cases give the following observable score for the full record:
\begin{equation}
 \ell=
 \begin{cases}
 \displaystyle\frac{B-a_K}{a_K(1-a_K)},& E_H,\\[6pt]
 \displaystyle\frac{\xi(Y-A_{K,\xi})}
 {A_{K,\xi}(1-A_{K,\xi})},&\mathcal Q\cap E_H^c,\\[6pt]
 0,&\mathcal Q^c\cap E_H^c.
 \end{cases}
 \label{eq:readout-proof-score}
\end{equation}
Here $B$ in the first line is the terminal observation.
Every remaining factor in the full-record probability is known and
parameter independent, so
\begin{equation}
 \frac{dP_\theta^\pi}{dP_{\theta_0}^\pi}=1+\vartheta\ell,
 \qquad \mathbb E_{\theta_0}^\pi\ell=0,
 \qquad |\ell|\le4,
 \qquad \frac45\le1+\vartheta\ell\le\frac65 .
 \label{eq:readout-proof-affine}
\end{equation}
For an exact observation the score bound follows from
$\min_k\{a_k,1-a_k\}=1/4$.
For a noisy read, its absolute value is at most
$|\xi|/(1/2-|\xi|/4)\le4$.
Thus all bounds in \eqref{eq:readout-proof-affine} hold on the entire
parameter interval, including its endpoints.

\subsection{Information and feasible read policies}
\label{app:readout-flow}

The center information of one noisy read is independent of $k$:
\begin{equation}
 \psi(\xi)=\frac{\xi^2}{A_{k,\xi}(1-A_{k,\xi})}
          =\frac{16\xi^2}{4-\xi^2},\qquad
 \psi(1)=\frac{16}{3}.
 \label{eq:readout-proof-psi}
\end{equation}
Let $\mathcal F$ be the finite tree of eligible public histories before
any read. Its two roots at step $2$ have masses $\nu_f=1/8$ each;
all other $\nu_f$ are zero.
At node $f$, let $d_f$ be the probability of reaching it, $w_f$ the
probability of waiting there, and $x_{f,\alpha}$ the probability of selecting
read option $\alpha$ from its finite menu $\mathcal A_f$.
Let $T(f\mid g)$ be the known probability that a wait at $g$ reaches
eligible child $f$.
Each preserve child has probability $1/8$; reset branches exit this tree.
There are no eligible children after step $H-1$.
The flow constraints are
\begin{equation}
 d_f=\nu_f+\sum_{g\in\mathcal F}w_gT(f\mid g),
 \qquad w_f+\sum_{\alpha\in\mathcal A_f}x_{f,\alpha}=d_f,
 \qquad d_f,w_f,x_{f,\alpha}\ge0.
 \label{eq:readout-proof-flow}
\end{equation}
The tree ends when a read is selected because no further measurement
decision remains. The physical trajectory does not end there.

Every causal randomized policy induces a flow
\eqref{eq:readout-proof-flow}.
Conversely, any such flow is realized by choosing option $\alpha$ with
probability $x_{f,\alpha}/d_f$ and waiting with probability $w_f/d_f$ at
each node of positive mass.
Choices at zero-mass nodes are arbitrary.
Controller randomization can be averaged conditional on each node,
since no bit-dependent observation has yet been read.
These flows and their expected costs are independent of $\theta$.
The history tree is finite; it need not have polynomial size in $H$.

Set $\mathcal I(\pi)=\mathbb E_{\theta_0}^\pi\ell^2$.
The event $E_H$ contributes $(16/3)q_H$, regardless of whether a read
was taken. A read selected at history $f$ of depth $h(f)$ contributes
only when a later reset occurs. Conditional on this read decision,
that event has probability $1-s_{h(f)}$ and is independent of its
output given $K$.
Consequently,
\begin{equation}
 \begin{aligned}
 \mathcal I(\pi)&=\frac{16}{3}q_H+
     \sum_{f,\alpha}x_{f,\alpha}(1-s_{h(f)})\psi(\xi_\alpha),\\
 C(\pi)&=\sum_{f,\alpha}x_{f,\alpha}c_\alpha .
 \end{aligned}
 \label{eq:readout-proof-information}
\end{equation}
The factor $1-s_{h(f)}$ removes trajectories whose terminal observation
already reveals the same bit. It is necessary even when the read is
exact.

Let $\mathcal X_b$ be the flows satisfying
\eqref{eq:readout-proof-flow} and $C(\pi)\le b$.
Maximizing \eqref{eq:readout-proof-information} gives the exact information
in Theorem~\ref{thm:readout}:
\begin{equation}
 \mathcal I_\star(b)=\frac{16}{3}q_H+
 \max_{x\in\mathcal X_b}
 \sum_{f,\alpha}x_{f,\alpha}(1-s_{h(f)})
 \frac{16\xi_\alpha^2}{4-\xi_\alpha^2}.
 \label{eq:readout-information}
\end{equation}
This is a finite linear program with known coefficients.
The feasible set is nonempty and compact: never reading is feasible,
and each variable is a probability mass.
An optimal flow therefore exists and yields an executable fixed policy.
In particular, $\mathcal I_\star(b)\ge(16/3)q_H>0$.

\subsection{Estimator and uniform upper bound}
\label{app:readout-upper}

Equation~\eqref{eq:readout-proof-affine} gives, for every fixed policy,
\begin{equation}
 \mathbb E_\theta^\pi\ell=\vartheta\mathcal I(\pi),\qquad
 \operatorname{Var}_\theta^\pi(\ell)
 \le\mathbb E_\theta^\pi\ell^2
 \le\frac65\mathcal I(\pi).
 \label{eq:readout-proof-moments}
\end{equation}
Independently execute an optimal flow policy on $n$ trajectories.
Write $\mathcal I_\star=\mathcal I_\star(b)$ and use
\begin{equation}
 \widehat J=
 \operatorname{clip}_{[9/20,\,11/20]}
 \left(\frac12+\frac1{n\mathcal I_\star}\sum_{i=1}^n\ell_i\right).
 \label{eq:readout-proof-estimator}
\end{equation}
Before clipping, this estimator is unbiased for $J_\theta$.
Clipping cannot increase its distance to any value in the parameter
range. Chebyshev's inequality therefore yields
\begin{equation}
 \begin{aligned}
 \sup_{\theta\in[0,1/10]}
 P_\theta\{|\widehat J-J_\theta|>\varepsilon\}
 &\le\frac{6}{5n\mathcal I_\star\varepsilon^2}\\
 &\le\frac14
 \quad\text{if}\quad
 n\ge\frac{24}{5\mathcal I_\star\varepsilon^2}.
 \end{aligned}
 \label{eq:readout-proof-upper}
\end{equation}
\subsection{Lower bound for adaptive read policies}
\label{app:readout-lower}

Now allow an algorithm to choose the next trajectory's entire causal
read rule using all previous records. Its original trajectories remain
independent draws from the unchanged POMDP.
Suppose its total expected cost is at most $nb$ for every $\theta$.
Take $0<\varepsilon\le1/40$ and
\[
 \theta_-=\theta_0-2\varepsilon,\qquad
 \theta_+=\theta_0+2\varepsilon .
\]
Both parameters are admissible, and their target values differ by
$4\varepsilon$.

Conditional on the previous transcript, denote the next rule by $\pi_i$.
The same conditional rule applies under either parameter at that
transcript. Its record laws satisfy
\eqref{eq:readout-proof-affine}.
Using $D_{\rm KL}(P\Vert Q)\le\chi^2(P\Vert Q)$ and
$1+2\varepsilon\ell_i\ge4/5$ gives
\begin{equation}
 \begin{aligned}
 D_{\rm KL}(P_{\theta_-}^{\pi_i}\Vert
            P_{\theta_+}^{\pi_i})
 &\le16\varepsilon^2
   \mathbb E_{\theta_0}^{\pi_i}
   \left[\frac{\ell_i^2}{1+2\varepsilon\ell_i}\right]\\
 &\le20\varepsilon^2\mathcal I(\pi_i).
 \end{aligned}
 \label{eq:readout-proof-conditional-kl}
\end{equation}
If the algorithm uses additional randomization, include its choices in
the transcript. Their conditional kernels are parameter independent
and contribute zero KL.

Average the conditional flows of $\pi_1,\ldots,\pi_n$ under the
$\theta_-$ transcript law.
Each flow satisfies the same linear conservation constraints.
For every fixed conditional rule, its pre-read flow and expected cost
are independent of $\theta$.
Thus the average cost equals the algorithm's expected cost under
$\theta_-$ divided by $n$, and is at most $b$.
The averaged flow is feasible for the definition of $\mathcal I_\star$, which
implies
\begin{equation}
 \mathbb E_{\theta_-}\sum_{i=1}^n\mathcal I(\pi_i)\le n\mathcal I_\star.
 \label{eq:readout-proof-average-flow}
\end{equation}
Applying the KL chain rule to the full adaptive experiment,
\begin{equation}
 D_{\rm KL}(P_{\theta_-}^{\mathcal A}\Vert
            P_{\theta_+}^{\mathcal A})
 \le20n\varepsilon^2\mathcal I_\star.
 \label{eq:readout-proof-adaptive-kl}
\end{equation}
For $n\le1/(160\mathcal I_\star\varepsilon^2)$ this is at most $1/8$.
Pinsker's inequality then bounds total variation by $1/4$.
Thresholding any value estimate at the midpoint of the two target
values gives a test whose error is no larger than its estimation
failure probability at accuracy $\varepsilon$.
The two-point testing bound therefore gives
\[
 \max_{\theta\in\{\theta_-,\theta_+\}}
 P_\theta^{\mathcal A}
 \{|\widehat J-J_\theta|>\varepsilon\}\ge\frac38 .
\]
Together with \eqref{eq:readout-proof-upper}, this proves
Theorem~\ref{thm:readout}, including the minimax rate at confidence $2/3$.
The budget throughout is an expected cost, not a pathwise cap.

\subsection{Equal-quality reads with unit cost}
\label{app:readout-uniform}

\begin{corollary}[Uniform readout quality and cost]
\label{cor:readout-uniform}
If the only read option at every eligible history has reliability $\xi$ and
unit cost, then
\begin{equation}
 \mathcal I_\star(b)=\frac{16}{3}
 \left[q_H+\frac{3\xi^2}{4-\xi^2}
 \left(1-4^{-(H-2)}\right)\min\{b,1/4\}\right].
 \label{eq:readout-uniform}
\end{equation}
An optimal rule reads at step $2$ with probability $\min\{4b,1\}$ when the
first action preserves, and never reads otherwise.
\end{corollary}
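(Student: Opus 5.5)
The plan is to specialize the linear program \eqref{eq:readout-information} to a single option with reliability $\xi$ and unit cost. It then suffices to maximize $\sum_f x_f(1-s_{h(f)})$ subject to the flow constraints \eqref{eq:readout-proof-flow} and the budget $\sum_f x_f\le b$, since every read carries the same information weight $\psi(\xi)=16\xi^2/(4-\xi^2)$.

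First I bound the total read mass using flow conservation. Every eligible node at depth $h\ge3$ is reached only by waiting at a step-$(h-1)$ node followed by a preserve, which has probability $1/4$ in total. Hence the total mass available for reads is at most the root mass $2\cdot\frac18=\frac14$, so $\sum_f x_f\le\min\{b,1/4\}$. Second, I use that $h\mapsto 1-s_h=1-4^{-(H-h)}$ is decreasing in $h$. Any read at depth $h\ge2$ therefore has weight at most $1-s_2=1-4^{-(H-2)}$. Combining the two facts gives
\[
\sum_f x_f(1-s_{h(f)})\le\bigl(1-4^{-(H-2)}\bigr)\min\{b,1/4\}.
\]
This is achieved by the flow $x_f=\min\{b,1/4\}/2$ at each of the two step-$2$ roots, with the remainder of the root mass waiting and no later reads. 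That flow is feasible, and it corresponds to reading with probability $\min\{4b,1\}$ at step $2$ after a first preserve. Substituting into \eqref{eq:readout-information} and factoring out $16/3$, using $\psi(\xi)=\frac{16}{3}\cdot\frac{3\xi^2}{4-\xi^2}$, gives \eqref{eq:readout-uniform}.

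The main obstacle is making the first step rigorous, namely that the total read mass over the whole tree is at most $1/4$. I plan a potential argument: define $\Phi_h=\sum_{f:\,h(f)=h}d_f$. From \eqref{eq:readout-proof-flow}, $\Phi_{h+1}=\frac14\sum_{h(g)=h}w_g$, so $\sum_{h(f)=h}x_f=\Phi_h-4\Phi_{h+1}$. Summing over $h$ telescopes to $\Phi_2-3\sum_{h\ge3}\Phi_h\le\Phi_2=1/4$. This also shows that waiting is strictly wasteful, because it both lowers the weight $1-s_h$ and shrinks the available mass, so the early rule is optimal.

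If $\xi=0$ the read term vanishes and the stated rule remains optimal trivially. For $\xi\neq0$, the same two bounds show that any optimal rule has no reads beyond step $2$ whenever $H\ge4$. The corollary only claims that the stated rule is optimal, so this uniqueness remark is not needed.
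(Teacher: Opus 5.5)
Your proposal is correct and follows the paper's proof. Both bound the total read mass by $\min\{b,1/4\}$, bound each read's weight by $1-s_2=1-4^{-(H-2)}$, and attain equality by reading at step $2$ with probability $\min\{4b,1\}$ after a first preserve; the one difference is that you derive the $1/4$ cap by telescoping the flow constraints, whereas the paper simply notes that at most one read occurs per trajectory and only on the first-preserve branch, which has probability $1/4$.
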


\begin{proof}
Let $y_h$ be the unconditional
probability of reading at step $h$.
Only the first-preserve branch is eligible, and its probability is
$1/4$. Hence $\sum_h y_h\le\min\{b,1/4\}$.
Since $s_h\ge s_2=4^{-(H-2)}$,
\begin{equation}
 \mathcal I(\pi)-\frac{16}{3}q_H
 \le\psi(\xi)(1-4^{-(H-2)})\min\{b,1/4\}.
 \label{eq:readout-proof-earliest}
\end{equation}
Read at step $2$ with probability $\min\{4b,1\}$ conditional on a first
preserve action, and otherwise never read.
This policy attains equality, giving
\begin{equation}
 \mathcal I_\star(b)=\frac{16}{3}
 \left[q_H+\frac{3\xi^2}{4-\xi^2}
       (1-4^{-(H-2)})\min\{b,1/4\}\right].
 \label{eq:readout-proof-uniform}
\end{equation}
At $\xi=0$ the read adds no information.
At $|\xi|=1$ and $b=1/4$, the expression equals $4/3$: every bit created
by a first preserve action is observed, with no double counting of its
terminal readout.
\end{proof}

Equation~\eqref{eq:readout-total} follows with $b=A/n$.

\subsection{Moving a read earlier preserves every decision risk}
\label{app:readout-dominance}

The earliest-read rule is optimal for a stronger reason than its center
information. With a fixed channel, a delayed experiment can be reconstructed
from an early one without knowing the parameter.

\begin{lemma}[Early-readout dominance]
\label{lem:early-readout}
Restrict the protocol to one known channel of reliability $\xi$ and unit
cost at every eligible history. For each causal single-read rule $\pi$,
there is a rule $\pi^\uparrow$ that reads only at step $2$, with the same
conditional read probability given each $A_1=c_k$. Its expected cost equals
that of $\pi$ for every $\theta$. There is a parameter-independent randomized
kernel $\mathsf K_\pi$ on complete records such that
\begin{equation}
 P_\theta^\pi=P_\theta^{\pi^\uparrow}\mathsf K_\pi
 \qquad\text{for every }\theta\in[0,1/10].
 \label{eq:readout-dominance}
\end{equation}
Thus every decision rule based on $\pi$ can be implemented using
$\pi^\uparrow$ with exactly the same risk at every parameter.
\end{lemma}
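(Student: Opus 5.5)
The plan is to build $\pi^\uparrow$ and $\mathsf K_\pi$ from three facts established in Appendix~\ref{app:readout-likelihood}. First, before any read the eligible public route is independent of $B$ given the first preserve label $K$. Second, each read output depends only on $B$, the reliability $\xi$, and fresh noise. Third, the complete record depends on $\theta$ only through the three cases $E_H$, $\mathcal Q\cap E_H^c$, and $\mathcal Q^c\cap E_H^c$. Since $\pi$ is causal and decides only at eligible histories, which are all-preserve prefixes with blank observations, it reduces to a flow $x_{k,h}$ on the chain of eligible nodes below each $A_1=c_k$. Write $r_k=\sum_h x_{k,h}/\Pp(A_1=c_k)$ for its total conditional read probability, and define $\pi^\uparrow$ to read at step $2$ with probability $r_k$ after $A_1=c_k$. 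Both rules have unit-cost reads, so their expected costs are equal and $\theta$-free.

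I would next compare the read probabilities of the two rules on and off $E_H$. Under $\pi$, the conditional read probability given $K=k$ and the all-preserve route is $a_k=r_k$, because every eligible node lies on that route. Given $K=k$ and a route whose first reset follows step $t$, it is the partial sum $\sum_{h\le t}x_{k,h}/\Pp(A_1=c_k)\le r_k$. Averaging over routes, $c_k:=\Pp_\pi(\mathcal Q\mid K=k,E_H^c)\le r_k$. This monotonicity drives the construction: on $E_H^c$ a delayed rule can never read more often than the early rule that front-loads all of its reads.

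The kernel $\mathsf K_\pi$ acts on an early record with label $k$ in two cases.

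\emph{On $E_H$}, the terminal observation is $B$ itself. The kernel discards any early $Y$, runs $\pi$'s randomized decision along the observed all-preserve route, and, if $\pi$ reads at step $h$, draws $Y$ from \eqref{eq:readout-proof-channel} using $B$. This step uses the fact that on $E_H$ the read adds no parameter dependence beyond $B$.

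\emph{On $E_H^c$}, the kernel proceeds as follows.
\begin{itemize}
\item If the early rule read, it keeps the read with probability $c_k/r_k$ and otherwise discards it.
\item It then resamples the post-step-$1$ route, any read time, and the reset-block observations from the known, $\theta$-free law of $\pi$ given $(K=k,E_H^c,\mathcal Q)$ or $(K=k,E_H^c,\mathcal Q^c)$. If a read was kept, it attaches the early $Y$ at the sampled time.
\end{itemize}
Attaching the early $Y$ is legitimate because a read preceding the first reset sees the same bit $B$ at any eligible step. Resampling the route is legitimate because, after a reset, the suffix is independent of $B$ given the recorded route.

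To verify \eqref{eq:readout-dominance}, I would check the three cases of \eqref{eq:readout-proof-score} separately. The case probabilities match: $\Pp(E_H)=q_H$ under both rules, and on $E_H^c$ the read probability given $K=k$ is $r_k\cdot c_k/r_k=c_k$. Within each case, the conditional law of the parameter-carrying coordinate is $B$ on $E_H$, $Y$ on a read branch, and nothing otherwise, and it coincides for the two rules. All remaining coordinates are drawn from the same $\theta$-free conditional laws. The main obstacle is the inequality $c_k\le r_k$ together with the claim that the resampling leaves the $\theta$-dependent part intact; I expect the route/bit conditional-independence statement to settle both. The final claim about decision rules follows by composing any decision rule with $\mathsf K_\pi$.
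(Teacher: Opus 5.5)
Your kernel is sound, and it is essentially an explicit version of the paper's argument. The paper reduces each record to a three-branch statistic: exact $B$ on $E_H$, $Y$ on $\mathcal Q\cap E_H^c$, and null otherwise. It then uses the affine likelihood ratio to obtain a parameter-free reverse kernel, and thins the noisy branch of the early experiment with probability $t_k/t_k^\uparrow$, which equals your $c_k/r_k$. Your re-simulation of $\pi$'s read on $E_H$, and your resampling of the $\theta$-free coordinates on $E_H^c$, replace the sufficiency step with explicit conditional independence; that substitution is fine.

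The gap is the one load-bearing inequality, $c_k\le r_k$: your justification for it is wrong. First, the eligible histories form a binary tree, not a chain. Each later preserve label $c_0$ or $c_1$ is public, and the rule may depend on it; this is harmless for your kernel. More seriously, your formula for $r_k$ requires $x_{k,h}$ to be unconditional flow masses. Conditioning on a route that reaches step $h$ then inflates the read mass at step $h$ by the factor $4^{h-2}$. Hence $a_k:=\Pp_\pi(\mathcal Q\mid K=k,E_H)\ge r_k$, with strict inequality as soon as $\pi$ reads after step $2$ with positive probability on the branch $A_1=c_k$. Likewise, the read probability given a particular route ending in a reset is not your partial sum and can exceed $r_k$. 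For example, take $H=4$ and the rule ``read at step $3$ whenever eligible.'' Then $r_k=1/4$, yet $a_k=1$, and the read occurs with probability one on the route $(c_k,c_{k'},r_j)$. Your two claims are also mutually inconsistent. Since $r_k=e\,a_k+(1-e)c_k$ with $e=4^{-(H-2)}$, the claim $a_k=r_k$ would force $c_k=r_k$. Route--bit conditional independence, which you expect to settle the matter, does not yield this inequality.

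The correct reason is that delayed reads are biased toward surviving paths. The logger ignores reads and the route is bit-independent, so a read at step $h$ is followed by a reset with probability $1-s_h\le 1-s_2=1-e$. Therefore $\Pp_\pi(\mathcal Q\cap E_H^c\mid A_1=c_k)=\sum_h \Pp_\pi(T=h\mid A_1=c_k)(1-s_h)\le (1-e)r_k$. Dividing by $\Pp(E_H^c\mid A_1=c_k)=1-e$ gives $c_k\le r_k$. The bound holds only on average over $E_H^c$, not route by route. With this replacement, your thinning probability lies in $[0,1]$ and the rest of your verification goes through. You also need two trivial conventions: first-step resets are mapped to themselves, and $c_k=0$ whenever $r_k=0$.
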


\begin{proof}
Let $T$ be the read time, with $T=\infty$ if no read occurs. Conditional on
$A_1=c_k$, define
\[
 e=4^{-(H-2)},\qquad
 r_k=P_\theta^\pi(\mathcal Q\mid A_1=c_k),\qquad
 t_k=P_\theta^\pi(\mathcal Q\cap E_H^c\mid A_1=c_k).
\]
These quantities do not depend on $\theta$: before the only read, the
eligible public route contains preserve commands and blank observations,
independent of the seed bit. Since the logger ignores the read,
\begin{equation}
 t_k=\sum_{h=2}^{H-1}P_\theta^\pi(T=h\mid A_1=c_k)(1-s_h)
 \le r_k(1-e).
 \label{eq:readout-noisy-mass}
\end{equation}

Retain $A_1$ and reduce a complete record to three branches: on $E_H$ keep
the exact terminal bit $B$; on $\mathcal Q\cap E_H^c$ keep the noisy read
$Y$; otherwise keep a null symbol. Given $A_1=c_k$, the reduced experiment is
\begin{equation}
\begin{array}{c|c|c}
 \text{branch}&\text{probability}&\text{observation law}\\ \hline
 \text{exact}&e&\Ber(p_{\theta,k})\\
 \text{noisy}&t_k&\Ber(1/2+\xi(p_{\theta,k}-1/2))\\
 \text{erased}&1-e-t_k&\text{null}.
\end{array}
\label{eq:readout-three-branches}
\end{equation}
A first-step reset gives a parameter-independent null branch.

This reduction loses no parameter information. Indeed, the score in
\eqref{eq:readout-proof-score} depends only on the retained statistic $Z$.
For a full record $z$ mapped to $Z$, \eqref{eq:readout-proof-affine} gives
\[
 P_\theta^\pi(z\mid Z)=P_{\theta_0}^\pi(z\mid Z).
\]
Hence the reference conditional law defines a parameter-independent reverse
kernel $\mathsf R_\pi$ from $Z$ to the full record. It reconstructs all
original observations, rewards, and actions, including the terminal action,
as well as read metadata. On the exact branch, any earlier read is generated
from the known channel conditional on $B$. On the noisy branch, the original
route and reset suffix are independent of the seed given $K$ and the branch.
The reconstructed route need not equal the input route. Zero-mass branches
may be assigned arbitrary kernels.

Now let $\pi^\uparrow$ read at step $2$ with probability $r_k$ given
$A_1=c_k$, independently of the original process. It never reads otherwise.
The conditional expected cost remains $r_k$. Its reduced experiment has
exact mass $e$ and noisy mass $t_k^\uparrow=r_k(1-e)\ge t_k$.
Keep every exact observation. Retain a noisy observation with probability
$t_k/t_k^\uparrow$, independently of its value, and replace it by null
otherwise. If $t_k^\uparrow=0$, both noisy masses vanish. This thinning
recovers exactly \eqref{eq:readout-three-branches} for $\pi$.
Composing reduction, thinning, and $\mathsf R_\pi$ proves
\eqref{eq:readout-dominance}. Composing this kernel with any decision rule
proves the risk statement. Both policies have expected cost
$(r_0+r_1)/8$ under every parameter.
\end{proof}

The comparison also applies to adaptation across trajectories under a total
expected budget. Maintain a simulated transcript for the original algorithm.
Given this transcript, implement the early version of its next conditional
read rule, then feed the reconstructed record back to the algorithm.
Induction gives the original transcript law for every $\theta$; conditional
expected costs also agree, so their sums agree. Realized costs need not match.
The early probabilities $r_0,r_1$ depend on the original rule and may differ;
the lemma does not require the symmetric allocation in
Corollary~\ref{cor:readout-uniform}.

\section{Target evidence with an unknown readout bias}
\label{app:readout-calibration}

A read can retain seed-dependent information without separating the seed
mean from a sensor bias. A read of a reset-generated bit supplies the
missing calibration. We prove both statements for the complete logged
experiment, then derive the budget law in
Theorem~\ref{thm:readout-calibration}.

\subsection{Model and readout protocols}
\label{app:cal-model}

Keep the four-action family in~\eqref{eq:continuum} and all its original
observations, actions, and rewards. Write
\begin{equation}
 \theta\in[0,1/10],\qquad
 d_0=1/5,\quad d_1=7/10,\qquad
 p_{\theta,k}=d_k+\theta,\qquad
 J_\theta=9/20+\theta.
 \label{eq:cal-family}
\end{equation}
The extra sensor has known gain $1/2$ and unknown bias
$\beta\in[-1/20,1/20]$:
\begin{equation}
 C_\beta(1\mid B)=\Pr(Y=1\mid B)=1/4+\beta+B/2.
 \label{eq:cal-channel}
\end{equation}
The same pair $(\theta,\beta)$ governs all trajectories. Conditional on the
measured bit, each read uses fresh noise independent of the original
process and of the read-selection randomization. It changes no original
state, action, observation, or reward. The logger ignores the readout.
The complete record contains the original log and the read decision, time,
and outcome. All sensor probabilities lie in $[1/5,4/5]$.

We use two readout protocols. In the \emph{pre-reset protocol}, at most one
read is allowed after $O_h$ and before $A_h$, for $2\le h<H$, provided all
earlier actions preserve. Its timing can depend on public history and
private randomness. In the \emph{step-$2$ calibration protocol}, at most
one read is allowed after $O_2$ and before $A_2$. It reads the original seed
if $A_1=c_k$, or the fresh reset bit if $A_1=r_j$. These two choices are
called an early read and a calibration read, respectively. In either
protocol, a controller may use previous complete records to select the
next trajectory's rule.

Let $R_E,R_C$ indicate the two read types. A fixed allocation $(u,v)$ uses
read probabilities $4u$ after either preserve label and $4v/3$ after either
reset label, independently of $O_2$. Thus
\begin{equation}
 \Pr(R_E=1)=u,\qquad \Pr(R_C=1)=v,\qquad
 0\le u\le1/4,\quad 0\le v\le3/4.
 \label{eq:cal-allocation}
\end{equation}
The total expected cost per trajectory is $u+v$. For lower bounds, we
also allow step-$2$ selection to depend on the first action label and on
$O_2$. The observation available before selection is blank after preserve
and has a known distribution after reset.

Throughout this appendix, set
\begin{equation}
 E_H=\{A_1,\ldots,A_{H-1}\text{ all preserve}\},\qquad
 q=4^{-(H-1)},\qquad s=4^{-(H-2)}=4q.
 \label{eq:cal-survival}
\end{equation}
Here $q=\Pr(E_H)$, and $s$ is the probability that the step-$2$ bit
survives to the terminal observation. Both probabilities are independent
of the bit and the readout rule. On $E_H$, the original seed is observed
exactly at the terminal step.

\subsection{Pre-reset reads alone retain the exponential rate}
\label{app:cal-early-only}

\begin{proposition}[Unknown bias under pre-reset readout]
\label{prop:unknown-bias}
For $H\ge3$ and $0<\varepsilon\le1/40$, the minimax trajectory complexity
under the pre-reset protocol is
$\Theta(q^{-1}\varepsilon^{-2})$ at success probability $2/3$, uniformly
over~\eqref{eq:cal-family}--\eqref{eq:cal-channel}. The lower bound allows
any causal read timing and adaptation across trajectories, with no
restriction on the number of eligible trajectories read.
\end{proposition}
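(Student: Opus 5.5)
The upper bound will come directly from the passive estimator, and the lower bound from a two-point construction. In that construction the parameter shift is offset by a bias shift, so that every read on a path that later resets has the same law under both hypotheses.

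\emph{Upper bound.} The original log is unchanged by any readout, so the estimator may ignore all reads. I will apply the Horvitz--Thompson estimator \eqref{eq:ht}, whose risk bound in Appendix~\ref{app:sharpness} depends only on the original log. It gives $n=O(q^{-1}\varepsilon^{-2})$ uniformly over $(\theta,\beta)$.

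\emph{Lower bound: choice of hypotheses.} Set $\theta_0=1/20$ and take
\[
(\theta_-,\beta_-)=(\theta_0-2\varepsilon,\ \varepsilon),\qquad
(\theta_+,\beta_+)=(\theta_0+2\varepsilon,\ -\varepsilon).
\]
Since $\varepsilon\le1/40$, both pairs are admissible, and $J_{\theta_+}-J_{\theta_-}=4\varepsilon$. The combination $\beta+\theta/2$ takes the same value under both hypotheses. I will then split a single trajectory with a fixed causal read rule into the three cases of Appendix~\ref{app:readout-likelihood}.
\begin{itemize}
\item \emph{No read, and the path eventually resets.} The record is independent of $(\theta,\beta)$, as in the proof of \cref{eq:rare-kl}.
\item \emph{A read occurs, and the path later resets.} The read time and the route before it are public and depend only on blank observations and preserve commands, so their law is parameter free. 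Given $K$, the post-reset suffix is independent of the seed bit $B$. Hence the joint law of the record is the known route law times the marginal law of $Y$ given $K$, which is Bernoulli with mean $1/4+d_K/2+\beta+\theta/2$. This mean, and therefore the whole factor, coincides under the two hypotheses.
\item \emph{The event $E_H$.} This event has probability $q$ under both hypotheses. The record involves the exact terminal bit $\Ber(d_K+\theta)$ and possibly a read $Y\mid B\sim C_\beta$.
\end{itemize}
So the two record laws differ only on $E_H$.

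\emph{Lower bound: divergence and conclusion.} By the chain rule, the per-trajectory KL is $q$ times the KL of the pair $(B,Y)$ given $K$. Both the parameters of the $B$-law and of the channel $C_\beta$ move by $O(\varepsilon)$. All probabilities involved lie in $[1/5,4/5]$ (for the terminal bit, $d_K+\theta\in[1/5,4/5]$), so the bound $\KL\le\chi^2$ gives KL at most $c\,\varepsilon^2$ for a universal $c$. This holds whether or not a read happened, and regardless of the read time. For adaptive protocols I will condition on the previous transcript. The same conditional rule then applies under both hypotheses, and its randomization contributes zero KL. Summing over trajectories gives total KL at most $c\,nq\varepsilon^2$. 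This bound holds even if every eligible trajectory is read, because the cost never enters. For $n\le 1/(8cq\varepsilon^2)$, Pinsker and Le Cam together with thresholding at the midpoint then give failure probability at least $3/8>1/3$.

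\emph{Main obstacle.} The hardest step is the second case: verifying rigorously that a pre-reset read, combined with the entire later log, carries no parameter information beyond $\beta+\theta/2$. This needs the conditional independence of the reset suffix from $B$ given $K$ and the route. It also needs the fact that the read timing depends only on public history, which is uninformative before the first reset. I will argue this by writing the full-record likelihood as a product of parameter-free route and suffix kernels and the single factor $\Pr(Y\mid K)$, exactly as in the derivation of \eqref{eq:readout-proof-score}.
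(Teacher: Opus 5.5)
Your proposal is correct and matches the paper's proof essentially step for step. It uses the same confounded pair $\theta_\pm=1/20\pm2\varepsilon$, $\beta_\pm=\mp\varepsilon$, which fixes $\beta+\theta/2$. It uses the same factorization showing that every reset-containing record has the same law under both hypotheses, so all divergence lives on $E_H$ and is bounded via $\chi^2$ on $[1/5,4/5]$. It handles adaptivity with the same conditional chain rule and gets the upper bound from the same passive Horvitz--Thompson estimator. The paper only makes your constant explicit: $c=125$, which gives the threshold $n\le(1000q\varepsilon^2)^{-1}$.
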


\begin{proof}
Choose
\begin{equation}
 \theta_\pm=1/20\pm2\varepsilon,\qquad
 \beta_\pm=\mp\varepsilon.
 \label{eq:cal-confounded-pair}
\end{equation}
Both parameter pairs are admissible and their target values are
$1/2\pm2\varepsilon$. Given the first preserve label $K=k$, a pre-reset
read has mean
\[
 1/4+d_k/2+\beta_\pm+\theta_\pm/2
 =1/4+d_k/2+1/40.
\]
The equality extends to all complete records that contain a reset.
To see this, fix a causal rule $\pi$. Before its only read, an eligible
route contains preserve commands and blank observations. Conditional on
$K$, this route and the read decision are independent of the seed.
The first reset draws a fresh bit from the known law associated with its
recorded command. Conditional on the recorded action route, the entire
original suffix after that reset is independent of the seed and of the
earlier sensor noise. Consequently, every record $z$
with a read and a later reset has probability
\begin{equation}
 P_{\theta,\beta}^{\pi}(z)
 =g_\pi(z)\big[(1-p_{\theta,k})C_\beta(y\mid0)
                    +p_{\theta,k}C_\beta(y\mid1)\big],
 \label{eq:cal-reset-factor}
\end{equation}
where $g_\pi(z)$ is parameter independent. A reset record without a read
is itself parameter independent. This includes a first-step reset.
Hence the two restricted complete-record laws agree on $E_H^c$.

On $E_H$, expose the common public route and read decision first.
The terminal bit has law $\Ber(p_{\theta_\pm,K})$. If a read was taken,
the record additionally contains $Y$ with law $C_{\beta_\pm}(\cdot\mid B)$.
For $a,b\in[1/5,4/5]$,
\begin{equation}
 \operatorname{kl}(a\Vert b)
 \le\frac{(a-b)^2}{b(1-b)}\le\frac{25}{4}(a-b)^2.
 \label{eq:cal-bernoulli-kl}
\end{equation}
The seed probabilities differ by $4\varepsilon$ and the conditional
sensor probabilities by $2\varepsilon$. The KL chain rule therefore gives
\begin{equation}
 \KL(P_-^\pi\Vert P_+^\pi)
 \le q\big(100\varepsilon^2+25\varepsilon^2\big)
 =125q\varepsilon^2.
 \label{eq:cal-early-kl}
\end{equation}
The second term retains the calibration information in the rare joint
observation $(B,Y)$.

For an adaptive controller, condition on each previous transcript.
The same conditional rule is used under either hypothesis, and the next
original trajectory is fresh. Applying~\eqref{eq:cal-early-kl} to each
conditional rule bounds the KL of $n$ complete records by
$125nq\varepsilon^2$. Private controller randomization has the same
conditional law under both hypotheses and adds no KL.
If $n\le(1000q\varepsilon^2)^{-1}$, Pinsker's inequality gives total
variation at most $1/4$. Thresholding any value estimator at $1/2$ and
applying Le Cam's bound yields
\begin{equation}
 \max_{\sigma\in\{-,+\}}
 \Pr_\sigma\bigl(|\widehat J-J_{\theta_\sigma}|>\varepsilon\bigr)
 \ge3/8.
 \label{eq:cal-testing-bound}
\end{equation}

For the upper bound, ignore the sensor and use
$X=q^{-1}\one_{E_H}R_H$. The first preserve label is uniform on $E_H$,
so $\E X=J_\theta$, $0\le X\le q^{-1}$, and
$\operatorname{Var}(X)\le q^{-1}$. The original trajectories remain
independent under adaptive readout. Bernstein's inequality gives
\[
 \Pr\left(\left|n^{-1}\sum_{i=1}^nX_i-J_\theta\right|>
 \varepsilon\right)
 \le2\exp(-3nq\varepsilon^2/8).
\]
Thus $n\ge8\log(6)/(3q\varepsilon^2)$ suffices, with rounding up to
an integer.
\end{proof}

The complete experiment identifies the value for every finite $H$ through
$E_H$. The proposition says that pre-reset reads alone do not change its
exponential sample requirement when the bias is unknown. The one-read
and fixed-gain assumptions specify the experiment: repeated measurements
of the same bit or measurements at different gains can carry additional
joint information.

\subsection{Complete-record information under calibration}
\label{app:cal-full-kl}

Fix any step-$2$ selection rule and let $u,v$ be its unconditional early
and calibration read probabilities. Let $P_-^{\pi},P_+^{\pi}$ denote its
complete-record laws for two parameter pairs, and write
$\Delta\theta=\theta_+-\theta_-$ and
$\Delta\beta=\beta_+-\beta_-$. There are three informative branches.

\paragraph{The original bit survives.}
On $E_H$, the terminal observation reveals $B\sim\Ber(d_K+\theta)$.
If an early read occurred, its outcome has conditional law
$C_\beta(\cdot\mid B)$. The probabilities of these events are $q$ and
$su=4qu$, respectively. The latter identity holds even for label-specific
selection: the remaining preservation probability is $s$ after either
first preserve label.

\paragraph{An early read is followed by a reset.}
This branch has probability $(1-s)u$. Its complete-record dependence on
the parameters is the mixture in~\eqref{eq:cal-reset-factor}. Given $K$,
the read has success probability
$1/4+d_K/2+\beta+\theta/2$. The remaining recorded suffix contributes
only a parameter-independent factor.

\paragraph{The first action resets and a calibration read occurs.}
Given $A_1=r_j$, the measured bit has known prior $\Ber(q_j)$, with
$(q_0,q_1)=(1/10,9/10)$. Let $Z_0$ be the complete original log. Its law
is known on this branch. Conditional on $Z_0$ and on selection, the
posterior
$w(Z_0)=\Pr(B'=1\mid Z_0,A_1=r_j,\text{selected})$ is also known.
The exact conditional readout law is
\begin{equation}
 Y\mid Z_0,\text{selected}
 \sim\Ber\bigl(1/4+\beta+w(Z_0)/2\bigr).
 \label{eq:cal-posterior-channel}
\end{equation}
This formula retains the correlation between $O_2$ and $Y$, and any
later observation of the same reset bit. It remains valid when selection
uses $O_2$. Alternatively, adding $B'$ to the record bounds its KL by
the conditional sensor KL; removing $B'$ can only reduce that bound.
This branch has probability $v$ and depends on $\beta$, not on $\theta$.

All other branches carry neither parameter. Combining these cases with
\eqref{eq:cal-bernoulli-kl} gives the uniform bound
\begin{equation}
 \begin{aligned}
 \KL(P_-^\pi\Vert P_+^\pi)\le\frac{25}{4}\big[&
 q(\Delta\theta)^2+4qu(\Delta\beta)^2\\
 &+u(1-4q)(\Delta\beta+\Delta\theta/2)^2
 +v(\Delta\beta)^2\big].
 \end{aligned}
 \label{eq:cal-full-kl}
\end{equation}
The $4qu$ term is the calibration supplied by an early read whose bit
also reaches the terminal observation. The bound accounts for this
information as well as for reset-generated calibration.

\subsection{Lower bound with adaptive category budgets}
\label{app:cal-budget-proof}

We prove the lower bound in Theorem~\ref{thm:readout-calibration}.
Define
\begin{equation}
 h_{u,v}=\begin{cases}uv/(u+v),&u+v>0,\\0,&u=v=0,\end{cases}
 \qquad Q_{u,v}=q+h_{u,v}.
 \label{eq:cal-information-scale}
\end{equation}
Take $\alpha=u/(u+v)$ when $u+v>0$, and $\alpha=0$ otherwise.
Use the fixed hypotheses
\begin{equation}
 \theta_\pm=1/20\pm2\varepsilon,\qquad
 \beta_\pm=\mp\alpha\varepsilon.
 \label{eq:cal-budget-pair}
\end{equation}
They belong to the parameter rectangle and have value gap $4\varepsilon$.
Their differences satisfy
$\Delta\beta=-\alpha\Delta\theta/2$ and
\begin{equation}
 u(1-\alpha)^2+v\alpha^2=h_{u,v}.
 \label{eq:cal-complement-identity}
\end{equation}
In~\eqref{eq:cal-full-kl}, the two terms on surviving records are at
most $(5/4)q(\Delta\theta)^2$, since $4u\le1$ and
$|\Delta\beta|\le|\Delta\theta|/2$.
The other two terms are at most
$h_{u,v}(\Delta\theta)^2/4$. Hence a fixed rule satisfies
\begin{equation}
 \KL(P_-^\pi\Vert P_+^\pi)
 \le(125q+25h_{u,v})\varepsilon^2
 \le125Q_{u,v}\varepsilon^2.
 \label{eq:cal-pair-kl}
\end{equation}

Now let trajectory $i$'s rule depend on the previous complete transcript
$T_{i-1}$. Denote its conditional read probabilities by $u_i,v_i$.
Given that transcript, the rule and the original observations available
before selection have the same law under both hypotheses. Thus
\eqref{eq:cal-full-kl} applies conditionally, including when selection
depends on $O_2$. Suppose the category budgets obey
\begin{equation}
 \E_{\theta,\beta}\sum_{i=1}^nR_{E,i}\le nu,\qquad
 \E_{\theta,\beta}\sum_{i=1}^nR_{C,i}\le nv
 \quad\text{for every }(\theta,\beta).
 \label{eq:cal-adaptive-caps}
\end{equation}
The tower property gives the same bounds for the sums of $u_i,v_i$.
The two hypotheses in~\eqref{eq:cal-budget-pair} depend only on the
prescribed caps, not on the controller's realized or parameter-dependent
allocation. In the KL chain rule, use $4u_i\le1$ for the surviving
terms and the two expectation bounds for the remaining terms. This gives
\begin{equation}
 \KL(P_-^{\mathcal A}\Vert P_+^{\mathcal A})
 \le125n\varepsilon^2Q_{u,v}.
 \label{eq:cal-adaptive-kl}
\end{equation}
For $n\le(1000\varepsilon^2Q_{u,v})^{-1}$, the testing argument leading
to~\eqref{eq:cal-testing-bound} gives worst-case failure probability at
least $3/8$. This proves the lower bound for adaptive rules under the
expected category budgets.

\subsection{An implementable estimator and its variance}
\label{app:cal-estimator}

For the upper bound, independently repeat the fixed allocation
in~\eqref{eq:cal-allocation}. The following estimator subtracts the
calibration mean from the early-read mean; it does not use $\beta$.
For $u,v>0$, define
\begin{equation}
 T_E=\frac{R_E}{u}\left(Y-\frac14-\frac{d_K}{2}\right),\qquad
 T_C=\frac{R_C}{v}\left(Y-\frac14-\frac{q_j}{2}\right),\qquad
 D=2(T_E-T_C).
 \label{eq:cal-difference-estimator}
\end{equation}
Each residual is evaluated only on its indicated read branch; the term
is zero otherwise. The indices $K$ and $j$ are the recorded first-action
labels. Selection is independent of $O_2$, so on a calibration branch
the measured bit retains its prior $\Ber(q_j)$. It follows that
\begin{equation}
 \E T_E=\beta+\theta/2,\qquad
 \E T_C=\beta,\qquad \E D=\theta.
 \label{eq:cal-difference-mean}
\end{equation}
The two residuals have absolute value at most one, and $R_ER_C=0$.
Consequently,
\begin{equation}
 \operatorname{Var}(D)\le\E D^2
 =4\E(T_E^2+T_C^2)
 \le4(1/u+1/v)=4/h_{u,v}.
 \label{eq:cal-difference-variance}
\end{equation}
The original log gives a second unbiased estimate of $\theta$:
\begin{equation}
 P=\begin{cases}(B-d_K)/q,&E_H,\\0,&E_H^c.
 \end{cases}
 \qquad \E P=\theta,\qquad \operatorname{Var}(P)\le1/q.
 \label{eq:cal-passive-estimator}
\end{equation}
Here $B=O_H$ on $E_H$, so every quantity used by the estimator is
observed. For $h_{u,v}>0$, combine the two estimates as
\begin{equation}
 T=\frac{qP+h_{u,v}D}{Q_{u,v}}.
 \label{eq:cal-combined-estimator}
\end{equation}
They can be correlated on the same trajectory. Retaining the covariance
term and applying Cauchy--Schwarz gives
\begin{equation}
 \begin{aligned}
 \operatorname{Var}(T)
 &=\frac{q^2\operatorname{Var}(P)+h_{u,v}^2\operatorname{Var}(D)
       +2qh_{u,v}\operatorname{Cov}(P,D)}{Q_{u,v}^2}\\
 &\le\frac{(\sqrt q+2\sqrt{h_{u,v}})^2}{Q_{u,v}^2}
 \le\frac5{Q_{u,v}}.
 \end{aligned}
 \label{eq:cal-combined-variance}
\end{equation}
If $u=0$ or $v=0$, take $T=P$; the same final bound holds.
For independent trajectories, use
\begin{equation}
 \widehat J=\frac9{20}+
 \operatorname{clip}_{[0,1/10]}
 \left(\frac1n\sum_{i=1}^nT_i\right).
 \label{eq:cal-value-estimator}
\end{equation}
Clipping cannot increase error. Chebyshev's inequality yields
\begin{equation}
 \sup_{\theta,\beta}
 \Pr\bigl(|\widehat J-J_\theta|>\varepsilon\bigr)
 \le\frac5{n\varepsilon^2Q_{u,v}}\le\frac13
 \quad\text{if }n\ge\frac{15}{\varepsilon^2Q_{u,v}}.
 \label{eq:cal-upper-bound}
\end{equation}
Together with~\eqref{eq:cal-adaptive-kl}, this proves
Theorem~\ref{thm:readout-calibration}, including allocations with one or
both read budgets equal to zero.

\subsection{Allocation under a total expected budget}
\label{app:cal-total-budget}

Suppose only the total expected number of reads is bounded by $nb$,
under every parameter pair. Set $B_b=\min\{b,1\}$. The complementary
term in~\eqref{eq:cal-information-scale} has maximum
\begin{equation}
 h_\star(b)=
 \max_{\substack{0\le u\le1/4,\;0\le v\le3/4\\u+v\le b}}
 \frac{uv}{u+v}
 =\begin{cases}
 B_b/4,&0\le B_b\le1/2,\\[2pt]
 1/4-1/(16B_b),&1/2<B_b\le1,
 \end{cases}
 \label{eq:cal-optimal-scale}
\end{equation}
where the ratio is zero at $u=v=0$. When both budgets are positive,
increasing either one increases the ratio. An optimum therefore uses
total budget $B_b$. For fixed
$u+v=B_b$, the product $uv$ is maximized by equal allocation until
the early-read capacity $1/4$ is reached. An attaining allocation is
\begin{equation}
 (u_\star,v_\star)=
 \begin{cases}
 (B_b/2,B_b/2),&B_b\le1/2,\\
 (1/4,B_b-1/4),&B_b>1/2.
 \end{cases}
 \label{eq:cal-rate-optimal-allocation}
\end{equation}

\begin{corollary}[Total-budget calibration rate]
\label{cor:cal-total-budget}
Under the step-$2$ calibration protocol, with total expected read count
at most $nb$ for every $(\theta,\beta)$, the minimax trajectory complexity
at success probability $2/3$ is
\begin{equation}
 n_\star^{\rm cal}(\varepsilon,b)
 =\Theta\!\left(\frac1{\varepsilon^2[q+h_\star(b)]}\right),
 \label{eq:cal-total-budget-rate}
\end{equation}
for $H\ge3$ and $0<\varepsilon\le1/40$. The lower bound allows
adaptation across trajectories. The fixed allocation
in~\eqref{eq:cal-rate-optimal-allocation} attains the rate.
\end{corollary}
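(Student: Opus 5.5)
The proof is a direct reduction to Theorem~\ref{thm:readout-calibration}, with an elementary optimization over the allocation $(u,v)$.

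\textbf{Lower bound.} Consider any adaptive algorithm whose total expected read count is at most $nb$ under every $(\theta,\beta)$. Its early reads and calibration reads then have some expected category totals, but these may depend on the parameter. So I will not invoke the theorem as a black box. Instead I will rerun the KL chain-rule argument of \cref{app:cal-budget-proof} with the fixed hypothesis pair \eqref{eq:cal-budget-pair}, taking $\alpha=u_\star/(u_\star+v_\star)$ from \eqref{eq:cal-rate-optimal-allocation}.

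For each trajectory $i$, the conditional bound \eqref{eq:cal-full-kl} gives a per-trajectory contribution of at most
\[
(125q+25[u_i(1-\alpha)^2+v_i\alpha^2])\varepsilon^2 .
\]
Here $u_i,v_i$ are the conditional read probabilities, which satisfy $u_i\le1/4$, $v_i\le3/4$, and $u_i+v_i\le1$.

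The key step is to bound the sum $\sum_i\E_{\theta_-}[u_i(1-\alpha)^2+v_i\alpha^2]$ using only the total budget. The map $(u,v)\mapsto u(1-\alpha)^2+v\alpha^2$ is linear. With $\alpha$ fixed, I therefore need
\[
\max\{u(1-\alpha)^2+v\alpha^2:\ u\le1/4,\ v\le3/4,\ u+v\le B_b\}\le C\,h_\star(b).
\]
I plan to verify this in the two regimes.

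\emph{Case $B_b\le1/2$.} Here $\alpha=1/2$, so the objective is $(u+v)/4\le B_b/4=h_\star$.

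\emph{Case $B_b>1/2$.} Here $\alpha=1/(4B_b)$. The objective is at most
\[
\tfrac14(1-\alpha)^2+(B_b-\tfrac14)\alpha^2 .
\]
A short calculation shows this equals $1/4-1/(16B_b)=h_\star$ exactly; this is the saddle property of $uv/(u+v)$ at the optimum. Alternatively, I can bound it crudely by a constant multiple of $h_\star$, using $h_\star\ge1/8$ in this regime.

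Averaging over trajectories, the tower property turns the expected-budget constraint into the same linear bound. Hence the total KL is at most $125n\varepsilon^2(q+h_\star(b))$ up to a constant. Pinsker's inequality and Le Cam's bound, applied as in \eqref{eq:cal-testing-bound}, then give the lower bound.

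\textbf{Upper bound.} I will run the fixed allocation $(u_\star,v_\star)$, which has expected cost $B_b\le b$ per trajectory, together with the estimator \eqref{eq:cal-value-estimator}. By \eqref{eq:cal-upper-bound}, $n\gtrsim\varepsilon^{-2}(q+h_\star(b))^{-1}$ suffices. If $b=0$, I use $T=P$.

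\textbf{Main obstacle.} The delicate point is that the algorithm chooses its split between early and calibration reads adaptively, so the worst-case hypothesis must be fixed in advance. The verification that the linear functional $u(1-\alpha)^2+v\alpha^2$ at the optimal $\alpha$ is maximized over the budget polytope at value $h_\star(b)$ is the minimax/saddle step I expect to need the most care. If the exact identity fails, a constant-factor bound is enough.
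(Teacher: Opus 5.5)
Your proposal is correct and follows the paper's proof essentially step for step. Your $\alpha=u_\star/(u_\star+v_\star)$ coincides with the paper's pre-chosen $\alpha_b$ (equal to $1/2$ for $B_b\le1/2$ and $1/(4B_b)$ otherwise); the paper likewise bounds $U(1-\alpha_b)^2+V\alpha_b^2\le h_\star(b)$ exactly at the vertex $(1/4,B_b-1/4)$, applies the conditional KL chain rule with average read counts taken under the negative hypothesis, and gets the upper bound from \eqref{eq:cal-upper-bound} at the fixed allocation. The one detail you should state explicitly is that $(1-\alpha)^2\ge\alpha^2$ when $B_b>1/2$, which is why the linear maximum sits at $u=1/4$, $v=B_b-1/4$; at $b=0$ any convention for $\alpha$ works, since $U=V=0$.
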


\begin{proof}
The upper bound follows from~\eqref{eq:cal-upper-bound}. To prove the
adaptive lower bound, choose a direction before observing any data:
\[
 \alpha_b=\begin{cases}
 1/2,&0\le B_b\le1/2,\\
 1/(4B_b),&B_b>1/2.
 \end{cases}
\]
For every feasible average allocation $(U,V)$,
\begin{equation}
 U(1-\alpha_b)^2+V\alpha_b^2\le h_\star(b).
 \label{eq:cal-uniform-direction}
\end{equation}
When $B_b\le1/2$, the left side is $(U+V)/4$.
When $B_b>1/2$, the coefficient of $U$ is at least that of $V$;
its linear maximum under the capacity and total-budget constraints
occurs at $U=1/4$, $V=B_b-1/4$. Substitution gives
$1/4-1/(16B_b)$.

Use~\eqref{eq:cal-budget-pair} with $\alpha=\alpha_b$. For any adaptive
controller, let $U,V$ be its expected average read counts under the
negative hypothesis. These obey the capacity constraints and
$U+V\le B_b$. Applying the conditional KL chain rule as before and
then~\eqref{eq:cal-uniform-direction} gives
$\KL(P_-^{\mathcal A}\Vert P_+^{\mathcal A})
 \le125n\varepsilon^2[q+h_\star(b)]$.
The same two-point testing bound completes the proof.
\end{proof}

The allocation in~\eqref{eq:cal-rate-optimal-allocation} optimizes the
rate term $uv/(u+v)$ exactly. It is therefore rate-optimal; the theorem
does not identify the allocation minimizing the exact finite-sample
risk. In particular, equal positive constant budgets provide an $H$-independent
rate, whereas spending the same budget on either type alone leaves the
factor $q^{-1}$.

\subsection{A finite experiment equivalent to the complete log}
\label{app:cal-sufficient}

The numerical comparison uses the complete-log likelihood through a
finite sufficient statistic. We give its factorization here for the
fixed allocation~\eqref{eq:cal-allocation}. Define the observable event
$S=\{A_2,\ldots,A_{H-1}\text{ all preserve}\}$, with probability $s$.
From each record retain the first action and its label, $S$, the read
indicator $R$, and $Y$ if read. After a first preserve, additionally
retain $B=O_H$ only on $S$. After a first reset, retain $O_2$, and
additionally $B'=O_H$ only on $S$. Thus $B$ and $B'$ are retained only
when observed at the terminal step.

Write $\mathrm b(p;x)=p^x(1-p)^{1-x}$ for a Bernoulli mass. For
$g\in\{c,r\}$, set
\[
 \rho_c=1/8,\quad \rho_r=3/8,\qquad
 \lambda_c=4u,\quad \lambda_r=4v/3,\qquad
 W_{g,S,R}=\rho_g\,\mathrm b(s;S)\,\mathrm b(\lambda_g;R).
\]
The first-action probabilities $\rho_g$ refer to each label separately.
Zero-probability categories are omitted. The category masses fall into
four cases; a sensor factor is absent when $R=0$.

\paragraph{First preserve, surviving bit.}
For $A_1=c_k$ and $S=1$, the mass is
\begin{equation}
 W_{c,1,R}\,\mathrm b(d_k+\theta;B)
 \big[\mathrm b(1/4+B/2+\beta;Y)\big]^R.
 \label{eq:cal-category-preserve-survive}
\end{equation}
The terminal observation supplies $B$, so this retains the joint
information in $(B,Y)$.

\paragraph{First preserve, overwritten bit.}
For $A_1=c_k$ and $S=0$, the mass is
\begin{equation}
 W_{c,0,R}
 \big[\mathrm b(1/4+d_k/2+\theta/2+\beta;Y)\big]^R.
 \label{eq:cal-category-preserve-reset}
\end{equation}
Before the first later reset, all ordinary observations are blank.
After that reset, the original suffix is independent of the seed.
Thus $Y$ carries all surviving parameter dependence on this branch.

\paragraph{First reset, surviving bit.}
Let $e_o(b)=\mathrm b(1/10+4b/5;o)$ be the original reset emission
mass. For $A_1=r_j$ and $S=1$, the category mass is
\begin{equation}
 W_{r,1,R}\,\mathrm b(q_j;B')\,e_{O_2}(B')
 \big[\mathrm b(1/4+B'/2+\beta;Y)\big]^R.
 \label{eq:cal-category-reset-survive}
\end{equation}
Conditional on $B'$, the ordinary observation and sensor noise are
independent; their marginal correlation is retained in this expression.

\paragraph{First reset, overwritten bit.}
For $A_1=r_j$ and $S=0$, the only original observation of the step-$2$
bit is $O_2$. Define
\begin{equation}
 m_j(o)=(1-q_j)e_o(0)+q_je_o(1),\qquad
 w_j(o)=\frac{q_je_o(1)}{m_j(o)}.
 \label{eq:cal-known-posterior}
\end{equation}
The category mass is
\begin{equation}
 W_{r,0,R}\,m_j(O_2)
 \big[\mathrm b(1/4+w_j(O_2)/2+\beta;Y)\big]^R.
 \label{eq:cal-category-reset-reset}
\end{equation}
For $(j,o)=(0,0),(0,1),(1,0),(1,1)$, respectively,
$w_j(o)=1/82,1/2,1/2,81/82$. Using the unconditional reset prior
$q_j$ in place of this posterior would discard information in the
complete log.

In all four cases, the omitted actions and observations have a
parameter-independent conditional distribution given the category.
Preserve commands retain the public memory and produce blank
observations; the first subsequent reset erases the relevant bit;
the remaining kernels are known. The terminal action is also known
and parameter independent. If $C(Z)$ denotes the retained category,
there is therefore one parameter-independent kernel $K$ such that
\begin{equation}
 P_{\theta,\beta}(Z=z)
 =p_{\theta,\beta}(C(z))K(z\mid C(z)).
 \label{eq:cal-sufficiency-factorization}
\end{equation}
One may take $K$ to be the conditional complete-record law at
$(\theta,\beta)=(1/20,0)$; all parameter pairs have the same positive
support. Conversely, $C$ is a deterministic function of the record.
Sampling the finite categories is thus equivalent to sampling the
complete statistical experiment.

Each category mass is a known weight times at most two Bernoulli
factors, and every success probability is affine in $(\theta,\beta)$.
The joint log likelihood is consequently concave on the parameter
rectangle. A joint category mass itself can contain the product
$\theta\beta$; its Bernoulli factors give the concavity.
This factorization permits the same joint likelihood estimator under
both allocations while retaining rare terminal-bit information and
ordinary reset observations.

\subsection{Equal-budget comparison}\label{app:cal-experiment}

We compare early-only readout, $(u,v)=(1/4,0)$, with early readout plus
calibration, $(u,v)=(1/8,1/8)$. Both spend one measurement per four
trajectories in expectation. Read selection occurs at $h=2$, independently
of $O_2$, and the logger does not use the measurement. The two designs share
all other kernels. Both jointly estimate $(\theta,\beta)$ by maximum
likelihood over $[0,1/10]\times[-1/20,1/20]$ using the complete observed log.
We sample the exactly sufficient categories in
Appendix~\ref{app:cal-sufficient}; no unobserved bit is supplied to the
estimator. The negative log likelihood is convex. If its minimum is not
unique, both designs select the minimum-norm point in the centered
coordinates $(\theta-1/20,\beta)$.

The main comparison uses $(\theta,\beta)=(0.09,-0.02)$ and $(0.01,0.02)$,
which have the same early-read mean given the first preserve label.
Figure~\ref{fig:calibration-budget} reports the square root of their equally
weighted MSE. The data-free estimator $\widehat J=1/2$ has RMSE $0.04$
at both points. At $H=8$ and $n=20{,}000$, the MLE's RMSE is $0.0555$ for early-only
readout and $0.0196$ with calibration. With calibration, the error remains
near $0.02$ for $H\in\{6,\ldots,10\}$ and decreases with sample size.

We also report $(\theta,\beta)=(0.07,0.02)$ and $(0.05,0)$ in
Figure~\ref{fig:calibration-sensitivity}. All four parameter settings were
fixed before inspecting the results. Each parameter, design, and setting
uses $200$ independent datasets, giving $20{,}800$ fits across the two
grids. Pointwise $95\%$ Monte Carlo intervals use $4{,}000$ bootstrap
resamples of squared errors, stratified by parameter for the two-point
average. These intervals describe simulation uncertainty, not confidence
intervals for the unknown target value.

The early-only MLE's RMSE need not increase monotonically with $H$. When rare surviving
seeds disappear from a dataset, the likelihood can have a flat direction;
the finite parameter box and the common tie rule then affect pointwise
risk. At the center, this rule can favor early-only readout. The plots thus
compare a concrete estimator at specified parameters, while
Theorem~\ref{thm:readout-calibration} characterizes uniform estimation over
the parameter box.

The supplement retains every replicate's category counts and fit diagnostics.
Exact rational
enumeration checks the category likelihood against $38{,}464$ complete-record
atoms for $H\in\{3,4,5\}$. All fits pass the constrained first-order checks.
On a fixed subset of $416$ fits, independent SciPy optimization matches
the objective values to numerical precision and meets the same tolerance.
Parameter estimates can differ along flat likelihood directions.
The supplied scripts reproduce the simulation, uncertainty intervals,
and figures from the recorded seed.

\begin{figure}[t]
  \centering
  \includegraphics[width=\textwidth]{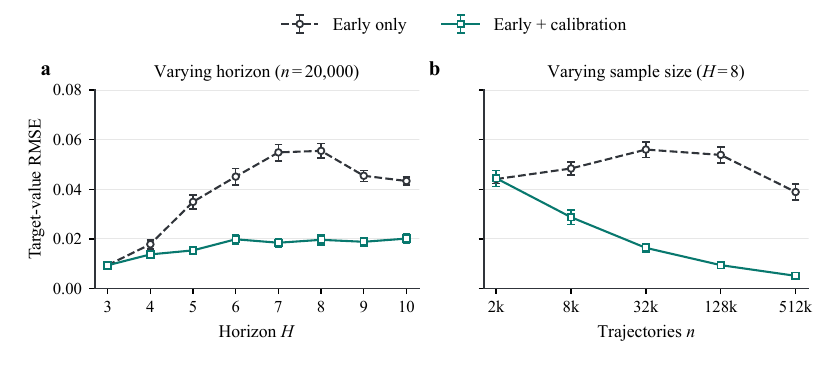}
  \caption{\textbf{Equal readout budgets with unknown sensor bias.}
  Both designs use complete-log joint maximum likelihood and expected budget
  $1/4$. Points show the square root of the equally weighted MSE at
  $(\theta,\beta)=(0.09,-0.02)$ and $(0.01,0.02)$, with $200$ independent
  datasets per parameter and setting. Bars are pointwise $95\%$ Monte Carlo
  intervals from $4{,}000$ stratified bootstrap resamples.
  (a) Varying $H$ at $n=20{,}000$. (b) Varying $n$ at $H=8$.
  Lines guide the eye.}
  \label{fig:calibration-budget}
\end{figure}

\begin{figure}[p]
  \centering
  \includegraphics[width=\textwidth]{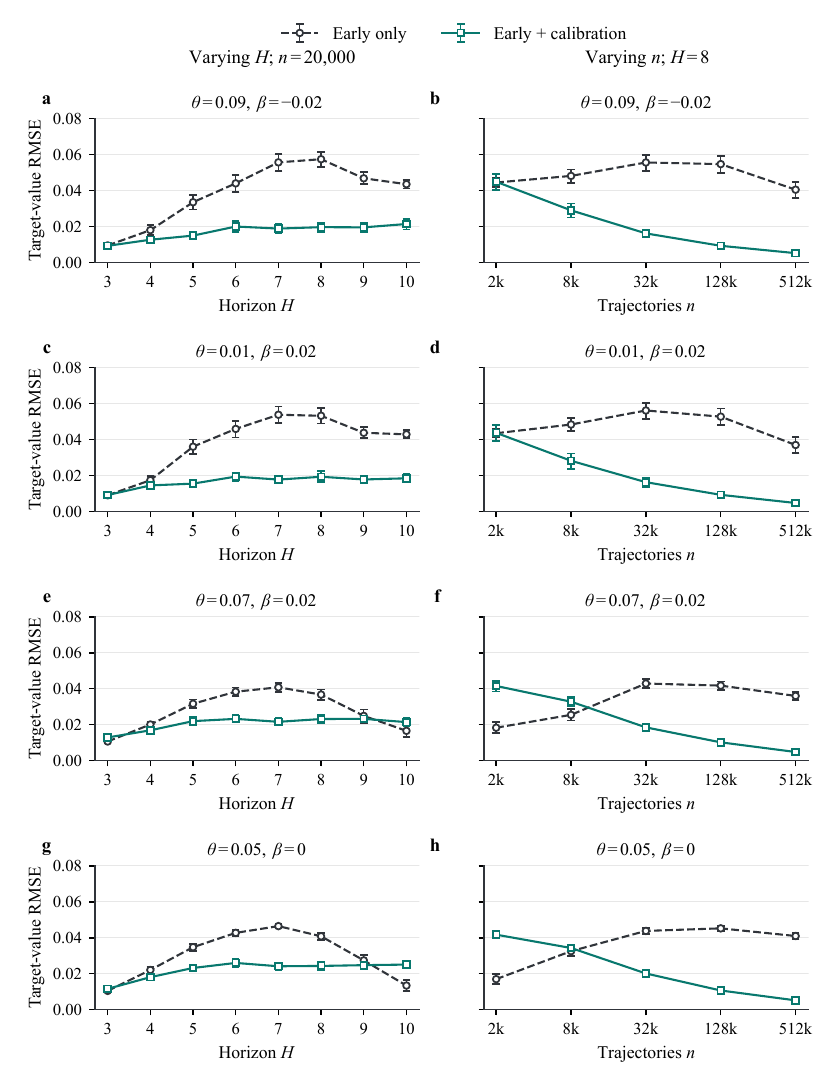}
  \caption{\textbf{All four parameter settings.}
  Rows show the indicated $(\theta,\beta)$ with the estimators and budgets
  of Figure~\ref{fig:calibration-budget}. The left column varies $H$ at
  $n=20{,}000$; the right varies $n$ at $H=8$. Points are RMSE over $200$
  independent datasets; bars are pointwise $95\%$ Monte Carlo intervals
  from $4{,}000$ bootstrap resamples. The center case illustrates that
  improved uniform rates do not imply smaller risk at every parameter.}
  \label{fig:calibration-sensitivity}
\end{figure}

\clearpage
\section{History-conditioned value representations}\label{vc:appendix}

We prove Theorem~\ref{thm:value-calibration} by comparing scalar Bellman residuals. The argument does not recover the observation or transition kernels. We then prove Proposition~\ref{prop:value-gap} for the common-operator construction and give an example in which value calibration is possible but full-state calibration is not.

\subsection{Assumptions and an explicit bound}\label{vc:assumptions}

Let $\mathcal M$ be a known finite model class with $K=|\mathcal M|$, a common known logger, and a common known memoryless target. State, observation, and action coordinates are fixed across models. We use the timing $S_h\to O_h\to A_h\to R_h\to S_{h+1}$, with common known rewards $R_h=r_h(O_h,A_h)\in[0,1]$. Given $(S_h,A_h)$, the next state is independent of the earlier history and current observation. There is no transition after the terminal reward. Write $P_M=P_b^M$ for the complete logged law, including the terminal action. The reward coordinates are determined by the observation--action record and may be omitted without changing likelihood ratios or statistical distances.

Write $\tau_{h-1}=(O_1,A_1,\ldots,O_{h-1},A_{h-1})$ and
$F_h=(O_h,A_h,\ldots,O_{H-1},A_{H-1},O_H)$. Let
\begin{equation}\label{vc:weights}
 w_h=\frac{\pi_{e,h}(A_h\mid O_h)}{\pi_{b,h}(A_h\mid\tau_{h-1},O_h)},
 \qquad 0\le w_h\le C_A.
\end{equation}
The logger has full action support, with inverse probabilities bounded by $C_A$. In every model,
$\E_M[w_h\mid\tau_{h-1},O_h]=1$.

For each candidate $N$ and each $2\le h\le H$, suppose there is a fixed observable scalar function $g_h^N(\tau_{h-1},F_h)$ such that
\begin{equation}\label{vc:calibration}
 \E_M[g_h^N\mid S_h=s,\tau_{h-1}=\tau]=V_h^N(s)
 \qquad\text{for every }M,N\in\mathcal M.
\end{equation}
The identity is required on positive-probability conditioning events. Here $V_h^N$ is the target continuation value in candidate $N$. The function may depend on $N$, but the same function must satisfy the identity in every generating model $M$. Suppose
\begin{equation}\label{vc:moment}
 \max_{M,N\in\mathcal M}\max_{2\le h\le H}\E_M[(g_h^N)^2]\le C_V.
\end{equation}
Only a global second-moment bound is required. The functions may use the observed past and may take negative values.

Let $b_h^M(\tau)=\Pp_M(S_h=\cdot\mid\tau)$ and
$\Sigma_{H,h}^M=\E_M[b_h^M(b_h^M)^\top]$. Assume
\begin{equation}\label{vc:coverage}
 \lambda_{\min}(\Sigma_{H,h}^M)\ge C_{\rm hist}^{-1},
 \qquad M\in\mathcal M,\quad 2\le h\le H.
\end{equation}
We include the terminal cut and impose no coverage condition at the empty initial history.

Use the unnormalized Hellinger convention
$\mathsf h(P,Q)^2=\sum_z(\sqrt{P(z)}-\sqrt{Q(z)})^2$.
An explicit version of Theorem~\ref{thm:value-calibration} is
\begin{equation}\label{vc:lipschitz}
 |J_M-J_N|\le L\,\mathsf h(P_M,P_N),
\end{equation}
where
\begin{equation}\label{vc:constant}
\begin{aligned}
 A_0&=\sqrt{C_A}+C_A\sqrt{C_V},\\
 A_{\mathrm{int}}&=\sqrt{C_A}+(C_A+1)\sqrt{C_V},\\
 A_{\mathrm{end}}&=\sqrt{C_A}+\sqrt{C_V},\\
 L&=2A_0+2C_{\rm hist}\bigl[(H-2)A_{\mathrm{int}}+A_{\mathrm{end}}\bigr].
\end{aligned}
\end{equation}
Consequently, finite-class maximum likelihood followed by evaluation of the selected model satisfies
\begin{equation}\label{vc:sample}
 \Pp_{M_\star}\!\left(
 |\widehat J-J_{M_\star}|>
 L\sqrt{\frac{2\log(K/\delta)}n}\right)\le\delta.
\end{equation}
Since $C_A,C_{\rm hist}\ge1$, we have
$L\le2HC_{\rm hist}C_A(1+2\sqrt{C_V})$ and
$(1+2\sqrt{C_V})^2\le5(1+C_V)$. Thus the sufficient sample size
\begin{equation}\label{vc:simple-sample}
 n\ge\frac{40H^2C_{\rm hist}^2C_A^2(1+C_V)}{\epsilon^2}
 \log\frac K\delta
\end{equation}
gives error at most $\epsilon$ with probability at least $1-\delta$.

\subsection{Scalar residual comparison}\label{vc:residual-proof}

For a common vector-valued statistic $Z$, suppose the squared Euclidean norm
has expectation at most $A^2$ under both $P$ and $Q$. Cauchy--Schwarz gives
\begin{equation}\label{vc:hellinger-moment}
 \|\E_PZ-\E_QZ\|_2\le2A\,\mathsf h(P,Q).
\end{equation}
Indeed, factor $P-Q=(\sqrt P-\sqrt Q)(\sqrt P+\sqrt Q)$ and use
$(\sqrt P+\sqrt Q)^2\le2(P+Q)$. The inequality does not require mutual absolute continuity.

Let $K_h^M$ and $\rho_h^M$ be the state transition and immediate reward induced by the target:
\begin{equation}\label{vc:target-kernels}
\begin{aligned}
 K_h^M(s',s)&=\sum_o O_h^M(o\mid s)
                  \sum_a\pi_{e,h}(a\mid o)T_{a,h}^M(s'\mid s),\\
 \rho_h^M(s)&=\sum_o O_h^M(o\mid s)
                  \sum_a\pi_{e,h}(a\mid o)r_h(o,a).
\end{aligned}
\end{equation}
Memorylessness of the target makes these quantities independent of the past. Set $g_{H+1}^N=V_{H+1}^N=0$, omitting the transition term at the terminal step. For fixed $M,N$, define
\begin{equation}\label{vc:residuals}
\begin{aligned}
 d_h^{M,N}&=\rho_h^M+(K_h^M)^\top V_{h+1}^N-V_h^N,\\
 R_{N,h}&=w_h\{r_h+g_{h+1}^N\}-g_h^N,
 \qquad 2\le h\le H.
\end{aligned}
\end{equation}
For $h<H$, first condition $g_{h+1}^N$ on $(S_{h+1},\tau_h)$ and apply~\eqref{vc:calibration}; then importance-weight the current action. This is valid because $w_h$ is measurable with respect to $\tau_h$. Together with the analogous reward identity, it yields
\begin{equation}\label{vc:conditional-residual}
 \E_M[R_{N,h}\mid\tau_{h-1}]
 =(d_h^{M,N})^\top b_h^M(\tau_{h-1}).
\end{equation}
The same formula holds at $h=H$ without a next-value term. In model $N$, its Bellman equation gives
$\E_N[R_{N,h}\mid\tau_{h-1}]=0$.

Extend $b_h^M$ arbitrarily as a probability vector on histories with zero $P_M$ probability. It is then a fixed observable instrument of norm at most one, and
\begin{equation}\label{vc:instrument}
 \E_M[b_h^M R_{N,h}]=\Sigma_{H,h}^M d_h^{M,N},
 \qquad \E_N[b_h^M R_{N,h}]=0.
\end{equation}
The second identity uses conditional centering under $N$; it does not require the instrument to equal $N$'s belief.

Under any candidate logged law $P$, bounded rewards and $w_h^2\le C_Aw_h$ imply
$\E_P[(w_hr_h)^2]\le C_A$. Moreover,
$\E_P[(w_hg_{h+1}^N)^2]\le C_A^2C_V$.
These bounds use only the global moment in~\eqref{vc:moment}. The $L_2$ triangle inequality gives
\begin{equation}\label{vc:residual-moment}
 \|R_{N,h}\|_{L_2(P)}\le
 \begin{cases}
 A_{\mathrm{int}},&2\le h<H,\\
 A_{\mathrm{end}},&h=H.
 \end{cases}
\end{equation}
Multiplication by $b_h^M$ cannot increase this bound. Applying~\eqref{vc:hellinger-moment} to~\eqref{vc:instrument} and then using~\eqref{vc:coverage}, we obtain
\begin{equation}\label{vc:residual-bound}
 \|d_h^{M,N}\|_\infty\le\|d_h^{M,N}\|_2
 \le2C_{\rm hist} A_h\,\mathsf h(P_M,P_N),
\end{equation}
with the appropriate $A_h$ from~\eqref{vc:residual-moment}.

\subsection{The first step and finite-class estimation}\label{vc:initial-proof}

Consider the observable statistic
$Z_N=w_1\{r_1+g_2^N\}$. Calibration at step $2$ implies
\begin{equation}\label{vc:first-step}
 \E_MZ_N=\E_{M,\pi_e}[r_1+V_2^N(S_2)],
 \qquad \E_NZ_N=J_N.
\end{equation}
This handles an arbitrary initial state distribution without an initial-state decoder or an invertible empty-history Gram. The same moment calculation gives $\|Z_N\|_{L_2(P)}\le A_0$ under every candidate. Hence
$|\E_MZ_N-J_N|\le2A_0\mathsf h(P_M,P_N)$.

Along a target trajectory in $M$, the value terms in~\eqref{vc:residuals} telescope:
\begin{equation}\label{vc:telescope}
 J_M-\E_MZ_N
 =\sum_{h=2}^H\E_{M,\pi_e}[d_h^{M,N}(S_h)].
\end{equation}
Combining this identity with~\eqref{vc:residual-bound} proves~\eqref{vc:lipschitz} with the constant in~\eqref{vc:constant}.

For completeness, let $\widehat M$ maximize likelihood among the $K$ candidates. Under a true model $M_\star$, the probability that a fixed $N$ has likelihood at least that of $M_\star$ is at most
\begin{equation}\label{vc:likelihood}
 \left(\sum_z\sqrt{P_{M_\star}(z)P_N(z)}\right)^n
 =\left(1-\frac{\mathsf h(P_{M_\star},P_N)^2}{2}\right)^n
 \le e^{-n\mathsf h(P_{M_\star},P_N)^2/2}.
\end{equation}
This is Markov's inequality applied to the square root of the likelihood ratio. A union bound and~\eqref{vc:lipschitz} prove~\eqref{vc:sample}. The estimator uses candidate logged laws and candidate values. The common functions in~\eqref{vc:calibration} certify the class's value--distribution relation; they need not be learned or supplied from the unknown generating model.

\subsection{Value representation without full-state recovery}\label{vc:strict-example}

A common state decoder $\psi_h$ immediately gives value functions
$g_h^N=(V_h^N)^\top\psi_h$. The converse need not hold, even with full-rank history coverage and different target values.

Take $H=2$, a single initial state, and four first-step actions $a_{i,j}$ with $i,j\in\{0,1\}$. The logger is uniform. The next state is $(X,Z)\in\{0,1\}^2$, with conditionally independent coordinates given the first action. In model $\mathsf{M}_0$, $X\sim\Ber(1/4+i/2)$; in $\mathsf{M}_1$, $X\sim\Ber(3/4-i/2)$. Both use $Z\sim\Ber(1/4+j/2)$. Set $R_1=0$. The terminal observation and reward are $O_2=R_2=X$. Terminal actions are uniform and have no effect. The target chooses $i=1$ and is uniform in $j$, so
$J_{\mathsf{M}_0}=3/4$ and $J_{\mathsf{M}_1}=1/4$.

For either candidate, $V_2^N(x,z)=x$. Thus the same $g_2^N=O_2$ satisfies~\eqref{vc:calibration}, with $C_V=1/2$ and $C_A=4$. Put
$W=\left(\begin{smallmatrix}3/4&1/4\\1/4&3/4\end{smallmatrix}\right)$.
The four history beliefs form $W\otimes W$, up to permutation of the first-action labels. Their Gram is
$\frac14(WW^\top)\otimes(WW^\top)$, with minimum eigenvalue $1/64$, so $C_{\rm hist}=64$.
Yet no full-state history-calibrated decoder exists. At a fixed first-action history and a fixed $X=x$, both values of $Z$ have positive probability and yield exactly the same observable future. The conditional means of a common decoder cannot equal both $e_{(x,0)}$ and $e_{(x,1)}$. Only the value-relevant coordinate needs to be recovered in this example.

\subsection{Representation cost in the common-operator construction}\label{vc:gap-proof}

In the construction of Appendix~\ref{ko:construction}, the target preserves every current bit. Therefore, for every candidate $N$ and every $h\ge2$,
\begin{equation}\label{vc:bit-value}
 V_h^N(\mathsf{H}_b)=V_h^N(\mathsf{R}_b)=b.
\end{equation}
Let $C_V^\star$ be the infimum of
$\max_{M\in\{-,+\},\,2\le h\le H}\E_M[g_h^2]$
over common functions calibrated to this bit at every history. Since the values in~\eqref{vc:bit-value} do not depend on $N$, allowing a separate family for each candidate cannot reduce this infimum.

\begin{proposition}[Marginal and history-conditioned value calibration]\label{prop:value-gap}
In the two-model class of Proposition~\ref{prop:known-operators}, there are
common marginal value functions $g_h^{\rm marg}(G_h)$, where
$G_h=(O_h,\ldots,O_H)$, such that
\begin{equation}\label{vc:marginal-value-statement}
 \E_M[g_h^{\rm marg}\mid S_h=s]=V_h^N(s),\qquad
 \max_{M,\,2\le h\le H}\E_M[(g_h^{\rm marg})^2]<2660.
\end{equation}
The optimal history-conditioned cost satisfies
\begin{equation}\label{vc:gap-statement}
 \frac{113}{256}+\frac{15}{256}4^{H-2}
 \le C_V^\star\le
 \frac{35}{64}+\frac{15}{256}4^{H-2}.
\end{equation}
\end{proposition}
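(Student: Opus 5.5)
The proof has two parts: an explicit marginal decoder, and a history-by-history optimization for the conditioned cost.

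\textbf{Marginal decoder.} For the first claim I would use the common left inverses $D_h$ of $U_h^{\rm obs}$ from \eqref{ko:leftinverse}. By Proposition~\ref{prop:known-operators} these matrices are shared by both models. Set
\[
 g_h^{\rm marg}(G_h)=(V_h)^\top D_h e_{G_h},
\]
where $V_h(\mathsf{H}_b)=V_h(\mathsf{R}_b)=b$ by \eqref{vc:bit-value}. Then $D_hU_h^{\rm obs}=I$ gives $\E_M[g_h^{\rm marg}\mid S_h=s]=V_h(s)$ in either model.

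For the second moment I would use the block structure from Appendix~\ref{ko:revealing}. The current observation separates the hidden and reset blocks, and within each block $\Sigma^{\rm obs}_{F,h}$ has eigenvalues $1$ and $\lambda\ge(24/875)^2$. Writing $P_0,P_1$ for the two columns of a block, the value $g$ on an outcome $y$ is the posterior-weighted combination $\bigl[\text{posterior}(y)\bigr]^\top\Sigma^{-1}V$. Its second moment under $P_b$ is then at most about $\lambda^{-1}$, up to a factor $2$ from the two columns. This gives a bound of order $2(875/24)^2<2660$, and reset blocks (TV $4/5$) and the cut $h=H-1$ (TV $\ge1/4$) contribute less. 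I expect the constant to need a short explicit calculation of $\|\Sigma^{-1}(0,1)^\top\|$ along the $(1,-1)$ eigendirection.

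\textbf{History-conditioned lower bound.} Fix $h\ge2$ and a history $\tau_{h-1}$ in which all actions preserve. This fixes the memory $m$, the first label $(k,m_1)$, and the hidden posterior $p=\Pp(B=1\mid\tau)$, with $p$ ranging over $\{1/8,3/8,5/8,7/8\}$ across histories. Given $(\tau,\mathsf{H}_b)$, the logger's remaining actions do not depend on $b$. So the future law is
\[
 (1-s)R+sQ_b,\qquad s=4^{-(H-h)},
\]
where $R$ is the common law of reset-containing suffixes and $Q_0,Q_1$ are the all-preserve suffixes ending in terminal bit $0$ or $1$. These three laws have disjoint supports.

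Calibration at $b=0,1$ forces $s(\E_{Q_1}g-\E_{Q_0}g)=1$. Minimizing $\E[g^2\mid\tau]$ is then a two-variable quadratic problem. Take $g$ constant on each support: $x$ on $R$, and $y_0,y_1$ on the $Q_b$ supports. The constraints are $(1-s)x+sy_b=b$, and the objective is $(1-s)x^2+s[(1-p)y_0^2+py_1^2]$. Jensen shows that constant values are optimal.

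Solving this problem gives a leading term $s^{-1}p(1-p)$ plus $O(1)$. Multiplying by the probability $4^{-(h-1)}/16$ of each all-preserve history and summing over the four seed cells, whose $p(1-p)$ values average to $15/64$, yields a term of order $4^{H-h}4^{-(h-1)}\cdot4^{h-2}$. This is largest at $h=2$, where it equals $\tfrac{15}{256}4^{H-2}$.

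Histories containing a reset, and the reset-state blocks, add nonnegative contributions. I would lower-bound them by their explicit $O(1)$ amount, which should produce the $113/256$.

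\textbf{Upper bound.} I would use the minimizing $g$ above on all-preserve histories. On histories containing a reset, the current bit has survived only through preserves since the last reset, so I would apply the same three-support construction there. Those histories have posterior bounded away from $0$ and $1$, and their probability weights sum to an amount whose total contribution is $O(1)$. Summing yields the constant $35/64$.

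The main obstacle is the exact bookkeeping of constants. In particular, the reset-history contributions across all cuts $h$ and both models must combine to exactly $113/256$ and $35/64$, and I must confirm that $h=2$ is the maximizing cut. I would first compute the $h=2$ cut in closed form and then verify that the per-cut costs at larger $h$ decrease.
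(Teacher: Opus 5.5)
Your marginal-decoder argument is fine and is essentially the paper's: $g_h^{\rm marg}=v^\top D_he_{G_h}$, with second moment at most $v^\top\Sigma_h^{-1}v\le2\|\Sigma_h^{-1}\|_1\le2C_F^{\rm obs}$.

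\textbf{Lower bound.} This is where the gap is. You solve the erasure problem at a fixed first-preserve history \emph{inside one model}, with seed prior $p\in\{1/8,3/8,5/8,7/8\}$. Its optimum is $p^2+p(1-p)/r$ with $r=4^{-(H-2)}$. On these four cells, however, $p(1-p)$ takes the values $7/64,15/64,15/64,7/64$. These average to $11/64$, not $15/64$. Carried through, your per-model argument yields only $\frac{117}{256}+\frac{11}{256}4^{H-2}$, which is strictly below the claim for every $H\ge3$. The per-model optimum at $h=2$ really does cost only $\frac{11}{256}4^{H-2}+O(1)$, so no bookkeeping of $O(1)$ terms can repair this.

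The missing idea is that one function $g_2$ must be calibrated in both models, while the future laws $R,Q_0,Q_1$ given $(\tau,S_2=\mathsf{H}_b)$ are common to the two models. Hence
$\E_M[g_2^2\mid\tau]=(1-r)\E_R[g_2^2]+r\bigl\{(1-p_M)\E_{Q_0}[g_2^2]+p_M\E_{Q_1}[g_2^2]\bigr\}$
is affine in the model's prior, and the calibration constraints are model-free. Bound the maximum over models below by their average. This replaces $p_M$ by $c_m=\tfrac12+\tfrac{2m-1}8\in\{3/8,5/8\}$. The erasure bound at $p=c_m$ then gives $\frac1{16}\sum_{k,m}\bigl(c_m^2+c_m(1-c_m)/r\bigr)=\frac{17}{256}+\frac{15}{256r}$. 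The first-reset histories (mass $3/4$, uniform bit, $\E[g_2^2\mid S_2,\tau]\ge b$) add $3/8$, which gives exactly $\frac{113}{256}$. This cross-model bound beats averaging per-model optima because $p\mapsto p^2+p(1-p)/r$ is concave. Only the cut $h=2$ is needed.

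\textbf{Upper bound.} It has the mirror-image problem. Your per-history minimizer takes the value $x=p_M$ on the erasure support, so it depends on the model and is not a common function. If you use one model's minimizer in the other, that model's cost rises by $(1-r)(p_+-p_-)^2/r=(1-r)/(4r)$ per history. Its leading coefficient then becomes $\frac{27}{256}$ instead of $\frac{15}{256}$. The paper instead centers at the averaged prior: $g_2=c_m+\one\{E_2\}(Y-c_m)/r$. Summing over $k$ inside either model then gives $c_m^2+c_m(1-c_m)/r$ per writer $m$, the same in both models.

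At $h=2$ the first-reset histories have current state $\mathsf{R}_b$ and mass $3/4$. You must decode these from the current noisy observation: $g=(10X_2-1)/8$ has second moment $b+9/64$, for a total of $\frac{123}{256}$. A survival-based ``three-support'' decoder there would cost about $\frac34\cdot\frac9{100}4^{H-2}$. That already exceeds $\frac{15}{256}4^{H-2}$ and spoils the leading constant.

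Finally, $C_V$ is a maximum over cuts, not a sum. The constant $\frac{35}{64}=\frac{123}{256}+\frac{17}{256}$ comes from $h=2$ alone. Each cut $3\le h<H$ is handled separately: $\E_M[g_h^2]=\frac{139}{256}+\frac1{16r_h}$ with $r_h\ge4r$. At those cuts the reset-then-preserve histories cost order $1/r_h$, not $O(1)$.
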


\paragraph{An erasure experiment at a fixed history.}
Fix a first-preserve history $\tau=(\mathrm{start}_1,c_{k,m})$ and write $r=4^{-(H-2)}$. Conditional on $S_2=\mathsf{H}_b$ and this history, the full future law is
\begin{equation}\label{vc:erasure}
 Q_b=(1-r)R+rT_b.
\end{equation}
The three laws $R,T_0,T_1$ have pairwise disjoint supports. A subsequent reset erases the seed, giving the same $R$ for either bit; otherwise the terminal observation reveals it. These conditional future laws are also common across models.

For a calibrated scalar $g_2$, set $c=\E_Rg_2$ and $m_b=\E_{T_b}g_2$. Calibration requires
$(1-r)c+rm_0=0$ and $(1-r)c+rm_1=1$.
If the seed prior at this history is $p$, Jensen's inequality gives
\begin{equation}\label{vc:erasure-moment}
\begin{aligned}
 \E[g_2^2\mid\tau]
 &\ge(1-r)c^2+r\{(1-p)m_0^2+pm_1^2\}\\
 &=p^2+\frac{p(1-p)}r+\frac{1-r}{r}(c-p)^2
 \ge p^2+\frac{p(1-p)}r.
\end{aligned}
\end{equation}
Infinite second moments already satisfy the desired lower bound, so all means above may be assumed finite.

\paragraph{A lower bound for common functions.}
Average the two model laws at each fixed first-preserve history. Because $g_2$ is common and the conditional laws in~\eqref{vc:erasure} are common, the averaged seed prior is
$c_m=1/2+(2m-1)/8$.
Each of the four histories has probability $1/16$ in both models. Their contribution to the average second moment is therefore at least
\begin{equation}\label{vc:preserve-lower}
 \frac1{16}\sum_{k,m}\left(c_m^2+\frac{c_m(1-c_m)}r\right)
 =\frac{17}{256}+\frac{15}{256r}.
\end{equation}
The initial-reset histories add at least $3/8$: their total mass is $3/4$, their bit marginal is uniform, and conditional calibration gives $\E[g_2^2\mid S_2,\tau]\ge b^2=b$. Since a maximum is at least an average,
\begin{equation}\label{vc:cost-lower}
 C_V^\star\ge\frac{113}{256}+\frac{15}{256}\,4^{H-2}.
\end{equation}

\paragraph{A common construction with the same leading term.}
Let $E_h$ be the event that all actions from $h$ through $H-1$ preserve, let $r_h=4^{-(H-h)}$, and write $Y\in\{0,1\}$ for the terminal bit. The previous action identifies the current hidden/reset type. On a hidden history before step $H$, take
\begin{equation}\label{vc:centered-bridge}
 g_h=c_h(\tau)+\frac{\one\{E_h\}}{r_h}\{Y-c_h(\tau)\},
 \qquad
 c_h(\tau)=
 \begin{cases}
 c_m,&h=2\text{ and the first action writes }m,\\
 1/2,&h\ge3.
 \end{cases}
\end{equation}
On a reset history before step $H$, use $g_h=(10X_h-1)/8$, where $X_h\in\{0,1\}$ is the current noisy observation. Finally set $g_H=Y$.
All functions are observable and common. Conditional on any state and history, $E_h$ has probability $r_h$ and reveals the current bit at the terminal step. The centered function in~\eqref{vc:centered-bridge} is therefore calibrated. The reset function is calibrated because its observation has mean $1/10+(4/5)b$.

For a reset bit $b$, its function has second moment $b+9/64$. The reset type has probability $3/4$ with uniform bit marginal, contributing $123/256$ at every preterminal cut. At $h=2$, each hidden-type event with writer $m$ has probability $1/8$ and bit mean $c_m$. Direct substitution gives
\begin{equation}\label{vc:cost-upper2}
 \E_M[g_2^2]
 =\frac{123}{256}+\frac{17}{256}+\frac{15}{256r}
 =\frac{35}{64}+\frac{15}{256r}.
\end{equation}
For $3\le h<H$, the hidden type has mass $1/4$ and uniform bit marginal, so
$\E_M[g_h^2]=139/256+1/(16r_h)$.
Here $r_h\ge4r$, making this moment smaller than~\eqref{vc:cost-upper2}. At $h=H$, the moment is $1/2$. Combining with~\eqref{vc:cost-lower} yields
\begin{equation}\label{vc:cost-sandwich}
 \frac{113}{256}+\frac{15}{256}4^{H-2}
 \le C_V^\star\le
 \frac{35}{64}+\frac{15}{256}4^{H-2}.
\end{equation}
The additive gap is $27/256$, independent of $H$.

\paragraph{The same value is cheap to calibrate marginally.}
Let $U_h^{\mathrm{obs}}$ be the common marginal operator, and define
\begin{equation}\label{vc:weighted-left-inverse}
 W_h=\operatorname{diag}(U_h^{\mathrm{obs}}\mathbf1),\qquad
 \Sigma_h=(U_h^{\mathrm{obs}})^\top W_h^{-1}U_h^{\mathrm{obs}},\qquad
 D_h=\Sigma_h^{-1}(U_h^{\mathrm{obs}})^\top W_h^{-1},
\end{equation}
restricting inverses to positive row support. With $v=(0,1,0,1)^\top$, the common statistic
$g_h^{\mathrm{marg}}(G_h)=v^\top D_he_{G_h}$ satisfies
$\E_M[g_h^{\mathrm{marg}}\mid S_h=s]=v(s)$.
Since $\operatorname{diag}(U_h^{\mathrm{obs}}p_h)\preceq W_h$,
\begin{equation}\label{vc:marginal-cost}
 \E_M[(g_h^{\mathrm{marg}})^2]
 \le v^\top\Sigma_h^{-1}v
 \le2\|\Sigma_h^{-1}\|_1
 \le2C_F^{\mathrm{obs}}<2660,
 \qquad 2\le h\le H-1.
\end{equation}
The terminal bit itself handles $h=H$. Equations~\eqref{vc:cost-sandwich} and~\eqref{vc:marginal-cost} concern the same target continuation value. Their exponential separation comes from retaining the history in the calibration identity, not from recovering value-irrelevant state coordinates.

\subsection{A finite-dimensional certificate}\label{vc:computable-certificate}

For finite state, action, and observation spaces, the representation condition can be checked from the specified candidate models. Fix a cut $2\le h\le H$ and define
\begin{equation}\label{vc:certificate-support}
\begin{aligned}
 \mathcal Z_h
 &=\{(\tau,f):P_M(\tau_{h-1}=\tau,F_h=f)>0
                    \text{ for some }M\in\mathcal M\},\\
 \mathcal R_h
 &=\{(M,s,\tau):M\in\mathcal M,\ 
                    P_M(S_h=s,\tau_{h-1}=\tau)>0\}.
\end{aligned}
\end{equation}
Use $\mathcal R_h$ as the row index set and $\mathcal Z_h$ as the column index set of a matrix $\mathsf A_h$. For each candidate $N$, define a vector $v_h^N$ and diagonal matrices $D_h^M$ by
\begin{equation}\label{vc:certificate-matrices}
\begin{aligned}
 (\mathsf A_h)_{(M,s,\tau),(\tau',f)}
   &=\mathbf1\{\tau=\tau'\}
       P_M(F_h=f\mid S_h=s,\tau_{h-1}=\tau),\\
 (v_h^N)_{(M,s,\tau)}&=V_h^N(s),\\
 (D_h^M)_{(\tau,f),(\tau,f)}
   &=P_M(\tau_{h-1}=\tau,F_h=f).
\end{aligned}
\end{equation}
The matrix is block diagonal by history. Within each block, it stacks the conditional future laws across models and reachable states.

\begin{proposition}[A finite-dimensional representation certificate]
\label{prop:value-certificate}
Common functions satisfying~\eqref{vc:calibration} exist if and only if
$v_h^N\in\operatorname{col}(\mathsf A_h)$ for every $N\in\mathcal M$ and $2\le h\le H$.
The smallest global cost in~\eqref{eq:value-cost} is the value of the convex quadratically constrained program
\begin{equation}\label{vc:certificate-qcqp}
\begin{aligned}
 C_V^\star=\min_{\{x_h^N\},\,t}\quad &t\\
 \text{subject to}\quad
 &\mathsf A_hx_h^N=v_h^N
                  &&\text{for every }N,h,\\
 &(x_h^N)^\top D_h^M x_h^N\le t
                  &&\text{for every }M,N,h.
\end{aligned}
\end{equation}
The value is $+\infty$ if the linear constraints are infeasible; otherwise the minimum is attained.
Equivalently, let $\rho_\star$ be the minimum of $\rho\ge0$ subject to
\begin{equation}\label{vc:certificate-socp}
 \mathsf A_hx_h^N=v_h^N,\qquad
 \|(D_h^M)^{1/2}x_h^N\|_2\le\rho
 \quad\text{for every }M,N,h.
\end{equation}
This is a second-order cone program, and $C_V^\star=\rho_\star^2$.
\end{proposition}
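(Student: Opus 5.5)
The plan is to rewrite the existence of common calibrated functions $g_h^N$ as a family of linear equations in the values of $g_h^N$ on observable support points. Then the moment constraint becomes a convex quadratic constraint on the same unknowns.

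Fix $h$ and $N$, and regard a candidate function $g_h^N(\tau,f)$ as a vector $x\in\mathbb R^{\mathcal Z_h}$. Values of $g$ off $\mathcal Z_h$ are irrelevant: they carry zero probability under every model, so they affect neither calibration nor any moment. For a conditioning event $(M,s,\tau)\in\mathcal R_h$, the conditional law of $F_h$ given $(S_h=s,\tau_{h-1}=\tau)$ is supported on $\{f:(\tau,f)\in\mathcal Z_h\}$. Hence
\[
\E_M[g\mid S_h=s,\tau_{h-1}=\tau]=\sum_{f}P_M(F_h=f\mid S_h=s,\tau_{h-1}=\tau)\,x_{(\tau,f)}=(\mathsf A_hx)_{(M,s,\tau)}.
\]
Here the indicator $\one\{\tau=\tau'\}$ in $\mathsf A_h$ enforces that the history argument of $g$ is the conditioned history. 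The calibration requirement~\eqref{vc:calibration}, imposed only on positive-probability events, is therefore exactly $\mathsf A_hx=v_h^N$. The choices for different $(N,h)$ are decoupled, so existence holds if and only if $v_h^N\in\operatorname{col}(\mathsf A_h)$ for every $N$ and $h$.

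For the cost, $\E_M[(g_h^N)^2]=\sum_{(\tau,f)}P_M(\tau_{h-1}=\tau,F_h=f)\,x_{(\tau,f)}^2=x^\top D_h^Mx$. Thus the global cost~\eqref{eq:value-cost} of a feasible family is $\max_{M,N,h}(x_h^N)^\top D_h^Mx_h^N$. Minimizing this over feasible families is the epigraph program~\eqref{vc:certificate-qcqp}. Its objective is linear, its equality constraints are affine, and since $D_h^M\succeq0$ its quadratic constraints are convex. If the linear system is infeasible, the value is $+\infty$ by convention.

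Attainment is the one step needing care, because the feasible set is an unbounded affine space and $D_h^M$ may be singular. On $\mathcal Z_h$, however, every coordinate has $P_M(\tau,f)>0$ for some $M$. Hence $\sum_M D_h^M$ is positive definite on $\mathbb R^{\mathcal Z_h}$, and the map $x\mapsto\max_M x^\top D_h^Mx$ is coercive. Its sublevel sets intersected with the affine feasible set are compact and nonempty once $t$ is large. A continuous function attains its minimum there, and finitely many $(N,h)$ are handled jointly because $t$ bounds all of them.

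For the second-order cone form, note that $(x_h^N)^\top D_h^Mx_h^N=\|(D_h^M)^{1/2}x_h^N\|_2^2$, and the map $\rho\mapsto\rho^2$ is monotone on $\rho\ge0$. So $(x,t)$ is feasible for~\eqref{vc:certificate-qcqp} if and only if $(x,\sqrt t)$ is feasible for~\eqref{vc:certificate-socp}. This gives $C_V^\star=\rho_\star^2$, and~\eqref{vc:certificate-socp} is a second-order cone program with linear equalities.
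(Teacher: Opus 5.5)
Your proposal is correct and follows essentially the same route as the paper. You vectorize $g_h^N$ on $\mathcal Z_h$ so that calibration becomes $\mathsf A_h x_h^N=v_h^N$ and the moments become $x^\top D_h^M x$; attainment then follows from positivity of $\sum_M D_h^M$ on $\mathcal Z_h$, and the second-order cone form from substituting $t=\rho^2$ (your explicit check that the conditional future laws are supported inside $\mathcal Z_h$ is a detail the paper leaves implicit).
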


\begin{proof}
Set $x_h^N(\tau,f)=g_h^N(\tau,f)$ on $\mathcal Z_h$. The conditional mean identity is exactly $\mathsf A_hx_h^N=v_h^N$, and
$\E_M[(g_h^N)^2]=(x_h^N)^\top D_h^M x_h^N$.
Conversely, a feasible vector defines an observable function on this union of supports; extend it arbitrarily outside the union. This proves both feasibility and the cost formula.

Each $D_h^M$ is positive semidefinite. Moreover, every diagonal entry of $\sum_M D_h^M$ is positive by the definition of $\mathcal Z_h$. A finite upper bound on all model-specific moments therefore bounds every coordinate of every $x_h^N$. If feasible, the program has a nonempty compact sublevel set and a continuous objective, so it attains its minimum. Replacing $t$ by $\rho^2$ gives the equivalent norm constraints.
\end{proof}

Every entry is determined by the candidate class and the known logger and target.
The matrices $D_h^M$ encode the global second moments in~\eqref{vc:moment}.
Enumerating histories and futures can require space exponential in $H$.
The program checks the sufficient representation condition, not a necessary
condition for OPE.

\section{A short probe with polynomial representation cost}\label{app:short_probe}

Fix any odd $H\ge3$. The first state and observation are fixed; at every later layer the physical state is a bit $B_h$. Even layers emit a blank, and odd layers $h\ge3$ reveal the bit. There are two actions, $\mathtt{probe}$ and $\mathtt{reset}$. The first action produces a bit with mean $1/4$ or $3/4$, respectively, and the logger chooses these actions equally often. At later preterminal steps, probe preserves the bit. Each entry of the model-dependent reset kernel belongs to $[1/4,3/4]$. Any finite collection of such kernels defines a candidate class. The common known logger can depend on the entire observed history and current observation, with both action probabilities in $[1/4,3/4]$. The target is any common memoryless policy, and all candidates share known observed reward functions in $[0,1]$.

For a candidate $N$, take $g_h^N=V_h^N(O_h)$ at visible layers and, at even layers, take
\begin{equation}\label{sp:bridge}
 g_h^N=
 \frac{\one\{A_h=\mathtt{probe}\}}
 {\pi_b(\mathtt{probe}\mid\tau_{h-1},\bot_h)}
 V_h^N(O_{h+1}).
\end{equation}
Observation time labels are suppressed inside $V_h^N$: the next observation reads the current bit, whose value is evaluated at time $h$. Fixing $(B_h=b,\tau_{h-1})$ fixes the probe probability. If probe is selected, $O_{h+1}=b$; otherwise the numerator is zero. Hence~\eqref{sp:bridge} has conditional mean $V_h^N(b)$ under every generating model. Its conditional second moment is at most $4\|V_h^N\|_\infty^2$. Since $h\ge2$, this gives $C_V\le4(H-1)^2$ and $C_A\le4$.

For completeness, history coverage also holds uniformly in the logger's memory and in the unknown reset kernels. With $v(p)=(1-p,p)^\top$, the second-layer history Gram is
\begin{equation}\label{sp:gram2}
 \Sigma_{H,2}=\tfrac12v(1/4)v(1/4)^\top+
 \tfrac12v(3/4)v(3/4)^\top
 =\frac1{16}\begin{pmatrix}5&3\\3&5\end{pmatrix}.
\end{equation}
Its minimum eigenvalue is $1/8$. At layer $3$, retain histories with a probe at layer $2$. Since $O_2$ is blank, this action does not select the hidden bit within a fixed first-action history. Thus $\Sigma_{H,3}\succeq\frac14\Sigma_{H,2}$, giving minimum eigenvalue at least $1/32$.

At every $t\ge3$, either bit has occupancy at least $1/16$: the preceding reset has conditional probability at least $1/4$, and its transition probability to either bit is at least $1/4$. At an even layer $h\ge4$, the preceding visible observation followed by a probe fixes the current bit. Each of these two events has probability at least $(1/16)(1/4)=1/64$, so $\Sigma_{H,h}\succeq I_2/64$. At an odd layer $h\ge5$, retain a visible bit at $h-2$ followed by two probes. Each such event has probability at least $(1/16)(1/4)^2=1/256$, giving $\Sigma_{H,h}\succeq I_2/256$. These bounds use successive conditioning and do not assume independent actions. They include the terminal layer, so $C_{\rm hist}\le256$ for all odd horizons.

Consequently Theorem~\ref{thm:value-calibration} gives a sufficient sample bound of
$O(H^4\epsilon^{-2}\log(|\mathcal M|/\delta))$.
Every even layer has current emission matrix $(1\;1)$ and two reachable states, so its current observation alone cannot reveal the bit. The target value also need not be known: choose two models whose later reset kernels produce $\Ber(1/4)$ and $\Ber(3/4)$, respectively. A target that always resets, with only terminal reward $B_H$, has values $1/4$ and $3/4$. A short preserving probe supplies the information needed to evaluate this target from the logs.

\end{document}